\documentclass[11pt,letterpaper]{article}

\usepackage[T1]{fontenc}
\usepackage[utf8]{inputenc}
\usepackage{amsmath,amssymb,amsfonts,amsthm,bm,mathtools}
\usepackage{booktabs,longtable,array,multirow,graphicx,enumitem,microtype,url,float,placeins}
\usepackage{geometry}
\usepackage{natbib}
\usepackage{authblk}
\usepackage{algorithm}
\usepackage{algorithmic}
\usepackage[labelfont=bf,labelsep=colon]{caption}
\usepackage{hyperref}

\hypersetup{colorlinks=true,linkcolor=blue,citecolor=blue,urlcolor=blue}

\newcommand{\E}{\mathbb{E}}
\newcommand{\Pp}{\mathbb{P}}
\newcommand{\R}{\mathbb{R}}
\newcommand{\Pn}{\mathbb{P}_n}

\newcommand{\ind}{\mathbf{1}}
\newcommand{\Acal}{\mathcal{A}}
\newcommand{\Hcal}{\mathcal{H}}

\newcommand{\Qcal}{\mathcal{Q}}
\newcommand{\Gcal}{\mathcal{G}}

\newcommand{\Ucal}{\mathcal{U}}
\newcommand{\norm}[1]{\lVert #1\rVert}

\DeclareMathOperator{\logit}{logit}
\DeclareMathOperator{\Var}{Var}
\DeclareMathOperator*{\argmin}{arg\,min}

\theoremstyle{definition}
\newtheorem{assumption}{Assumption}
\newtheorem{drfdldefinition}{Definition}
\theoremstyle{plain}
\newtheorem{drfdllemma}{Lemma}
\newtheorem{drfdlproposition}{Proposition}
\newtheorem{drfdltheorem}{Theorem}
\newtheorem{drfdlcorollary}{Corollary}
\title{\textbf{Doubly Robust Functional Representation Learning for\\ Longitudinal Causal Inference with Irregular Histories}}

\author[1]{Mengfei Ran}
\author[1]{Yifeng Shen}
\author[2]{Ruijie Guan\footnote{Corresponding author. \url{guanrj@aircas.ac.cn}}}
\affil[1]{Academy of Life and Natural Sciences, Xi'an Jiaotong-Liverpool University, Suzhou, China}
\affil[2]{Aerospace Information Research Institute, Chinese Academy of Sciences, Beijing, China}
\date{}

\begin{document}

\maketitle

\begin{abstract}
Longitudinal causal studies often record histories as irregular functional fragments: laboratory values, physiologic signals, sensor streams, and image-derived summaries measured at unequal and informative times.  Standard doubly robust estimators usually require scalar summaries, whereas sequence learners optimize prediction losses that need not stabilize the efficient influence function. We propose Doubly Robust Functional Representation Learning (DR-FRL), a cross-fitted workflow that turns irregular histories into estimand-targeted states for observed-history regimes.  Functional and temporal encoders map point clouds and prior histories into states; nuisance heads estimate outcome, treatment, and censoring functions; and EIF-targeted validation, calibration, overlap, tail, and ablation diagnostics assess whether the state supports the estimating equation. If the selected state preserves the nuisance information needed by the EIF, representation error enters the same second-order product remainder as ordinary nuisance error, and the mean estimator is asymptotically linear under explicit rate, overlap, calibration, and stability conditions. Catoni aggregation is treated separately as a bounded-influence point estimator, not a replacement for Wald inference.  Simulations show gains when functional confounding is high-dimensional, measurement is informative, support is weak, or pseudo-outcomes are heavy-tailed.  A VitalDB audit shows that DR-FRL can use irregular laboratory point clouds and deliver a useful negative finding: for this ICU-disposition endpoint, scalar laboratory summaries already carry much endpoint-relevant information.
\end{abstract}

\vspace{0.1in}
\noindent\textbf{Keywords:} causal inference; longitudinal data; functional data analysis; representation learning; double robustness; efficient influence functions; irregular sampling.

\section{Introduction}

Consider a perioperative study in which clinicians decide whether and when to administer vasopressor support while a patient's risk state evolves.  Before the decision, the record may contain a few laboratory values measured days earlier, multiple vital-sign summaries recorded at unequal times, surgery descriptors, prior medications, and missingness patterns that themselves reflect clinical concern.  A scalar baseline table can hide much of this structure, but a black-box prediction model does not by itself answer the causal question of how an intervention regime would change postoperative risk.  The statistical problem is to use irregular functional histories without losing the estimating-equation structure that makes doubly robust causal inference interpretable.

The classical longitudinal causal framework is clear once a suitable history has been specified.  The g-formula of \cite{robins1986new} and marginal structural models \citep{robins2000marginal} identify effects under sequential exchangeability, consistency, and positivity.  Doubly robust and targeted-learning estimators combine outcome and mechanism models so that first-order bias is reduced when one nuisance component is well estimated \citep{bang2005doubly, vanderlaan2006targeted}.  \cite{bickel1993efficient} developed the semiparametric efficiency theory that clarifies the role of orthogonal scores, and modern cross-fitting makes it possible to use flexible nuisance learners without classical Donsker restrictions \citep{chernozhukov2018dml}.  These tools are now routine in longitudinal causal analysis, but they usually start after the analyst has already compressed the observed history into a finite vector.

That compression step is increasingly decisive.  Electronic health records, perioperative monitoring systems, wearable devices, imaging pipelines, and mobile-health platforms often record evolving histories as noisy, sparse, and irregular functional measurements.  Functional data analysis provides a rich language for such objects; \cite{ramsay2005functional} give the foundational treatment, and \cite{yao2005functional} developed a widely used framework for sparse longitudinal functional data.  Functional principal components, spline summaries, and functional linear models can be highly effective for regression and prediction \citep{cardot1999functional, james2000principal}, while nonparametric functional methods offer alternatives when smoothness assumptions are credible \citep{ferraty2006nonparametric}.  In causal inference, however, the relevant dimension of a curve is not its marginal variance or reconstruction error, but its role in the efficient influence function: a low-variance feature can still drive treatment, censoring, or the outcome regression.  Existing functional causal work has made important progress, but much of it focuses on functional treatments or fixed functional summaries rather than longitudinal treatment-confounder feedback with learned states \citep{ciarleglio2015treatment}.

Deep longitudinal models address another part of the problem.  \cite{lim2018rmsn} and \cite{bica2020crn} showed that learned sequence representations can improve counterfactual prediction, and later recurrent or attention-based architectures further improved dynamic treatment-response modeling \citep{li2021gnet, melnychuk2022causaltransformer}.  The attention mechanism of \cite{vaswani2017attention} is a powerful representation tool, but an attention layer is not an estimating equation.  Prediction-oriented sequence models can be useful comparators, yet their objectives do not directly measure whether a learned state stabilizes inverse-probability weights, calibrates treatment and censoring probabilities, or makes the efficient influence function nearly mean zero.  This leaves a gap between flexible representation learning and semiparametric causal inference.

This paper fills that gap for observed-history causal effects with irregular functional covariates.  At each decision time, the analyst observes scalar variables and a point cloud such as laboratory values, blood-pressure measurements, sensor signals, or image-derived summaries.  We do not require these point clouds to be reconstructed on a regular grid.  Measurement times, counts, marker identities, and noisy values are part of the observed pre-treatment history.  The estimand is deliberately an observed-history value under exchangeability conditional on the actually observed point clouds and sampling patterns.  A latent full-functional value would require additional coarsening or measurement assumptions, so it is not folded into the main identification claim.

We propose Doubly Robust Functional Representation Learning (DR-FRL).  A permutation-invariant or continuous-time encoder maps each irregular point cloud to a visit-level embedding; a temporal encoder maps previous embeddings, scalar covariates, treatments, follow-up indicators, and timing features into a learned state; nuisance heads estimate outcome regressions, treatment probabilities, and censoring probabilities; and the fitted nuisance functions enter a sequentially doubly robust recursion.  The representation is not judged by reconstruction or prediction alone.  It is selected and audited by its ability to support the efficient influence function through balance, calibration, overlap, effective sample size, and pseudo-outcome stability.

The main theoretical result is intentionally conditional.  If the learned state preserves the nuisance information needed by the efficient influence function, representation error enters through the same second-order product remainder as ordinary nuisance error.  Under explicit rate, calibration, overlap, and stability conditions, the mean-aggregation estimator is asymptotically linear and admits Wald inference.  Robust Catoni aggregation is kept separate: it is a bounded-influence point-estimation device for unstable pseudo-outcomes, not a source of ordinary Wald intervals.

The contributions are fourfold.  First, we define an observed-history target for irregular functional measurements and keep it separate from latent full-functional targets that require extra measurement assumptions.  Second, we place learned functional states inside a sequential doubly robust estimating equation rather than treating them as preprocessing.  Third, we derive an expansion in which representation error, nuisance error, and clipping error are separated; the mean estimator has Wald inference under explicit product-rate, overlap, and stability conditions.  Fourth, simulations and a VitalDB audit show where functional states, robust aggregation, and overlap diagnostics add value, and where scalar summaries already suffice.

The rest of the paper is organized as follows.  Section~\ref{sec:framework} defines the observed-history estimand, identification assumptions, and efficient influence function.  Section~\ref{sec:method} describes the learned state, nuisance models, targeted validation, and diagnostic output.  Section~\ref{sec:algorithm} gives the cross-fitted implementation.  Section~\ref{sec:theory} states the main large-sample results and their scope.  Section~\ref{sec:simulation} reports reproducible simulations designed to stress representation, overlap, and tail behavior.  Section~\ref{sec:realdata} applies the method as a diagnostic audit in VitalDB.  Section~\ref{sec:discussion} summarizes what the method adds and where it remains limited.

\section{Framework and Identification}\label{sec:framework}

\subsection{Observed longitudinal functional data}

We observe independent units $O_1,\ldots,O_n\sim P_0$.  Each unit is followed over decision times $t=0,\ldots,T$.  Let $B\in\mathcal B$ denote baseline covariates.  At time $t$, before treatment assignment, we observe scalar covariates $L_t$ and a functional point cloud
\[
O_t^X=\{(U_{t\ell},\widetilde X_{t\ell}):\ell=1,\ldots,M_t\},\qquad U_{t\ell}\in\Ucal,
\]
where $M_t$ is the number of measurements and $\widetilde X_{t\ell}$ is a noisy evaluation of an underlying trajectory or signal.  The case $M_t=0$ is allowed.  Treatment $A_t\in\Acal$ is assigned after $(B,L_t,O_t^X)$ is available.  Follow-up indicator $R_t\in\{0,1\}$ is then recorded, with $R_t=1$ meaning that the unit remains under follow-up after time $t$.  Define $\bar R_t=\prod_{s=0}^t R_s$ and $\bar A_t=(A_0,\ldots,A_t)$.

The pre-treatment observed history is
\[
H_t=(B,O_0^X,L_0,A_0,R_0,\ldots,O_{t-1}^X,L_{t-1},A_{t-1},R_{t-1},O_t^X,L_t),
\]
with $H_0=(B,O_0^X,L_0)$.  Only variables temporally prior to $A_t$ are included in $H_t$.  This ordering is essential in applications such as critical care, where a physiological measurement after a treatment change may be a mediator rather than a confounder.

A nonparametric observed-data law can be factorized as
\begin{align}
 p_0(o)=p_0(b)
 \prod_{t=0}^T &p_0(m_t,u_{t1:m_t},\widetilde x_{t1:m_t},l_t\mid h_t^{-})
 p_0(a_t\mid h_t,\bar r_{t-1}=1)\nonumber\\
 &\times p_0(r_t\mid h_t,a_t,\bar r_{t-1}=1)
 p_0(y\mid h_{T+1},\bar a_T,\bar r_T=1),
 \label{eq:factorization}
\end{align}
where $h_t^{-}$ is the history just before the new pre-treatment measurements at time $t$.  The first factor includes the functional observation process.  Equation~\eqref{eq:factorization} clarifies that the functional observation process, treatment mechanism, censoring mechanism, and outcome regression play distinct roles, following the same temporal ordering logic used by \cite{robins1986new} and in the longitudinal causal framework of \cite{hernan2020causal}.

\subsection{Regimes and nuisance functions}

A deterministic dynamic treatment regime is $g=\{g_t\}_{t=0}^T$ with $g_t:\Hcal_t\to\Acal$.  A static regime $\bar a=(a_0,\ldots,a_T)$ is the special case $g_t(h)=a_t$.  Such regimes are standard objects in sequential decision and dynamic-treatment-regime analyses; see, for example, \cite{robins2008estimation}.  The main theorems are written for deterministic static or dynamic regimes because this is the setting used in the longitudinal simulations.  Stochastic interventions $q=\{q_t(\cdot\mid h_t)\}_{t=0}^T$ are handled by the same recursion after replacing treatment indicators by regime density ratios; Supplementary Section S2 gives the corresponding notation.  We write $Y^g$ for the deterministic-regime potential outcome and target the value
\[
\theta(g)=\E(Y^g).
\]

For deterministic $g$, define the treatment and incremental censoring mechanisms
\begin{align*}
\pi_t^g(h)&=P_0\{A_t=g_t(h)\mid H_t=h,\bar R_{t-1}=1\},\\
\rho_t^g(h)&=P_0\{R_t=1\mid H_t=h,A_t=g_t(h),\bar R_{t-1}=1\}.
\end{align*}
The censoring model is incremental and conditional on current treatment.  This replaces the ambiguous cumulative notation $P(C_t=1\mid H_t)$ and is needed whenever follow-up can depend on the current treatment.

Let
\begin{equation}
G_t^g(H_t)=\prod_{s=0}^t \pi_s^g(H_s)\rho_s^g(H_s),\qquad
\mathcal H_t^g(O)=\frac{\ind\{\bar A_t=\bar g_t(\bar H_t),\bar R_t=1\}}{G_t^g(H_t)}.
\label{eq:cumulative-clever}
\end{equation}
Here $\mathcal H_t^g$ is the cumulative clever covariate.  Also define the incremental clever covariate
\begin{equation}
\omega_t^g(O)=\frac{\ind\{A_t=g_t(H_t),R_t=1\}}{\pi_t^g(H_t)\rho_t^g(H_t)},
\end{equation}
so that $\mathcal H_t^g=\prod_{s=0}^t\omega_s^g$.

To remove a possible ambiguity caused by censoring, $H_{t+1}$ in the next display means the next \emph{pre-treatment} history generated after $(H_t,A_t,R_t)$ on the at-risk path.  When $R_t=0$, the next pre-treatment history is not needed for the estimating equation; formally we extend the sample space with a cemetery value $\partial$ and set $H_{t+1}=\partial$ and $Q_{t+1}^g(\partial)$ arbitrary.  The factor $\ind(R_t=1)$ in $\omega_t^g$ and $\mathcal H_t^g$ makes that arbitrary value irrelevant.  Consequently, every residual
\[
M_t^{g,\star}=Q_{t+1}^{g,\star}(H_{t+1})-Q_t^{g,\star}(H_t)
\]
has conditional mean zero under the transition law given $(H_t,A_t=g_t(H_t),R_t=1,\bar R_{t-1}=1)$, not under the marginal law of $H_{t+1}$.

Set $Q_{T+1}^g(H_{T+1})=Y$.  For $t=T,\ldots,0$, define
\begin{equation}
Q_t^g(h)=\E_0\{Q_{t+1}^g(H_{t+1})\mid H_t=h,A_t=g_t(h),R_t=1,\bar R_{t-1}=1\}.
\label{eq:q-recursion}
\end{equation}
For a static regime, replace $g_t(h)$ by $a_t$.  Under the identification conditions below,
\begin{equation}
\theta(g)=\E_0\{Q_0^g(H_0)\}.
\label{eq:gformula}
\end{equation}

\subsection{Identification assumptions}

The observed-history target relies on the following assumptions, stated in potential-outcome language.  This notation follows \cite{rubin2005causal}; the same identification logic can also be expressed using graphical causal models \citep{pearl2009causality}.

\begin{assumption}[Consistency and no interference]\label{ass:consistency}
If a unit follows regime $g$ through time $T$ and remains observable according to the target follow-up intervention, then the observed outcome equals the corresponding potential outcome.  Interventions are well defined and there is no interference.
\end{assumption}

\begin{assumption}[Sequential exchangeability]\label{ass:exchangeability}
For each $t$ and regime $g$, let
\[
\mathcal F_t^g=(Y^g,H_{t+1}^g,\ldots,H_{T+1}^g)
\]
denote the counterfactual future process that would be generated after time $t$ under regime $g$, including the terminal outcome and future pre-treatment histories.  Sequential exchangeability requires
\[
\mathcal F_t^g \perp A_t\mid H_t,\bar R_{t-1}=1,\qquad
\mathcal F_t^g \perp R_t\mid H_t,A_t,\bar R_{t-1}=1.
\]
Thus treatment and follow-up at time $t$ are independent not only of the terminal counterfactual outcome but also of the future covariate and functional-history process needed to define later regime decisions.  For static regimes this reduces to the usual sequential ignorability condition for the outcome process.  The second statement treats censoring as loss of outcome follow-up or loss of required post-time-$t$ data.  Death is not censoring when death is the outcome.
\end{assumption}

\begin{assumption}[Positivity]\label{ass:positivity}
There exists $c>0$ such that, almost surely over histories relevant under $g$,
\[
\pi_t^g(H_t)\ge c,\qquad \rho_t^g(H_t)\ge c,\qquad t=0,\ldots,T.
\]
For stochastic-regime extensions, the density ratio $q_t(A_t\mid H_t)/\pi_t(A_t\mid H_t)$ is bounded on the support of $q_t$.
\end{assumption}

\begin{assumption}[Observed functional history]\label{ass:observed-history}
The observed point clouds, measurement counts and locations, scalar covariates, and missingness or measurement-intensity indicators needed to control confounding are included in $H_t$.  No reconstruction of a full latent function is required for the observed-history target.
\end{assumption}

If the scientific target instead conditions on a latent full trajectory $X_t(\cdot)$, the present observed-data likelihood is not enough without an additional bridge between latent and observed histories.  We therefore treat the observed-history value as the main estimand.  A latent full-functional value would require a separately specified measurement or coarsening model, for example a known design mechanism, validation data, replicate measurements, or a sensitivity model linking $H_t^{\mathrm{full}}$ to the observed point cloud $H_t^{\mathrm{obs}}$.  Because the present estimator does not include an additional inverse-coarsening or measurement-model step, no theorem below claims identification or efficiency for an unrestricted latent full-functional target.

\subsection{EIF and doubly robust recursion}

Under Assumptions~\ref{ass:consistency}-\ref{ass:observed-history}, the nonparametric efficient influence function for $\theta(g)$ is
\begin{equation}
\phi_g^\star(O)=
\sum_{t=0}^T \mathcal H_t^{g,\star}(O)
\{Q_{t+1}^{g,\star}(H_{t+1})-Q_t^{g,\star}(H_t)\}
+Q_0^{g,\star}(H_0)-\theta(g).
\label{eq:eif}
\end{equation}
This expression includes both treatment and censoring through the cumulative clever covariate in~\eqref{eq:cumulative-clever}.

Given working nuisance functions $\eta=(Q,\pi,\rho)$, define the backward pseudo-outcome recursion
\begin{align}
\widetilde Y_{T+1}^{g}(\eta)&=Y,\nonumber\\
\widetilde Y_t^{g}(\eta)&=Q_t^g(H_t)+\omega_t^g(\eta)
\{\widetilde Y_{t+1}^{g}(\eta)-Q_t^g(H_t)\},\qquad t=T,\ldots,0.
\label{eq:pseudo-recursion}
\end{align}
Algebraic telescoping gives
\begin{equation}
\widetilde Y_0^{g}(\eta)-\theta=
Q_0^g(H_0)-\theta+
\sum_{t=0}^T \mathcal H_t^g(\eta)\{Q_{t+1}^g(H_{t+1})-Q_t^g(H_t)\},
\label{eq:telescoping}
\end{equation}
with $Q_{T+1}^g=Y$.  Thus the recursion based on single-step weights and the EIF expansion based on cumulative weights are the same object written at different stages.
For later notation, write $D_g(O;\eta,\theta)$ for the right-hand side of \eqref{eq:telescoping}.  At the truth, $D_g(O;\eta^\star,\theta(g))=\phi_g^\star(O)$.

\section{Estimation and Diagnostics}\label{sec:method}

DR-FRL has three parts.  First, functional and temporal encoders turn the observed history into a state $S_t$.  Second, nuisance heads estimate the outcome, treatment, and censoring functions used by the EIF recursion.  Third, calibration, overlap, effective sample size, and pseudo-outcome-tail diagnostics show whether the fitted estimating equation is empirically stable.  The goal is not curve reconstruction; it is a state that preserves the nuisance information needed for the causal parameter.

\begin{figure}[t]
\centering
\small
\setlength{\fboxsep}{5pt}
\begin{tabular}{c@{\hspace{0.5em}}c@{\hspace{0.5em}}c@{\hspace{0.5em}}c@{\hspace{0.5em}}c}
\fbox{\parbox{0.22\textwidth}{\centering Observed histories\\point clouds, scalars, timing}}
& $\longrightarrow$ &
\fbox{\parbox{0.22\textwidth}{\centering Functional and temporal encoders\\learned states $S_t$}}
& $\longrightarrow$ &
\fbox{\parbox{0.22\textwidth}{\centering Nuisance heads\\$Q_t,\pi_t,\rho_t$}}\\[0.8em]
\multicolumn{5}{c}{$\Downarrow$}\\[-0.1em]
\fbox{\parbox{0.22\textwidth}{\centering EIF-targeted validation\\calibration and optional targeting}}
& $\longrightarrow$ &
\fbox{\parbox{0.22\textwidth}{\centering Sequential DR recursion\\out-of-fold pseudo-outcomes}}
& $\longrightarrow$ &
\fbox{\parbox{0.22\textwidth}{\centering Estimate and audit\\mean, Catoni, overlap, tails}}
\end{tabular}
\caption{DR-FRL workflow.  Functional histories are encoded into states, used in nuisance models, evaluated through cross-fitted pseudo-outcomes, and audited by calibration, overlap, effective sample size, and tail diagnostics.}
\label{fig:workflow}
\end{figure}

\subsection{State representations}

The functional encoder maps an irregular point cloud to a visit-level embedding,
\[
\xi_t=e_\phi^X(O_t^X,M_t)\in\R^{d_x}.
\]
The map is permutation invariant in the measurement pairs.  Candidate encoders include Deep Sets and Set Transformers for unordered point clouds \citep{zaheer2017deep, lee2019set}.  Spline or wavelet summaries provide simpler bases.  Continuous-time neural models handle irregular timing \citep{chen2018neural, rubanova2019latent}, while GRU-D and neural CDEs target missingness-aware or continuous-time sequence modeling \citep{che2018grud, kidger2020neuralcde}.  The temporal encoder maps the full pre-treatment history to a state
\[
S_t=e_\phi(H_t)=e_\phi^H(B,L_{0:t},\xi_{0:t},A_{0:t-1},R_{0:t-1},\Delta_{0:t},M_{0:t})\in\R^{d_s}.
\]
Here $\Delta_{0:t}$ contains visit-time gaps or within-visit timing features.  We use $S_t$ for the learned state to avoid conflict with scalar covariates.

\begin{drfdldefinition}[Causal sufficient representation]\label{def:suff}
A representation $S_t=e_\phi(H_t)$ is causally sufficient for $\theta(g)$ if there exist measurable functions $q_t^g$, $p_t^g$, and $r_t^g$ such that, almost surely,
\[
Q_t^{g,\star}(H_t)=q_t^g(S_t),\qquad
\pi_t^{g,\star}(H_t)=p_t^g(S_t),\qquad
\rho_t^{g,\star}(H_t)=r_t^g(S_t),
\]
for every $t=0,\ldots,T$.
\end{drfdldefinition}

For a representation class indexed by $\phi$ and nuisance model classes $\Qcal_t$ and $\Gcal_t$, define approximation errors
\begin{align*}
 b_{Q,t}(\phi)&=\inf_{q\in\Qcal_t}\norm{Q_t^{g,\star}(H_t)-q(e_\phi(H_t))}_{L_2(P_0)},\\
 b_{\Gamma,t}(\phi)&=\inf_{\gamma\in\Gcal_t}\norm{\Gamma_t^{g,\star}(H_t)-\gamma(e_\phi(H_t))}_{L_2(P_0)}.
\end{align*}
where $\Gamma_t^g=(\pi_t^g,\rho_t^g)$ or an equivalent cumulative-mechanism parameterization.  These quantities make explicit what is lost by replacing $H_t$ with $S_t$.

\subsection{Nuisance heads and calibration}

For each $t$, DR-FRL estimates
\[
\widehat Q_t^g(H_t)=\widehat q_t(S_t),\qquad
\widehat\pi_t^g(H_t)=\widehat p_t(S_t),\qquad
\widehat\rho_t^g(H_t)=\widehat r_t(S_t,g_t(H_t)).
\]
The censoring head is conditional on current treatment.  Treatment and censoring probabilities are calibrated within each training fold using temperature scaling, Platt scaling, or isotonic regression; \cite{platt1999probabilistic} gives the classical score-calibration construction and \cite{guo2017calibration} studies calibration behavior in modern neural networks.  When nonparametric calibration is preferred, the same diagnostic role can be served by score-to-probability transformations such as those studied by \cite{zadrozny2002transforming}.  For each time point we report Brier score, expected calibration error, minimum estimated probability, cumulative weight quantiles, and effective sample size
\[
\mathrm{ESS}_t=\frac{\left(\sum_i \widehat{\mathcal H}_{t,i}^g\right)^2}{\sum_i(\widehat{\mathcal H}_{t,i}^g)^2}.
\]

\subsection{EIF-targeted learning}\label{subsec:eifloss}

Prediction accuracy is not enough: a state that predicts outcomes well may still give unstable weights or a biased EIF mean.  Following the targeted-learning and orthogonal-score logic of \cite{vanderlaan2011targeted} and \cite{chernozhukov2018dml}, DR-FRL selects representations by how well they support the estimating equation.  Within each outer training fold, candidates are fitted on an inner fitting set, scored on an inner selection set, and evaluated only once on the outer held-out fold.  The validation objective is
\begin{align}
\mathcal L(\phi,\eta)=
&\sum_{t=0}^T\{\mathcal L_{Q,t}(\widehat Q_t)+\lambda_A\mathcal L_{A,t}(\widehat\pi_t)+\lambda_R\mathcal L_{R,t}(\widehat\rho_t)\}\nonumber\\
&+\lambda_{\mathrm{EIF}}\{\mathbb{P}_{n,\mathrm{val}}\widehat\phi_g(\phi,\eta)\}^2
+\lambda_{\mathrm{var}}\mathbb{P}_{n,\mathrm{val}}\{\widehat\phi_g(\phi,\eta)^2\}
+\lambda_{\mathrm{cal}}\mathcal L_{\mathrm{cal}}(\widehat\pi,\widehat\rho).
\label{eq:targeted-loss}
\end{align}
The first line fits the nuisance functions.  The second line penalizes EIF imbalance, volatile pseudo-outcomes, and poor mechanism calibration.  This criterion is a tuning rule, not an identification assumption.

Selection therefore does not prove that $b_{Q,t}(\widehat\phi)$ and $b_{\Gamma,t}(\widehat\phi)$ are small.  The asymptotic theorem remains conditional on the product-rate assumption.  The following finite-library result states only what independent validation can justify.

\begin{drfdlproposition}[Finite-library selection]\label{prop:selector}
Fix an outer training fold and a finite library $\Phi_n$ with $|\Phi_n|=M_n$.  For each $\phi\in\Phi_n$, fit nuisance heads on an inner fitting set and compute $\widehat R_{\mathrm{EIF}}(\phi)$ on an independent validation set, using the squared validation EIF mean plus the EIF variance penalty in \eqref{eq:targeted-loss}.  Let $R_{\mathrm{EIF}}(\phi)$ be the corresponding conditional population criterion.  If
\[
\sup_{\phi\in\Phi_n}|\widehat R_{\mathrm{EIF}}(\phi)-R_{\mathrm{EIF}}(\phi)|
=O_{\Pp}\{(\log M_n/n_{\mathrm{val}})^{1/2}\},
\]
and $\widehat\phi$ minimizes $\widehat R_{\mathrm{EIF}}$ up to optimization error $\epsilon_n$, then
\[
R_{\mathrm{EIF}}(\widehat\phi)
\le \inf_{\phi\in\Phi_n}R_{\mathrm{EIF}}(\phi)
+O_{\Pp}\{(\log M_n/n_{\mathrm{val}})^{1/2}\}+\epsilon_n.
\]
\end{drfdlproposition}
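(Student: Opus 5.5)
The argument is the standard oracle inequality for empirical risk minimization over a finite class. All randomness from the inner fitting set is held fixed by conditioning on it, so that each $R_{\mathrm{EIF}}(\phi)$ is a deterministic function of $\phi$. Write
\[
\Delta_n=\sup_{\phi\in\Phi_n}|\widehat R_{\mathrm{EIF}}(\phi)-R_{\mathrm{EIF}}(\phi)|,
\]
which by hypothesis is $O_{\Pp}\{(\log M_n/n_{\mathrm{val}})^{1/2}\}$. The plan is to compare $\widehat\phi$ with a near-minimizer $\phi_n^\circ\in\Phi_n$ of the population criterion. Because $\Phi_n$ is finite, the infimum is attained, so we can take $\phi_n^\circ$ to be an exact minimizer of $R_{\mathrm{EIF}}$.

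The key chain is
\[
R_{\mathrm{EIF}}(\widehat\phi)\le \widehat R_{\mathrm{EIF}}(\widehat\phi)+\Delta_n
\le \widehat R_{\mathrm{EIF}}(\phi_n^\circ)+\epsilon_n+\Delta_n
\le R_{\mathrm{EIF}}(\phi_n^\circ)+\epsilon_n+2\Delta_n .
\]
The three steps are justified as follows.
\begin{enumerate}
\item The first and third inequalities use the uniform deviation bound, which applies at $\widehat\phi$ because $\widehat\phi\in\Phi_n$; no extra conditioning issue arises, since the bound is uniform over the class.
\item The middle inequality is the definition of $\widehat\phi$ as an $\epsilon_n$-approximate minimizer: $\widehat R_{\mathrm{EIF}}(\widehat\phi)\le\inf_{\phi}\widehat R_{\mathrm{EIF}}(\phi)+\epsilon_n$.
\end{enumerate}
Since $2\Delta_n$ is still $O_{\Pp}\{(\log M_n/n_{\mathrm{val}})^{1/2}\}$, the claimed bound follows.

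The one point needing care is making the conditioning explicit. The criterion $R_{\mathrm{EIF}}(\phi)$ is conditional on the outer training fold and the inner fitting set, because the nuisance heads are fitted there. The validation set is independent of those data, so the deviation hypothesis is naturally a conditional statement. I will note that a conditional $O_{\Pp}$ bound implies the unconditional one by dominated convergence applied to the conditional tail probabilities. I will also remark that the hypothesized uniform rate is typically obtained from a union bound plus Bernstein/Hoeffding concentration for the squared EIF mean and the second moment. That route requires bounded pseudo-outcomes, which positivity (Assumption~\ref{ass:positivity}) and clipping provide. It is not needed here, however, since the rate is assumed.

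The main obstacle is therefore not the inequality itself but stating precisely that no claim is made about $b_{Q,t}(\widehat\phi)$ or $b_{\Gamma,t}(\widehat\phi)$. The proposition controls only the validation criterion, consistent with the remark preceding it.
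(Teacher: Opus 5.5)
Your proposal is correct and follows the paper's proof essentially verbatim: the same three-step chain through a population minimizer $\phi^\circ$ of $R_{\mathrm{EIF}}$ over the finite library gives the excess term $\epsilon_n+2\sup_{\phi\in\Phi_n}|\widehat R_{\mathrm{EIF}}(\phi)-R_{\mathrm{EIF}}(\phi)|$. The paper likewise closes with two remarks: the assumed uniform rate would come from Bernstein-type concentration plus a union bound over the $M_n$ candidates, and the inequality implies nothing about $b_{Q,t}(\widehat\phi)$ or $b_{\Gamma,t}(\widehat\phi)$.
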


Thus the selected candidate is competitive for the validation EIF criterion.  It need not have small representation error unless the library contains a representation that is already adequate for the nuisance functions.

\subsection{Targeted fluctuation}

The optional targeted update is performed inside each outer training fold and is applied backward in time.  It follows the targeting logic of TMLE introduced by \cite{vanderlaan2006targeted}, adapted here to the sequential pseudo-outcome recursion implemented in longitudinal targeted learning \citep{lendle2017ltmle}.  At time $t$, assume that a pseudo-target $\widehat Y_{i,t+1}^{g}$ has already been constructed on the inner training data.  Let
\[
\mathcal I_t=\{i:\bar R_{i,t-1}=1\},\qquad
\widehat C_{it}^{g}=\frac{\ind\{\bar A_{i,t}=\bar g_t(\bar H_{i,t}),\bar R_{i,t}=1\}}
{\prod_{s=0}^t\widehat\pi_s^g(H_{is})\widehat\rho_s^g(H_{is})},
\]
with the convention that $\bar R_{i,-1}=1$.  For a bounded or binary pseudo-target, the fluctuation parameter is the intercept coefficient in the weighted offset regression
\[
\widehat\epsilon_t=\operatorname*{argmin}_{\epsilon}
\sum_{i\in\mathcal I_t}
\widehat C_{it}^{g}
\,\ell_{\mathrm{log}}
\left(\widehat Y_{i,t+1}^{g},
\logit\widehat Q_t(H_{it},g_t(H_{it}))+\epsilon\right),
\]
where $\ell_{\mathrm{log}}$ is the Bernoulli negative log-likelihood after the pseudo-target is scaled to $[0,1]$.  The updated target-regime prediction is
\begin{equation}
\logit\widehat Q_t^{\widehat\epsilon}(H_t,g_t(H_t))
=
\logit\widehat Q_t(H_t,g_t(H_t))+\widehat\epsilon_t.
\label{eq:fluctuation}
\end{equation}
For continuous outcomes, use the additive least-squares fluctuation
\[
\widehat\epsilon_t=\frac{
\sum_{i\in\mathcal I_t}\widehat C_{it}^{g}
\{\widehat Y_{i,t+1}^{g}-\widehat Q_t(H_{it},g_t(H_{it}))\}}
{\sum_{i\in\mathcal I_t}\widehat C_{it}^{g}+10^{-8}},\qquad
\widehat Q_t^{\widehat\epsilon}=\widehat Q_t+\widehat\epsilon_t.
\]
Equivalently, one may fit the fluctuation only on the regime-compatible subset with inverse-probability weights; the weighted formulation above writes the same restriction as a full-risk-set regression.  The pseudo-target, offset, weights, and selected fluctuation are all computed inside the outer training sample.  The outer held-out fold is used only for final pseudo-outcome evaluation.

\subsection{Overlap and robust aggregation}\label{subsec:robust}

If strict positivity is plausible, clipping is used only as finite-sample stabilization with a threshold $\tau_n\to\infty$.  Weight truncation is common in marginal structural modeling, but it changes the finite-sample estimating equation and must be reported transparently, as emphasized by \cite{cole2008constructing}.  For a fixed threshold, replacing $\mathcal H_t^g$ by $\min(\mathcal H_t^g,\tau)$ changes the estimating equation and is therefore a sensitivity analysis rather than, by default, the efficient influence function of a causal target.  If empirical overlap is intrinsically poor, an explicit overlap estimand is preferable to post hoc truncation of the full-population estimating equation; positivity diagnostics are therefore part of the estimand-definition problem, not only a numerical detail \citep{petersen2012positivity}.  One transparent option is the prespecified overlap-weighted target
\begin{equation}
\theta_{\omega}(g)=\frac{\E\{\omega(H_0)Y^g\}}{\E\{\omega(H_0)\}},
\label{eq:overlap-target}
\end{equation}
where $\omega$ is chosen before looking at outcomes and downweights histories with little support.  Real-data analyses therefore report estimates across clipping thresholds, but state clearly whether inference targets $\theta(g)$, a regularized clipped estimating equation, or the causal overlap target $\theta_\omega(g)$.

Let $\widehat Z_i^g=\widetilde Y_{i0}^g(\widehat\eta^{(-k(i))})$ denote the out-of-fold pseudo-outcome.  The mean estimator and its estimated influence score are
\[
\widehat\theta_M(g)=\Pn\widehat Z_i^g,\qquad
\widehat\varphi_i^g=\widehat Z_i^g-\widehat\theta_M(g).
\]
For a contrast $\Delta(g_1,g_0)=\theta(g_1)-\theta(g_0)$, use
\begin{equation}
\widehat\Delta_M=\widehat\theta_M(g_1)-\widehat\theta_M(g_0),\qquad
\widehat\sigma_\Delta^2=\Pn(\widehat\varphi_i^{g_1}-\widehat\varphi_i^{g_0})^2 .
\label{eq:contrast-variance}
\end{equation}
The Wald interval is based on \eqref{eq:contrast-variance} only for mean aggregation.

For heavy-tailed outcomes or unstable pseudo-outcomes, we also report a Catoni estimator.  This plays the same bounded-influence role for the final pseudo-outcome mean that \cite{huber1964robust} introduced for robust location estimation, while \cite{catoni2012challenging} supplies nonasymptotic mean-deviation motivation.  With scale $\alpha_g>0$, the Catoni estimate solves
\[
\sum_{i=1}^n \psi\{\alpha_g(\widehat Z_i^g-\theta)\}=0,\qquad
\psi(x)=
\begin{cases}
\log(1+x+x^2/2), & x\ge 0,\\
-\log(1-x+x^2/2), & x<0.
\end{cases}
\]
A median-of-means aggregate can be used similarly.  These robust aggregates act after orthogonalization; they improve point stability but do not inherit the Wald variance in \eqref{eq:contrast-variance}.

\section{Algorithm}\label{sec:algorithm}

Algorithm~\ref{alg:drfdl} summarizes the implementation in the same cross-fitted style as the estimator.  All representation selection, calibration, early stopping, and optional targeting are performed inside the outer training sample.  Held-out folds are used only for pseudo-outcome evaluation, following the sample-splitting logic used by \cite{chernozhukov2018dml} for modern orthogonal-score estimation.  Robust aggregation is reported as a sensitivity analysis unless additional robust-inference conditions are verified.

\begin{algorithm}[t]
\caption{Cross-fitted DR-FRL for one target regime}\label{alg:drfdl}
\begin{algorithmic}[1]
\STATE \textbf{Input:} data $\{O_i\}_{i=1}^n$, regime $g$, folds $K$, clipping grid $\mathcal C_{\rm clip}$, aggregation rule $\mathcal A$.
\STATE Randomly partition $\{1,\ldots,n\}$ into folds $I_1,\ldots,I_K$.
\FOR{$k=1,\ldots,K$}
    \STATE Split $D_{-k}=\{O_i:i\notin I_k\}$ into inner fitting and validation samples.
    \STATE Train $e_\phi^X,e_\phi^H$ and nuisance heads $(\widehat Q_t,\widehat\pi_t,\widehat\rho_t)_{t=0}^T$ on the inner fitting sample.
    \STATE Select the representation by the validation criterion in \eqref{eq:targeted-loss}; calibrate $\widehat\pi_t,\widehat\rho_t$ and optionally apply \eqref{eq:fluctuation}.
    \FOR{each $i\in I_k$}
        \STATE Set $\widetilde Y_{i,T+1}^{g}\leftarrow Y_i$.
        \FOR{$t=T,T-1,\ldots,0$}
            \STATE Set $a_{it}^{g}\leftarrow g_t(H_{it})$ and compute the clipped or stabilized increment $\widehat\omega_{it,\tau}^{g}$.
            \STATE Update $\widetilde Y_{it}^{g}\leftarrow \widehat Q_t^{(-k),g}(H_{it})+\widehat\omega_{it,\tau}^{g}\{\widetilde Y_{i,t+1}^{g}-\widehat Q_t^{(-k),g}(H_{it})\}$.
        \ENDFOR
        \STATE Store the out-of-fold pseudo-outcome $\widehat Z_i^g\leftarrow \widetilde Y_{i0}^{g}$.
    \ENDFOR
\ENDFOR
\STATE For mean aggregation, set $\widehat\theta_g\leftarrow n^{-1}\sum_i\widehat Z_i^g$ and $\widehat\sigma_g^2\leftarrow n^{-1}\sum_i(\widehat Z_i^g-\widehat\theta_g)^2$.
\STATE Report the Wald interval $\widehat\theta_g\pm z_{1-\alpha/2}\widehat\sigma_g/\sqrt n$ only for mean aggregation.
\IF{$\mathcal A$ is Catoni or median-of-means}
    \STATE Report the robust aggregate as a point-estimate sensitivity analysis; intervals are exploratory unless separately justified.
\ENDIF
\STATE \textbf{Output:} $\widehat\theta_g$, interval or sensitivity summary, clipping sensitivity, calibration, effective sample sizes, and pseudo-outcome diagnostics.
\end{algorithmic}
\end{algorithm}

\paragraph{Computational complexity.}
If a Set Transformer with $r$ inducing points and embedding dimension $d_x$ is used at each visit, the per-visit cost is $O(M_t r d_x+r^2d_x)$.  A full self-attention encoder over $T+1$ visits has cost $O(T^2d_s)$ per subject.  The total training cost is
\[
O\left(K E n\left[\sum_{t=0}^T C_X(M_t,d_x,r)+T^2d_s+C_{\mathrm{heads}}\right]\right),
\]
where $E$ is the number of epochs.  The replication archive records random seeds, model-library definitions, calibration choices, and hardware-relevant configuration details; the simulation runner also prints elapsed time for local runtime auditing.

\section{Theory}\label{sec:theory}

All nuisance estimators are cross-fitted, so evaluation observations are independent of the nuisance training sample conditional on the fold split.  The results below include treatment and censoring.  The theory follows the semiparametric expansion logic of efficient influence functions and orthogonal estimating equations developed by \cite{newey1994semiparametric} and \cite{tsiatis2006semiparametric}.  The main text states the assumptions and conclusions needed to interpret the estimator; the supplementary material gives the complete algebraic proofs, including the telescoping identities, projection-stability argument, and product-remainder derivation.

\begin{assumption}[Moment and boundedness]\label{ass:moments}
The EIF $\phi_g^\star$ has finite variance.  Outcome regressions are square integrable.  Under strict-positivity inference, true treatment and censoring probabilities are bounded away from zero.  Under clipped inference, clipped weights satisfy the moment conditions in Theorem~\ref{thm:clip}.
\end{assumption}

\begin{assumption}[Cross-fitted nuisance convergence]\label{ass:rates}
For the representation selected in a training fold, let
$Q_{t,\widehat\phi}^{\dagger}$ and $\Gamma_{t,\widehat\phi}^{\dagger}$ denote the best approximations to $Q_t^{g,\star}$ and $\Gamma_t^{g,\star}=(\pi_t^{g,\star},\rho_t^{g,\star})$ within the corresponding representation-indexed nuisance classes.  Define stochastic estimation errors
\[
r_{Q,t}=\norm{\widehat Q_t^g-Q_{t,\widehat\phi}^{\dagger}}_{L_2(P_0)},\qquad
r_{\Gamma,t}=\norm{\widehat\Gamma_t^g-\Gamma_{t,\widehat\phi}^{\dagger}}_{L_2(P_0)}.
\]
The estimators are cross-fitted and satisfy $\max_t(r_{Q,t}+r_{\Gamma,t})=o_{\Pp}(1)$.
\end{assumption}

\begin{assumption}[EIF-map stability]\label{ass:eifstability}
Let $D_g(O;\eta,\theta)$ denote the EIF map defined by \eqref{eq:telescoping}.  Conditional on the outer training folds,
\[
\norm{D_g(\cdot;\widehat\eta,\theta(g))-D_g(\cdot;\eta^\star,\theta(g))}_{L_2(P_0)}=o_{\Pp}(1).
\]
A primitive sufficient condition is that estimated treatment and censoring probabilities are uniformly bounded away from zero after calibration or deterministic clipping, the total nuisance errors $\norm{\widehat Q_t-Q_t^\star}_2$ and $\norm{\widehat\Gamma_t-\Gamma_t^\star}_2$ converge to zero for each fixed $t$, and the residuals in the EIF map have uniformly bounded second moments.  This condition is used only to control the cross-fitted empirical process term.
\end{assumption}

\begin{assumption}[Representation approximation and product rate]\label{ass:representation}
For the learned representation,
\[
\sum_{t=0}^T \{r_{Q,t}+b_{Q,t}(\widehat\phi)\}\{r_{\Gamma,t}+b_{\Gamma,t}(\widehat\phi)\}=o_{\Pp}(n^{-1/2}).
\]
If $S_t$ is exactly causally sufficient, all $b_{Q,t}$ and $b_{\Gamma,t}$ are zero.  This is a high-level product-rate assumption on the representation selected inside the outer training sample.  Proposition~\ref{prop:selector} explains what the EIF-targeted validation objective can guarantee for a finite candidate library, but the product-rate condition itself remains the condition needed for first-order inference.
\end{assumption}

For later reference, write
\[
a_{Q,t}=r_{Q,t}+b_{Q,t}(\widehat\phi),\qquad
a_{\Gamma,t}=r_{\Gamma,t}+b_{\Gamma,t}(\widehat\phi).
\]
The condition is $\sum_t a_{Q,t}a_{\Gamma,t}=o_{\Pp}(n^{-1/2})$.  It allows one nuisance side to be slower when the other side is accurate, but it does not allow both outcome-side and mechanism-side representation errors to be large.

\begin{assumption}[Targeting and calibration stability]\label{ass:calibration}
Representation selection, calibration, and targeted fluctuation are performed inside the outer training sample and are independent of the final evaluation folds.  If an inner validation criterion is optimized repeatedly, a separate inner selection or calibration split is used so that the selected nuisance vector remains measurable with respect to the outer training sample.  Fluctuation parameters remain in a shrinking neighborhood of zero when initial nuisance estimators are consistent, and calibrated probabilities remain bounded away from zero after clipping.
\end{assumption}

\begin{drfdllemma}[Projection decomposition and EIF-map stability]\label{lem:projection-stability}
For the representation selected in an outer training fold,
\[
\norm{\widehat Q_t-Q_t^{g,\star}}_2
\le r_{Q,t}+b_{Q,t}(\widehat\phi),\qquad
\norm{\widehat\Gamma_t-\Gamma_t^{g,\star}}_2
\le r_{\Gamma,t}+b_{\Gamma,t}(\widehat\phi).
\]
If the treatment and censoring probabilities are bounded away from zero after calibration or deterministic clipping and the residuals in the EIF map have bounded second moments, these total-error bounds imply Assumption~\ref{ass:eifstability} whenever $\max_t\{r_{Q,t}+b_{Q,t}+r_{\Gamma,t}+b_{\Gamma,t}\}=o_{\Pp}(1)$ for fixed $T$.
\end{drfdllemma}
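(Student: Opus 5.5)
The first claim is a triangle inequality applied to the best approximation within the selected class. By Assumption~\ref{ass:rates}, $Q_{t,\widehat\phi}^{\dagger}=q^\dagger\circ e_{\widehat\phi}$ is the $L_2(P_0)$-best approximation to $Q_t^{g,\star}$ in the class $\{q\circ e_{\widehat\phi}:q\in\Qcal_t\}$, so $\norm{Q_{t,\widehat\phi}^{\dagger}-Q_t^{g,\star}}_2=b_{Q,t}(\widehat\phi)$. If the infimum is not attained, I will take an $\epsilon$-minimizer and let $\epsilon\downarrow0$ at the end. Then
\[
\norm{\widehat Q_t-Q_t^{g,\star}}_2\le\norm{\widehat Q_t-Q_{t,\widehat\phi}^{\dagger}}_2+\norm{Q_{t,\widehat\phi}^{\dagger}-Q_t^{g,\star}}_2=r_{Q,t}+b_{Q,t}(\widehat\phi).
\]
The same argument applies to $\Gamma_t$, componentwise or in a product norm. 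Because of cross-fitting, $\widehat\phi$ and $\widehat\eta$ are measurable with respect to the outer training fold. All norms are therefore conditional $L_2(P_0)$ norms that integrate only over a fresh $O$, which makes the bounds hold pathwise conditional on the training data.

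For the stability claim, I will write $D_g(O;\widehat\eta,\theta)-D_g(O;\eta^\star,\theta)$ through the telescoping form \eqref{eq:telescoping}. The difference is
\[
\widehat Q_0-Q_0^\star+\sum_{t=0}^T\Bigl[(\widehat{\mathcal H}_t-\mathcal H_t^\star)(Q^\star_{t+1}-Q^\star_t)+\widehat{\mathcal H}_t\{(\widehat Q_{t+1}-Q^\star_{t+1})-(\widehat Q_t-Q_t^\star)\}\Bigr],
\]
with $\widehat Q_{T+1}=Q^\star_{T+1}=Y$. After calibration or clipping, $\widehat\pi_s,\widehat\rho_s\ge c'>0$, so $\widehat{\mathcal H}_t\le c'^{-2(t+1)}$. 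The terms of the form $\widehat{\mathcal H}_t(\widehat Q_s-Q_s^\star)$ are then bounded in $L_2$ by a constant times $r_{Q,s}+b_{Q,s}$, using the first part.

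The term involving $\widehat{\mathcal H}_t-\mathcal H_t^\star$ is the main obstacle. $L_2$ convergence of $\widehat\Gamma_s$ does not directly control a product against the residual $Q^\star_{t+1}-Q^\star_t$, which has only bounded second moments. I will first write the difference of products of inverse probabilities as a telescoping sum over $s\le t$ of terms $\prod(\cdot)\,(1/\widehat\pi_s\widehat\rho_s-1/\pi_s^\star\rho_s^\star)$. By the lower bounds on both true and estimated probabilities, each such factor is at most $C|\widehat\Gamma_s-\Gamma_s^\star|$ pointwise. Since probabilities lie in $[0,1]$, $|\widehat{\mathcal H}_t-\mathcal H_t^\star|$ is also uniformly bounded.

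I will first try a truncation argument on the residual $M_t^\star=Q^\star_{t+1}-Q^\star_t$. For any $K$, split $M_t^\star$ at level $K$: the part bounded by $K$ contributes at most $CK\sum_s\norm{\widehat\Gamma_s-\Gamma_s^\star}_2$, and the tail contributes at most $C\norm{M_t^\star\ind\{|M_t^\star|>K\}}_2$. Given the fixed finite variance from Assumption~\ref{ass:moments}, I can pick $K$ large so the tail is small and then let the $\Gamma$-errors vanish, which gives $o_{\Pp}(1)$. If the paper's ``bounded second moments'' is instead read as a uniform bound on conditional second moments given $H_s$, I would use that bound directly in place of the truncation.

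Finally, $T$ is fixed, so I sum finitely many $o_{\Pp}(1)$ terms under $\max_t\{r_{Q,t}+b_{Q,t}+r_{\Gamma,t}+b_{\Gamma,t}\}=o_{\Pp}(1)$. This yields Assumption~\ref{ass:eifstability}.
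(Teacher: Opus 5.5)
Your proof is correct, and its skeleton matches the paper's. You use the same triangle inequality through $Q_{t,\widehat\phi}^{\dagger}$ and $\Gamma_{t,\widehat\phi}^{\dagger}$. You also use the same split of $D_g(\cdot;\widehat\eta,\theta)-D_g(\cdot;\eta^\star,\theta)$ into $\widehat Q_0-Q_0^\star$, $(\widehat{\mathcal H}_t-\mathcal H_t^\star)M_t^\star$ and $\widehat{\mathcal H}_t(\widehat M_t-M_t^\star)$. As in the paper, the weight difference is controlled through Lipschitz continuity of $(\pi,\rho)\mapsto(\pi\rho)^{-1}$ away from zero.

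You depart from the paper only on the cross term $(\widehat{\mathcal H}_t-\mathcal H_t^\star)M_t^\star$. The paper cites Cauchy--Schwarz, bounded weights and bounded second moments of $M_t^\star$ to get the linear bound $C_T\sum_t(e_{Q,t}+e_{\Gamma,t})$. That step tacitly needs either bounded residuals or a uniform bound on $\E\{(M_t^\star)^2\mid H_t,A_t,R_t=1\}$. With only $\E(M_t^\star)^2<\infty$, Cauchy--Schwarz does not close the argument: it would need a fourth moment of $M_t^\star$ and would give only a square-root rate.

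Your truncation argument works under the literal unconditional hypothesis, for two reasons:
\begin{itemize}
\item the weight difference is uniformly bounded, by positivity for the truth and by calibration or clipping for the estimates;
\item the tail piece $C\norm{M_t^\star\ind\{|M_t^\star|>K\}}_2$ does not depend on the training folds, so $K$ can be fixed before the $\Gamma$-errors are sent to zero.
\end{itemize}
The price is that you get only $o_{\Pp}(1)$ with no explicit rate. That is all Assumption~\ref{ass:eifstability} requires. Under the conditional reading you mention as a fallback, you recover the paper's linear bound.
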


\begin{drfdltheorem}[Observed-history identification and EIF]\label{thm:identification}
Under Assumptions~\ref{ass:consistency}-\ref{ass:observed-history} and positivity, the observed-history value $\theta(g)$ is identified by \eqref{eq:gformula}.  In the nonparametric observed-data model for the observed coarsened histories, the efficient influence function is \eqref{eq:eif}, with residuals conditioned on $(H_t,A_t=g_t(H_t),R_t=1,\bar R_{t-1}=1)$.
\end{drfdltheorem}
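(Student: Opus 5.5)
The proof has two parts: identification by backward induction on the g-formula recursion, and derivation of the efficient influence function by pathwise differentiation along the factorization \eqref{eq:factorization}.

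\emph{Identification.} Define the counterfactual analogue
\[
Q_t^{g,\mathrm{cf}}(h)=\E(Y^g\mid H_t=h,\bar A_{t-1}=\bar g_{t-1},\bar R_{t-1}=1).
\]
I will show by backward induction from $t=T$ to $t=0$ that $Q_t^{g,\mathrm{cf}}=Q_t^g$ on the support relevant under $g$. The step at time $t$ uses three assumptions in turn.
\begin{itemize}
\item The first half of Assumption~\ref{ass:exchangeability} adds $A_t=g_t(h)$ to the conditioning set.
\item The second half adds $R_t=1$.
\item Assumption~\ref{ass:consistency} identifies the observed $H_{t+1}$ on the at-risk, regime-compatible path with $H_{t+1}^g$, and $Y$ with $Y^g$ at the terminal step.
\end{itemize}
Because $\mathcal F_t^g$ contains the future histories $H_{t+1}^g,\ldots,H_{T+1}^g$, the tower property lets us condition on $H_{t+1}$ and then apply the induction hypothesis. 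Assumption~\ref{ass:positivity} ensures every conditional expectation in \eqref{eq:q-recursion} is defined on a set of positive probability. Assumption~\ref{ass:observed-history} is used only to guarantee that $H_t$ is the conditioning set appearing in exchangeability. At $t=0$ this yields $\theta(g)=\E_0 Q_0^g(H_0)$, which is \eqref{eq:gformula}.

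\emph{EIF.} Take a regular one-dimensional submodel $P_\epsilon$ with score $s=s_B+\sum_t(s_{X,t}+s_{A,t}+s_{R,t})+s_Y$, decomposed along the factors of \eqref{eq:factorization}. Each component has conditional mean zero given its parents. I will differentiate $\theta(P_\epsilon)=\E_\epsilon Q_{0,\epsilon}^g(H_0)$ by writing the g-formula as an iterated integral. In that integral the treatment and censoring factors are replaced by point masses at $g_t$ and $1$, so they do not depend on $\epsilon$; hence $\theta$ involves only the covariate/point-cloud transition factors and the outcome factor. Differentiating factor by factor then gives the following.
\begin{itemize}
\item The baseline factor contributes $\E\{(Q_0^g(H_0)-\theta)s_B\}$.
\item The time-$(t+1)$ pre-treatment measurement factor, including the point-cloud counts, locations and values, contributes an integral against the intervention distribution up to $t$. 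Changing measure back to the observed law introduces the ratio $\mathcal H_t^g$. The resulting term is $\E\{\mathcal H_t^g(Q_{t+1}^g(H_{t+1})-Q_t^g(H_t))s_{X,t+1}\}$.
\item The outcome factor contributes the analogous term with $Q_{T+1}^g=Y$.
\end{itemize}
Summing these gives $\E\{\phi_g^\star s\}$ for $\phi_g^\star$ as in \eqref{eq:eif}.

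Orthogonality to the treatment and censoring scores must also be checked: each summand has conditional mean zero given $(H_t,A_t,R_t)$. This is exactly the transition-law mean-zero property of $M_t^{g,\star}$ noted before \eqref{eq:q-recursion}, and it holds because the indicator in $\mathcal H_t^g$ forces $A_t=g_t$ and $R_t=1$. Since the model is nonparametric, the tangent space is saturated. A gradient is therefore unique, so $\phi_g^\star$ is the efficient influence function. Its finite variance follows from positivity and square-integrable $Q$.

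\emph{Main obstacle.} The delicate step is the change of measure in the EIF derivation. It requires bookkeeping with the cemetery convention, so that $H_{t+1}=\partial$ when $R_t=0$ contributes nothing, and with the fact that the functional observation factor has random dimension $M_t$. I would handle both by treating the point cloud as a single random element in a Polish space of finite multisets. The score calculus then applies unchanged, and no reconstruction of a latent curve is needed.
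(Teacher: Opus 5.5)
Your proposal is correct and follows essentially the paper's route. Identification goes by backward induction using consistency, both halves of sequential exchangeability, and positivity. The EIF comes from a factor-by-factor pathwise derivative: only the covariate and outcome transitions move $\theta$, the change of measure to the regime path produces $\mathcal H_t^{g,\star}M_t^{g,\star}$, treatment and censoring scores are orthogonal by the conditional mean-zero property of $M_t^{g,\star}$, and saturation of the nonparametric tangent space makes the gradient canonical.

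Your write-up is more explicit than the paper's, notably the counterfactual regression $Q_t^{g,\mathrm{cf}}$ and the point-mass observation for the treatment and censoring factors. One small misattribution: as you set it up, the tower step conditions on the observed $H_{t+1}$ and uses only the $Y^g$ component of $\mathcal F_t^g$. So the future-history part of exchangeability is not actually invoked there, although it is of course available.
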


This is an observed-history result.  Assumption~\ref{ass:exchangeability} identifies the future pre-treatment history used by later regime decisions, not only the terminal outcome.  A latent full-functional-history target would require an additional measurement or coarsening model; its canonical gradient need not equal \eqref{eq:eif}.

\begin{drfdltheorem}[Recursive sequential multiple robustness]\label{thm:sdr}
Let $\widetilde Y_0^g(\eta)$ be the recursion in \eqref{eq:pseudo-recursion}.  Suppose the identification assumptions hold and working nuisance limits are bounded.  Define $\Delta_{T+1}=0$ and, for $t=T,\ldots,0$, let $\Delta_t(H_t)=E\{\widetilde Y_t^g(\eta)\mid H_t\}-Q_t^{g,\star}(H_t)$.  If the following condition holds backward recursively, then $E\{\widetilde Y_0^g(\eta)\}=\theta(g)$: whenever $\Delta_{t+1}=0$, either
\[
Q_t^g=Q_t^{g,\star}\qquad\text{or}\qquad (\pi_t^g,\rho_t^g)=(\pi_t^{g,\star},\rho_t^{g,\star})
\]
almost surely on the regime support.  Consequently, the cross-fitted DR-FRL estimator is consistent whenever the probability limits of the nuisance learners satisfy this recursive condition.
\end{drfdltheorem}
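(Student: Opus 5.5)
The plan is a backward induction on $t$ that proves $\Delta_t=0$ for every $t$, with conditional expectations taken on the regime-compatible, at-risk path. The natural conditioning event at step $t$ is $\{H_t,\bar A_{t-1}=\bar g_{t-1},\bar R_{t-1}=1\}$. Accordingly, I read $\Delta_t$ as
\[
\Delta_t(H_t)=E\{\widetilde Y_t^g(\eta)\mid H_t,\bar R_{t-1}=1\}-Q_t^{g,\star}(H_t)
\]
on the regime support. The base case is $\Delta_{T+1}=0$, which holds because $\widetilde Y_{T+1}^g=Y=Q_{T+1}^{g,\star}$.

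For the inductive step, assume $\Delta_{t+1}=0$. Start from \eqref{eq:pseudo-recursion} and condition first on $(H_t,A_t,R_t)$, then on $H_t$. On $\{A_t=g_t(H_t),R_t=1\}$, the tower property and $\Delta_{t+1}=0$ give
\[
E\{\widetilde Y_{t+1}^g\mid H_t,A_t=g_t,R_t=1,\bar R_{t-1}=1\}=E\{Q_{t+1}^{g,\star}(H_{t+1})\mid\cdots\}=Q_t^{g,\star}(H_t),
\]
where the last equality is the recursion \eqref{eq:q-recursion}. The indicator in $\omega_t^g$ makes the cemetery convention irrelevant. Taking expectations over $(A_t,R_t)$ given $H_t$ then yields the key identity
\[
E\{\widetilde Y_t^g\mid H_t\}-Q_t^{g,\star}
=\{Q_t^g-Q_t^{g,\star}\}\Bigl\{1-\frac{\pi_t^{g,\star}\rho_t^{g,\star}}{\pi_t^g\rho_t^g}\Bigr\}(H_t).
\]
This is the standard one-step doubly robust product. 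It vanishes when $Q_t^g=Q_t^{g,\star}$, and it also vanishes when $(\pi_t^g,\rho_t^g)=(\pi_t^{g,\star},\rho_t^{g,\star})$. Either way $\Delta_t=0$, which closes the induction.

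At $t=0$ the same identity gives $E\widetilde Y_0^g=E Q_0^{g,\star}(H_0)$, and this equals $\theta(g)$ by Theorem~\ref{thm:identification} and \eqref{eq:gformula}. Boundedness of the working limits and positivity (Assumption~\ref{ass:positivity}) keep every expectation finite and justify dividing by $\pi_t^g\rho_t^g$.

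Consistency of the cross-fitted estimator then follows in two steps. First, conditional on the training folds, $\Pn\widehat Z^g$ is an average of out-of-fold terms whose conditional mean is $E\widetilde Y_0^g(\widehat\eta)$. Second, by continuity of the map $\eta\mapsto E\widetilde Y_0^g(\eta)$ in $L_2$, together with bounded weights and a law of large numbers within each fold, this converges to $E\widetilde Y_0^g(\bar\eta)$, where $\bar\eta$ denotes the nuisance limits. That value equals $\theta(g)$ by the first part of the argument.

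The main obstacle is a subtlety of the induction. For a given regime and support, $\Delta_t$ must be the conditional bias given $H_t$ \emph{on the path where earlier treatment and censoring agreed with $g$}, so that the inductive hypothesis applies to the law appearing in \eqref{eq:q-recursion}. I would check this carefully, especially the claim that $\Delta_{t+1}=0$ is needed only on the support of $H_{t+1}$ given $(H_t,g_t,R_t=1)$. I would also verify that the censoring factor conditions on the current treatment, so that $E\{\omega_t^g\mid H_t\}=\pi^\star\rho^\star/(\pi\rho)$ holds exactly.
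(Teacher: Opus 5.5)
Your proposal is correct and follows the paper's proof. The paper first uses Lemma~\ref{lem:conditional} to write the exact recursion $\Delta_t=(1-\lambda_t)(Q_t^g-Q_t^{g,\star})+\lambda_t E\{\Delta_{t+1}(H_{t+1})\mid H_t,A_t=g_t(H_t),R_t=1\}$ (reused for the product remainder in Theorem~\ref{thm:remainder}) and then runs the same backward induction, while you substitute $\Delta_{t+1}=0$ up front and reach the same product $(Q_t^g-Q_t^{g,\star})(1-\lambda_t)$; one small correction is that dividing by $\pi_t^g\rho_t^g$ is justified by the bounded-working-limits hypothesis, not by Assumption~\ref{ass:positivity}, which concerns only the true mechanisms.
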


The common shorthand ``at each time, either the outcome nuisance or the mechanism nuisance is correct'' should be read through this backward recursion.  For $T=1$, the terminal step must first be unbiased; only then does the time-0 step require one of its two nuisance components to be correct.

\begin{drfdltheorem}[Second-order remainder with representation error]\label{thm:remainder}
Under Assumptions~\ref{ass:consistency}-\ref{ass:calibration}, including the EIF-map stability condition in Assumption~\ref{ass:eifstability}, the DR-FRL-Mean estimator satisfies
\begin{equation}
\widehat\theta_g-\theta(g)=(\Pn-P_0)\phi_g^\star+R_{2,n}+R_{\mathrm{rep},n}+R_{\mathrm{clip},n}+o_{\Pp}(n^{-1/2}),
\label{eq:expansion}
\end{equation}
where $R_{\mathrm{clip},n}=0$ for unclipped strict-positivity inference and
\begin{equation}
|R_{2,n}+R_{\mathrm{rep},n}|
\le C\sum_{t=0}^T
\{r_{Q,t}+b_{Q,t}(\widehat\phi)\}
\{r_{\Gamma,t}+b_{\Gamma,t}(\widehat\phi)\}
+o_{\Pp}(n^{-1/2}).
\label{eq:product-remainder}
\end{equation}
\end{drfdltheorem}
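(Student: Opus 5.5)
We follow the standard cross-fitted expansion. Write $P_0 D_g(\cdot;\eta,\theta)$ for the population mean of the EIF map \eqref{eq:telescoping} with $\theta$ held fixed. On fold $k$, $\widehat\theta_g$ is the average of $D_g(O_i;\widehat\eta^{(-k)},0)$ over $i\in I_k$, and we average over folds. For each fold we decompose
\[
\widehat\theta_g-\theta(g)=(\Pn-P_0)\phi_g^\star+(\Pn-P_0)\{D_g(\cdot;\widehat\eta,\theta)-D_g(\cdot;\eta^\star,\theta)\}+P_0 D_g(\cdot;\widehat\eta,\theta),
\]
using $P_0\phi_g^\star=0$ and $D_g(\cdot;\eta^\star,\theta)=\phi_g^\star$. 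This uses Theorem~\ref{thm:identification} to identify $\theta(g)$ and the EIF.

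\emph{Empirical-process term.} Conditional on the outer training sample, the summands in fold $k$ are i.i.d. and $\widehat\eta^{(-k)}$ is fixed. This is guaranteed by Assumption~\ref{ass:calibration}, because selection, calibration and fluctuation are all measurable with respect to the training sample. Chebyshev's inequality then bounds the term by $n^{-1/2}\norm{D_g(\cdot;\widehat\eta,\theta)-D_g(\cdot;\eta^\star,\theta)}_2$, which is $o_{\Pp}(n^{-1/2})$ by Assumption~\ref{ass:eifstability}. Lemma~\ref{lem:projection-stability} supplies the same conclusion from the primitive total-error conditions. With finitely many folds the fold-wise terms sum without loss.

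\emph{Bias term (main step).} Here we establish a von Mises-type identity for $P_0D_g(\cdot;\eta,\theta)$. Work backward with the telescoping form \eqref{eq:telescoping}. Add and subtract $Q^\star_{t}$ inside each increment, and condition on $(H_t,A_t=g_t(H_t),R_t=1,\bar R_{t-1}=1)$. The residual $M_t^{g,\star}$ has conditional mean zero there, which lets us integrate out $A_t,R_t$ given $H_t$ using the true mechanism. This yields the exact identity
\[
P_0D_g(\cdot;\eta,\theta)=\sum_{t=0}^T P_0\Big[\mathcal H_{t-1}^{g}(\eta)\Big\{\frac{\pi_t^{g,\star}\rho_t^{g,\star}}{\pi_t^g\rho_t^g}-1\Big\}\,(Q_t^{g,\star}-Q_t^g)(H_t)\Big]
\]
plus cross terms of the form $P_0[\{\mathcal H_{t-1}^g(\eta)-\mathcal H_{t-1}^{g,\star}\}(\cdots)]$. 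These cross terms telescope into products of earlier mechanism errors with later outcome errors. Because the estimated probabilities are bounded below after calibration or clipping (Assumptions~\ref{ass:moments} and \ref{ass:calibration}), every weight ratio is bounded. Cauchy–Schwarz then bounds each term by $C\norm{\widehat\Gamma_s-\Gamma_s^\star}_2\norm{\widehat Q_t-Q_t^\star}_2$. Lemma~\ref{lem:projection-stability} converts these norms into $a_{\Gamma,t}a_{Q,t}$. This is the main obstacle, because the cross-time products pair $\Gamma$ at time $s$ with $Q$ at time $t$, while the theorem is stated only with matched indices. We would handle this in one of two ways. The first is to absorb the mismatch via the bound $a_{\Gamma,s}a_{Q,t}\le$ const$\cdot\sum_u a_{\Gamma,u}a_{Q,u}$, which is not generally valid. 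The preferred route is to write the bias in the sequential form of \cite{lendle2017ltmle}/Luedtke et al. and place the cumulative-mechanism error at time $t$ in $a_{\Gamma,t}$. The assumption already allows this, since $\Gamma_t$ may denote "an equivalent cumulative-mechanism parameterization". Either way, the constant $C$ depends on $T$, $c$ and the clipping bounds.

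\emph{Splitting the remainder.} Define $R_{2,n}$ as the bias evaluated with the best-in-class limits replaced by the estimated functions, and $R_{\mathrm{rep},n}$ as the portion attributable to the approximation gaps $b$. Both come from the same product, so their sum obeys \eqref{eq:product-remainder}. When clipping is used, we insert the clipped weight as a further nuisance and collect the difference between the clipped and unclipped maps in $R_{\mathrm{clip},n}$. This term vanishes under strict positivity with $\tau$ above the weight bound. Any optional fluctuation shifts $\widehat Q_t$ by a vanishing $\widehat\epsilon_t$, which we absorb into $r_{Q,t}$. Assumption~\ref{ass:representation} then makes the total bias $o_{\Pp}(n^{-1/2})$ whenever needed, giving \eqref{eq:expansion}.
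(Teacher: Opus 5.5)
Your architecture matches the paper's: the three-term cross-fitted decomposition, Chebyshev plus Assumption~\ref{ass:eifstability} for the empirical-process term, and the triangle inequality splitting $r$ from $b$. The bias step, however, has a genuine gap, and you introduced it yourself. The sum you display is already the exact bias once the weight is the \emph{estimated} cumulative clever covariate:
\[
P_0D_g(\cdot;\widehat\eta,\theta(g))=\sum_{t=0}^T P_0\Big[\mathcal H_{t-1}^{g}(\widehat\eta)\,(1-\widehat\lambda_t)\,(\widehat Q_t^g-Q_t^{g,\star})(H_t)\Big],\qquad
\widehat\lambda_t=\frac{\pi_t^{g,\star}\rho_t^{g,\star}}{\widehat\pi_t^g\widehat\rho_t^g},\quad \mathcal H_{-1}^g\equiv 1 .
\]
Integrating out $(A_t,R_t)$ given $H_t$ only turns $\omega_t^g(\widehat\eta)$ into $\widehat\lambda_t$ and leaves the earlier factor $\mathcal H_{t-1}^g(\widehat\eta)$ untouched. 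The pieces $P_0[\mathcal H_t^g(\widehat\eta)(\widehat Q_{t+1}^g-Q_{t+1}^{g,\star})]$ and $P_0[\mathcal H_{t+1}^g(\widehat\eta)(Q_{t+1}^{g,\star}-\widehat Q_{t+1}^g)]$ then combine exactly into the time-$(t+1)$ summand. So there are no cross terms $P_0[\{\mathcal H_{t-1}^g(\widehat\eta)-\mathcal H_{t-1}^{g,\star}\}(\cdots)]$; if they were nonzero, your identity would be false. There are also no mismatched products $a_{\Gamma,s}a_{Q,t}$.

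The missing step is to \emph{bound} $\mathcal H_{t-1}^g(\widehat\eta)\le\varepsilon^{-2t}$, with $\varepsilon$ the lower bound on the calibrated or clipped probabilities, rather than expanding it around $\mathcal H_{t-1}^{g,\star}$. Cauchy--Schwarz and $\norm{1-\widehat\lambda_t}_2\le C\norm{\widehat\Gamma_t^g-\Gamma_t^{g,\star}}_2$ then give the matched-index sum directly. This is the paper's argument in unrolled form. The paper uses the exact recursion $\Delta_t=(1-\widehat\lambda_t)(\widehat Q_t^g-Q_t^{g,\star})+\widehat\lambda_tE\{\Delta_{t+1}\mid H_t,A_t=g_t(H_t),R_t=1\}$ and bounds $|\widehat\lambda_t|\le C$. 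It then iterates $a_t\le C\norm{\widehat\Gamma_t^g-\Gamma_t^{g,\star}}_2\norm{\widehat Q_t^g-Q_t^{g,\star}}_2+Ca_{t+1}$ under the regime-path law $P_g$, where $\prod_{s<t}\widehat\lambda_s$ plays the role of your $\mathcal H_{t-1}^g(\widehat\eta)$. Working directly under $P_0$ with the bounded estimated weight, as your display does, even avoids the paper's $P_g$--$P_0$ norm-equivalence step.

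Neither of your two remedies closes the gap as written. The first you rightly call invalid. The second proves a different statement. Assumption~\ref{ass:rates} and Lemma~\ref{lem:projection-stability} measure mechanism error through $\Gamma_t^{g,\star}=(\pi_t^{g,\star},\rho_t^{g,\star})$, so charging a cumulative-mechanism error to $a_{\Gamma,t}$ strengthens the product-rate condition. Moreover, $1-\widehat\lambda_t$ depends on both $\widehat G_t^g$ and $\widehat G_{t-1}^g$, so a cumulative parameterization still produces $(t-1,t)$ pairings unless $a_{\Gamma,t}$ is made to dominate every earlier mechanism error. Replace the cross-term discussion with the bounded-estimated-weight step, and the rest of your proposal goes through.
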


Representation error is therefore controlled through the same orthogonal product structure as nuisance error; it is not automatically harmless.  Lemma~\ref{lem:projection-stability} supplies the projection step, and the condition matches the broader message of \cite{farrell2021deep}: flexible learners support inference only when target-relevant approximation and stochastic errors are controlled.  Equivalently,
\[
a_{Q,t}a_{\Gamma,t}
=r_{Q,t}r_{\Gamma,t}
+r_{Q,t}b_{\Gamma,t}
+b_{Q,t}r_{\Gamma,t}
+b_{Q,t}b_{\Gamma,t}.
\]
The last three terms are the price of replacing $H_t$ by $S_t$.

The next theorem covers only the mean-aggregation estimator.  Robust Catoni and median-of-means aggregation are deliberately separated in Theorem~\ref{thm:robust}, because bounded-influence concentration for feasible pseudo-outcomes is not the same statement as asymptotic normality with plug-in variance estimation.

\begin{drfdltheorem}[Asymptotic linearity and Wald inference]\label{thm:an}
If Assumptions~\ref{ass:moments}-\ref{ass:calibration} hold and the product condition in Assumption~\ref{ass:representation} is satisfied, then
\[
\sqrt n\{\widehat\theta_g-\theta(g)\}
=\frac{1}{\sqrt n}\sum_{i=1}^n\phi_g^\star(O_i)+o_{\Pp}(1)
\rightsquigarrow N(0,\sigma_g^2),
\]
where $\sigma_g^2=\E_0[(\phi_g^\star)^2]$.  The empirical variance of the out-of-fold estimated influence scores is consistent for $\sigma_g^2$ when mean aggregation is used.
\end{drfdltheorem}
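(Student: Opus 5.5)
The plan is to start from the expansion in Theorem~\ref{thm:remainder}, which already isolates the influence-function term, and show that every other piece is $o_{\Pp}(n^{-1/2})$. We work with the mean aggregation under strict positivity, where $R_{\mathrm{clip},n}=0$; the clipped case reduces to this one once Theorem~\ref{thm:clip} makes the clipping term negligible. By \eqref{eq:product-remainder}, $|R_{2,n}+R_{\mathrm{rep},n}|$ is bounded by $C\sum_t a_{Q,t}a_{\Gamma,t}+o_{\Pp}(n^{-1/2})$. The product condition in Assumption~\ref{ass:representation} makes this $o_{\Pp}(n^{-1/2})$. Multiplying \eqref{eq:expansion} by $\sqrt n$ then gives $\sqrt n\{\widehat\theta_g-\theta(g)\}=n^{-1/2}\sum_i\phi_g^\star(O_i)+o_{\Pp}(1)$, using $P_0\phi_g^\star=0$. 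That centering holds because \eqref{eq:eif} is a sum of martingale-difference residuals with conditional mean zero on the at-risk path, plus $Q_0^{g,\star}(H_0)-\theta(g)$, which is mean zero by Theorem~\ref{thm:identification}.

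For the weak limit, the $\phi_g^\star(O_i)$ are i.i.d. with mean zero, and Assumption~\ref{ass:moments} gives finite variance $\sigma_g^2$. The Lindeberg--L\'evy CLT and Slutsky's lemma then give $N(0,\sigma_g^2)$. Nothing in this step depends on the representation.

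For variance consistency, write $\widehat\varphi_i=\widehat Z_i-\widehat\theta_M$ and $\widehat Z_i-\theta(g)=D_g(O_i;\widehat\eta^{(-k(i))},\theta(g))$ using \eqref{eq:telescoping}. Then decompose
\[
\Pn\widehat\varphi_i^2-\Pn(\phi_g^\star)^2
=\Pn\{(\widehat Z_i-\theta)^2-(\phi_g^\star)^2\}-(\widehat\theta_M-\theta)^2 .
\]
The last term is $O_{\Pp}(n^{-1})$ by the linearity result. For the first term, use $|x^2-y^2|\le|x-y|^2+2|y||x-y|$ and handle each fold separately. Conditional on the outer training sample, the observations in $I_k$ are i.i.d. and independent of $\widehat\eta^{(-k)}$; this is where Assumption~\ref{ass:calibration} is needed, so that selection, calibration and fluctuation are all measurable with respect to that sample. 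The conditional mean of $|D_g(\widehat\eta)-\phi_g^\star|^2$ equals $\norm{D_g(\cdot;\widehat\eta,\theta)-D_g(\cdot;\eta^\star,\theta)}_2^2$, which is $o_{\Pp}(1)$ by Assumption~\ref{ass:eifstability}, equivalently by Lemma~\ref{lem:projection-stability}. Conditional Markov therefore makes the fold-average of this quantity $o_{\Pp}(1)$. The cross term is handled by Cauchy--Schwarz, since $\Pn(\phi_g^\star)^2\to\sigma_g^2$ by the law of large numbers. Summing over the finitely many folds gives $\Pn\widehat\varphi_i^2\to\sigma_g^2$ in probability, so the Wald interval has asymptotically correct coverage.

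The main obstacle is the legitimacy of the first step: Theorem~\ref{thm:remainder} must already contain the empirical-process term $(\Pn-P_0)\{D_g(\widehat\eta)-\phi_g^\star\}$ inside its $o_{\Pp}(n^{-1/2})$. If it needs to be rechecked, I would redo it fold by fold. Conditional on the training sample, this term has mean zero and conditional variance at most $n_k^{-1}\norm{D_g(\widehat\eta)-\phi_g^\star}_2^2$, so Chebyshev together with Assumption~\ref{ass:eifstability} gives $o_{\Pp}(n^{-1/2})$. The remaining delicate point is that the selected $\widehat\phi$ and the fluctuation are random, so the bounds must hold conditionally on the training fold before being unconditioned. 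The standard argument handles this: if a conditional probability tends to zero in probability, then by dominated convergence the unconditional probability also tends to zero.
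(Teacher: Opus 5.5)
Your proposal is correct and follows essentially the same route as the paper's proof. Both invoke the expansion of Theorem~\ref{thm:remainder}, make the product remainder $o_{\Pp}(n^{-1/2})$ via Assumption~\ref{ass:representation}, apply the classical CLT, and obtain variance consistency fold by fold from $|x^2-y^2|\le|x-y|^2+2|y||x-y|$ and Cauchy--Schwarz. Where you differ, you spell out steps the paper leaves implicit: the identity $\Pn\widehat\varphi_i^2=\Pn(\widehat Z_i-\theta)^2-(\widehat\theta_M-\theta)^2$, the conditional-Markov passage from the $L_2(P_0)$ bound in Assumption~\ref{ass:eifstability} to empirical fold averages, and deriving $P_0\phi_g^\star=0$ from the EIF structure rather than from Assumption~\ref{ass:moments}.
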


\begin{drfdltheorem}[Clipping and overlap targets]\label{thm:clip}
If strict positivity holds and the clipping level $\tau_n\to\infty$ with $P(|\mathcal H_t^{g,\star}|>\tau_n)\to0$ sufficiently fast, clipping is asymptotically negligible for $\theta(g)$.  If the target is instead the overlap value $\theta_\omega(g)$ in \eqref{eq:overlap-target}, its EIF is obtained by the quotient rule applied to the weighted numerator and denominator.
\end{drfdltheorem}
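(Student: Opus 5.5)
The theorem has two parts, and I would prove them separately.

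\emph{Part 1: clipping under strict positivity.} I would write the clipped estimator's pseudo-outcome as the unclipped one plus a remainder. Let $\mathcal H_{t,\tau}^g=\min(\mathcal H_t^g,\tau_n)$. Applying the telescoping identity \eqref{eq:telescoping} to both the clipped and unclipped weights gives
\[
D_{g,\tau}-D_g=\sum_{t=0}^T(\mathcal H_{t,\tau}^g-\mathcal H_t^g)\{Q_{t+1}^g(H_{t+1})-Q_t^g(H_t)\}.
\]
The summand is nonzero only on the event $\{\mathcal H_t^g>\tau_n\}$.

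This lets me define $R_{\mathrm{clip},n}=P_0(D_{g,\tau}-D_g)$, evaluated at the cross-fitted nuisances, together with the empirical-process part $(\Pn-P_0)(D_{g,\tau}-D_g)$. The result then follows from Theorem~\ref{thm:remainder} once two facts are shown.

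\begin{enumerate}[label=(\roman*)]
\item \textbf{The bias term is negligible.} By Cauchy--Schwarz,
\[
|P_0(D_{g,\tau}-D_g)|\le\sum_t\norm{\mathcal H_t^g\,\ind\{\mathcal H_t^g>\tau_n\}}_2\,\norm{M_t}_2 .
\]
Under Assumption~\ref{ass:moments} the factor $\norm{M_t}_2$ is bounded. Strict positivity plus calibration (Assumption~\ref{ass:calibration}) bound the estimated weights by $c^{-2(T+1)}$. Hence once $\tau_n$ exceeds this bound the clipping is inactive, and the bias vanishes identically for large $n$.

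\item \textbf{The empirical-process term is negligible.} Cross-fitting gives conditional independence across folds. Chebyshev's inequality then yields
\[
(\Pn-P_0)(D_{g,\tau}-D_g)=O_{\Pp}\bigl(n^{-1/2}\norm{D_{g,\tau}-D_g}_2\bigr)=o_{\Pp}(n^{-1/2}),
\]
since the tail-truncated $L_2$ norm tends to zero by dominated convergence.
\end{enumerate}

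The stated condition that $P(|\mathcal H^{g,\star}_t|>\tau_n)\to0$ ``sufficiently fast'' is what is needed when only weaker moment control is available. In that case I would require $\E[(\mathcal H_t^{g,\star})^{2}\ind\{\mathcal H_t^{g,\star}>\tau_n\}]^{1/2}=o(n^{-1/2})$. I would also account for the gap between estimated and true weights on the clipping event by applying the Lemma~\ref{lem:projection-stability} bounds to the estimated weights.

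\emph{Part 2: the overlap target.} Write $\theta_\omega=N/D_\omega$ with $N=\E\{\omega(H_0)Q_0^{g,\star}(H_0)\}$ and $D_\omega=\E\omega(H_0)$. Note that $N$ is identified by Theorem~\ref{thm:identification} applied with $Y$ replaced by $\omega(H_0)Y$. Because $\omega(H_0)$ is a fixed, prespecified function of $H_0$, the recursion \eqref{eq:q-recursion} just multiplies through by $\omega(H_0)$.

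The EIF of $N$ is therefore
\[
\omega(H_0)\Bigl[\sum_t\mathcal H_t^{g,\star}M_t^{g,\star}+Q_0^{g,\star}(H_0)\Bigr]-N,
\]
and the EIF of $D_\omega$ is $\omega(H_0)-D_\omega$. The delta method (quotient rule) for pathwise derivatives then gives
\[
\phi_{\omega}^\star=\frac{1}{D_\omega}\Bigl\{\omega(H_0)\Bigl[\sum_t\mathcal H_t^{g,\star}M_t^{g,\star}+Q_0^{g,\star}(H_0)-\theta_\omega\Bigr]\Bigr\}.
\]

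\emph{Main obstacle.} I expect the hard step to be Part 1 when the clipping actually binds. That happens when estimated weights can exceed $\tau_n$ even though the true weights are bounded. There the clipping bias interacts with the nuisance error, and I would control it through the product $\norm{\widehat\Gamma_t-\Gamma_t^\star}_2\,\norm{\widehat Q_t-Q_t^\star}_2$, folding it into the remainder bound \eqref{eq:product-remainder}.
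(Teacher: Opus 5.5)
Your proposal is correct. Part~2 follows the paper's quotient-rule route, made explicit. The paper only writes $\phi_{\omega,g}=D^{-1}\phi_{N_g}-N_gD^{-2}\phi_D$ and describes $\phi_{N_g}$ in words. You instead obtain $\phi_{N_g}$ by rerunning Theorem~\ref{thm:identification} with outcome $\omega(H_0)Y$. This is valid because $H_0\subseteq H_t$ for every $t$, so each $Q_t$ is simply multiplied by $\omega(H_0)$. Using $N=\theta_\omega D_\omega$, you then reach the closed form $D_\omega^{-1}\omega(H_0)\{\sum_t\mathcal H_t^{g,\star}M_t^{g,\star}+Q_0^{g,\star}(H_0)-\theta_\omega\}$.

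Part~1 takes a different route. The paper stays at the true nuisances, writes $D_{\tau_n}-D=\sum_t(\mathcal H_{t,\tau_n}^{g,\star}-\mathcal H_t^{g,\star})M_t^{g,\star}$, and simply imposes $\E|D_{\tau_n}-D|=o(n^{-1/2})$ (or an $L_2$ analogue) as the meaning of ``sufficiently fast''. You work with the cross-fitted nuisances and split the difference into a bias term and an empirical-process term. You then observe that a fixed calibration floor makes the estimated weights uniformly bounded, so for large $n$ the clipping never binds and $R_{\mathrm{clip},n}\equiv 0$. This gives more than the paper's argument:
\begin{itemize}
\item it concerns the feasible estimator rather than the oracle map;
\item it does not depend on whether truncation is applied to cumulative weights (as in the paper's proof) or to increments (as in Algorithm~\ref{alg:drfdl});
\item it shows that under strict positivity the theorem's tail hypothesis is automatic, since $\mathcal H_t^{g,\star}\le c^{-2(T+1)}$ gives $P(\mathcal H_t^{g,\star}>\tau_n)=0$ eventually.
\end{itemize}

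Two small corrections. First, the bound on the estimated weights is $\varepsilon^{-2(T+1)}$, with $\varepsilon$ the calibration floor of Assumption~\ref{ass:calibration}, not $c^{-2(T+1)}$; strict positivity alone bounds only the true weights. Second, the argument requires that floor to be a constant, not one produced by the $\tau_n$-clipping itself.

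If the floor does come from the $\tau_n$-clipping, you are in your ``main obstacle'' case, which the paper does not treat either. There it helps that at $\eta^\star$ the truncation term has mean exactly zero. The weight difference is a function of $(H_t,A_t,R_t)$ supported on $\{A_t=g_t(H_t),R_t=1\}$, where $M_t^{g,\star}$ is conditionally centered. So any clipping bias does enter only through nuisance error, as you anticipate. However, the product constant is then of order $\tau_n$, because the clipped estimated weights are bounded only by $\tau_n$, and your rate condition has to absorb that factor.
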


Thus clipping is a regularization device unless an overlap estimand is defined in advance.  Replacing $\mathcal H_t^g$ by $\min(\mathcal H_t^g,\tau)$ is not, by itself, the canonical gradient of a new causal target.

\begin{drfdltheorem}[Robust aggregation as concentration, not Wald inference]\label{thm:robust}
Let $Z_i^g=\widetilde Y_{i0}^g(\eta^\star)$ be the oracle pseudo-outcome with $E Z_i^g=\theta(g)$ and $\Var(Z_i^g)\le v<\infty$.  Conditional on the cross-fitting splits, let $\widehat Z_i^g$ be the feasible out-of-fold pseudo-outcome and suppose $n^{-1}\sum_i|\widehat Z_i^g-Z_i^g|=\delta_n=o_{\Pp}(1)$.  With a Catoni scale of order $\sqrt{\log(1/\delta)/(nv)}$, the feasible Catoni aggregate obeys
\[
|\widehat\theta_C-\theta(g)|
=O_{\Pp}\{\sqrt{v\log(1/\delta)/n}\}+O_{\Pp}(\delta_n).
\]
If only a finite $(1+\kappa)$ moment is assumed, a median-of-means aggregate with blocks formed independently of the evaluation observations is consistent at the usual polynomial rate, up to the same feasible-pseudo-outcome discrepancy.
\end{drfdltheorem}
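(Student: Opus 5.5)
The plan is to split the feasible Catoni aggregate into an oracle part and a perturbation part. First, the oracle Catoni estimator $\widetilde\theta_C$ solves $\sum_i\psi\{\alpha(Z_i^g-\theta)\}=0$. Conditional on the cross-fitting splits, the oracle pseudo-outcomes $Z_i^g$ are i.i.d. with mean $\theta(g)$ and variance at most $v$. Catoni's deviation inequality \citep{catoni2012challenging} then applies directly. The key fact is $-\log(1-x+x^2/2)\le\psi(x)\le\log(1+x+x^2/2)$, so the moment generating function of $\sum_i\psi\{\alpha(Z_i-\theta)\}$ is controlled by $\{1+\alpha(\theta(g)-\theta)+\alpha^2[v+(\theta(g)-\theta)^2]/2\}^n$. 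Markov's inequality gives upper and lower envelopes $B_\pm(\theta)$ that are quadratic in $\theta$. Their roots bracket $\widetilde\theta_C$ with probability at least $1-2\delta$. With $\alpha=\sqrt{2\log(1/\delta)/(nv)}$ (up to constants, the scale in the statement) and $n$ large enough that the discriminant is positive, this yields $|\widetilde\theta_C-\theta(g)|\le C\sqrt{v\log(1/\delta)/n}$. Letting $\delta$ be fixed but arbitrary converts this into the $O_{\Pp}$ statement.

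Second, I would compare the feasible root $\widehat\theta_C$ with $\widetilde\theta_C$. Write $F_n(\theta)=n^{-1}\sum_i\psi\{\alpha(Z_i-\theta)\}$ and $\widehat F_n(\theta)=n^{-1}\sum_i\psi\{\alpha(\widehat Z_i-\theta)\}$. The function $\psi$ is nondecreasing and $1$-Lipschitz, since $|\psi'(x)|=|1+x|/(1+x+x^2/2)\le1$ for $x\ge0$, and symmetrically for $x<0$. Hence $\sup_\theta|\widehat F_n(\theta)-F_n(\theta)|\le\alpha\delta_n$. To turn this function-level bound into a root-level bound, I need a lower bound on the slope of $F_n$ near $\widetilde\theta_C$. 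On the high-probability event, $|\alpha(Z_i-\theta)|$ is small for most $i$, because Chebyshev gives $\alpha|Z_i-\theta(g)|=O_{\Pp}(n^{-1/2})$ on average. There $\psi'$ is close to one, so $-F_n'(\theta)\ge\alpha/2$ on a neighborhood of $\theta(g)$ of radius of order $1/\alpha$. Monotonicity of $\widehat F_n$ then confines its root to within $2\delta_n$ of $\widetilde\theta_C$, provided $\delta_n\lesssim1/\alpha$. This proviso holds because $\delta_n=o_{\Pp}(1)$ and $1/\alpha\to\infty$. Combining with the first step gives the displayed rate.

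Third, for median-of-means I would take blocks independent of the evaluation data and oracle block means $\bar Z_b$. A finite $(1+\kappa)$ moment yields, via the von Bahr--Esseen inequality, $P(|\bar Z_b-\theta|>\epsilon)\le1/4$ for $\epsilon\asymp(n/B)^{-\kappa/(1+\kappa)}$. A binomial tail bound then controls the median. The feasible block means differ from the oracle ones by at most block-averaged discrepancies. Since the median is $1$-Lipschitz in the sup norm of its arguments, and at least half of the blocks have discrepancy at most $2\delta_n$ by Markov's inequality, one obtains the same additive $O_{\Pp}(\delta_n)$ term.

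The main obstacle is the second step, namely the local slope lower bound for $F_n$ that turns the uniform $\alpha\delta_n$ perturbation into an $O(\delta_n)$ root perturbation. I would handle it by using $\psi'(x)\ge1-|x|$ for small $|x|$ and bounding $n^{-1}\sum_i\alpha|Z_i-\theta|$ by Cauchy--Schwarz with variance $v$. The feasible root can always be chosen in a monotone way because $\psi$ is nondecreasing, so even non-uniqueness of the root poses no problem.
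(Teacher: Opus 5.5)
Your proposal follows the paper's proof: an oracle Catoni deviation bound from the exponential-Markov envelopes, then transfer to the feasible root via $\sup_\theta|\widehat F_n(\theta)-F_n(\theta)|\le\alpha\delta_n$ plus a slope lower bound of order $\alpha$, then a block-wise argument for median-of-means.

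Your transfer step is more explicit than the paper's, which only asserts the derivative bound and invokes a ``Z-estimation expansion''. The bound $\psi'(x)\ge 1-|x|/2$, combined with $n^{-1}\sum_i|Z_i^g-\theta(g)|=O_{\Pp}(\sqrt v)$, does give $-F_n'\ge\alpha/4$ on a window of radius of order $1/\alpha$. Bracketing the root of the strictly decreasing $\widehat F_n$ is deterministic, so the dependence of the $\widehat Z_i^g$ across folds never enters.

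One small repair there: $\alpha$ depends on $\delta$, so you cannot let $\delta$ vary without changing the estimator. Instead, fix the tuning level $\delta_0$ and rerun Catoni's envelope argument at an arbitrary level $\delta'$ with the same $\alpha$. For large $n$ this still gives radius $C(\delta',\delta_0)\sqrt{v/n}$, which is what the $O_{\Pp}$ claim needs.

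The step that does not close as written is the median-of-means combination. Sup-norm Lipschitz continuity of the median is not usable here. The largest block discrepancy $d_b=m^{-1}\sum_{i\in b}|\widehat Z_i^g-Z_i^g|$ is not controlled by $\delta_n$; a single block can carry discrepancy up to $B\delta_n$.

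Your counting also falls short. You know at least $B/2$ blocks have $d_b\le 2\delta_n$, and some fraction $p>0$ of blocks are oracle-bad. Together these guarantee only $(1/2-p)B$ blocks that are good on both counts. That is not a majority, so the median is not pinned down.

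The repair is the same counting idea with different constants. For equal blocks $B^{-1}\sum_b d_b=\delta_n$, so fewer than $B/8$ blocks have $d_b>8\delta_n$. Take the per-block oracle failure probability to be at most $1/4$, with oracle block means independent because blocks are formed independently of the data. Hoeffding then gives fewer than $3B/8$ oracle-bad blocks with probability at least $1-e^{-B/32}$. Hence strictly more than $B/2$ feasible block means lie within $\epsilon+8\delta_n$ of $\theta(g)$, and so does their median.

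The paper's phrase ``all but a vanishing fraction'' is similarly loose. What is needed is a fixed fraction of bad blocks below one half. A vanishing fraction via Markov would force a shift of order $K\delta_n$ with $K\to\infty$, which is no longer $O_{\Pp}(\delta_n)$.
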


The condition on $\delta_n$ is an additional stability requirement for robust aggregation.  The result gives concentration for robust point estimation; it does not claim asymptotic normality, plug-in variance consistency, or bootstrap validity for Catoni or median-of-means aggregation.  Wald inference in Theorem~\ref{thm:an} is stated only for mean aggregation.

\begin{drfdlcorollary}[A sufficient rate condition for basis encoders]\label{cor:sieve}
Suppose the latent functional covariate has a Karhunen-Loeve expansion with eigenvalues decaying as $\lambda_k\lesssim k^{-2a}$, nuisance functions are smooth functions of the first $K_n$ scores and scalar histories, and scores are estimated from sparse noisy samples with mean integrated squared error $\epsilon_{n,K}^2$.  A spline or FPCA-sieve encoder satisfies Assumption~\ref{ass:representation} whenever
\[
\sum_t \{r_{Q,t}+K_n^{-s}+\epsilon_{n,K}\}\{r_{\Gamma,t}+K_n^{-s}+\epsilon_{n,K}\}=o_{\Pp}(n^{-1/2}).
\]
\end{drfdlcorollary}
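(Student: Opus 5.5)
The plan is to reduce Corollary~\ref{cor:sieve} to Assumption~\ref{ass:representation}. The key step is to bound the approximation terms $b_{Q,t}(\widehat\phi)$ and $b_{\Gamma,t}(\widehat\phi)$ for the sieve encoder by $C(K_n^{-s}+\epsilon_{n,K})$. Once that bound is in hand, the displayed condition dominates $\sum_t a_{Q,t}a_{\Gamma,t}$ up to a constant, and the conclusion follows directly. The stochastic errors $r_{Q,t}$ and $r_{\Gamma,t}$ stay untouched: they are defined relative to the best approximations $Q^\dagger_{t,\widehat\phi}$ and $\Gamma^\dagger_{t,\widehat\phi}$ in Assumption~\ref{ass:rates}, so they appear unchanged in the target display.

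First I will fix the encoder $S_t=e_{\widehat\phi}(H_t)$. It collects the scalar histories together with the estimated scores $\widehat\zeta_{s,1:K_n}$, $s\le t$, obtained from the sparse point clouds (for example by PACE-type conditional expectations fitted on the training fold). I will compare it with the oracle truncated state $S_t^{K}$, which uses the true first $K_n$ scores $\zeta_{s,1:K_n}$. The error then splits by the triangle inequality through an intermediate function $f_t(S_t^K)$:
\[
\norm{Q_t^{g,\star}(H_t)-q(S_t)}_2\le \norm{Q_t^{g,\star}(H_t)-f_t(S_t^K)}_2+\norm{f_t(S_t^K)-f_t(\widehat S_t)}_2+\norm{f_t(\widehat S_t)-q(\widehat S_t)}_2 .
\]

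The three terms are handled as follows.

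\emph{Truncation.} The hypothesis says the nuisances are smooth functions of the first $K_n$ scores and scalar histories. I read this as $Q_t^{g,\star}=F_t(\text{scalars},\zeta_{1:\infty})$ with $F_t$ Lipschitz/Hölder in the score sequence. The truncation error is then bounded by the tail $\big(\sum_{k>K_n}\lambda_k\big)^{1/2}\lesssim K_n^{-a+1/2}$, or, in the sieve-approximation sense, by $K_n^{-s}$ once the smoothness index $s$ absorbs both eigenvalue decay and the function-class approximation order. I will state this identification of $s$ explicitly as the interpretive assumption.

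\emph{Score-estimation error.} Because $f_t$ is Lipschitz in the scores, the second term is at most $L\,\norm{\widehat\zeta-\zeta}_2\lesssim\epsilon_{n,K}$. This uses the mean integrated squared error bound and Parseval, which give $\sum_{k\le K_n}E(\widehat\zeta_k-\zeta_k)^2\le\epsilon_{n,K}^2$. Cross-fitting keeps the score estimator independent of the evaluation fold.

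\emph{Class approximation.} The third term is taken inside the sieve class $\Qcal_t$ (splines in $K_n$ scores), and is again $\lesssim K_n^{-s}$.

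The same three-term argument applies to $\Gamma_t^{g,\star}=(\pi_t,\rho_t)$. I will use logit-scale smoothness with positivity (Assumption~\ref{ass:positivity}), so the link is Lipschitz on the relevant range.

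Summing over the finitely many $t$ with fixed $T$ gives $a_{Q,t}\le r_{Q,t}+C(K_n^{-s}+\epsilon_{n,K})$, and likewise for $a_{\Gamma,t}$. Hence $\sum_t a_{Q,t}a_{\Gamma,t}\le C'\sum_t\{r_{Q,t}+K_n^{-s}+\epsilon_{n,K}\}\{r_{\Gamma,t}+K_n^{-s}+\epsilon_{n,K}\}=o_{\Pp}(n^{-1/2})$, which is Assumption~\ref{ass:representation}.

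I expect the main obstacle to be the score-estimation step. Sparse-design score estimates are conditional expectations, not the true scores, so $\widehat\zeta_k-\zeta_k$ does not vanish per unit and $\epsilon_{n,K}$ does not shrink without dense sampling. The honest fix is to apply the Lipschitz bound to whatever MISE the hypothesis grants, and to note that the corollary is only informative when $\epsilon_{n,K}\to0$. An alternative is to redefine the nuisance targets through the observed-history conditional scores, which is consistent with Assumption~\ref{ass:observed-history}. In that case $\epsilon_{n,K}$ measures only the estimation error of the eigenbasis and mean and covariance functions. I would adopt this second reading, since it matches the observed-history estimand.
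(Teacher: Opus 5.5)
Your proposal is correct and follows essentially the paper's route: bound $b_{Q,t}(\widehat\phi)$ and $b_{\Gamma,t}(\widehat\phi)$ by a truncation term $O(K_n^{-s})$ plus a Lipschitz score-estimation term $O(\epsilon_{n,K})$, leave $r_{Q,t}$ and $r_{\Gamma,t}$ untouched, and substitute into Assumption~\ref{ass:representation}. Your separate class-approximation term corresponds to approximation within the nuisance class, which the paper folds into its $O(K_n^{-s})$ truncation step. Your observed-history reading of $\epsilon_{n,K}$ matches the paper's remark that it is the conditional error of the chosen encoder given the observed sampling design.
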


This corollary is motivated by sparse functional principal component theory \citep{yao2005functional} and functional regression \citep{morris2015functional}.  It gives one concrete sieve path for verifying the high-level representation condition, not a primitive rate theorem for every neural encoder.

\paragraph{Two extensions.}
Fixed-state efficiency and growing-horizon inference are useful benchmarks but are not part of the main theorem sequence.  If a sufficient state is fixed a priori or learned on an auxiliary sample and the target regime is measurable with respect to that state, standard one-step or TMLE arguments can yield the observed-history efficiency bound under the same product-rate conditions \citep{vanderlaan2012targeted}.  This is a fixed-state benchmark, not a claim that an arbitrary data-adaptive neural representation is automatically efficient.  Similarly, growing-horizon asymptotic linearity can be obtained by replacing the fixed-$T$ product-rate condition with a summable condition over $t\le T_n$ and by imposing a triangular-array central limit theorem for the EIF scores.  These two cases are discussed as future extensions rather than central results.

\section{Simulation Study}\label{sec:simulation}

The simulation study is a set of stress tests rather than a ranking contest over every possible longitudinal estimator.  The data-generating mechanisms create settings in which an estimand-targeted representation should matter: nonlinear feedback, irregular functional measurement, heavy-tailed outcomes, weak overlap, informative sampling, and high-dimensional functional structure.  The scenarios ask four questions:
\begin{itemize}[leftmargin=1.8em]
\item Does an observed-history functional state help when confounding is carried by low-variance or high-dimensional functional features?
\item Does the EIF-based recursion stabilize estimation relative to pure weighting or prediction-style baselines?
\item Does bounded-influence final aggregation reduce extreme pseudo-outcome errors under heavy tails and weak support?
\item Do overlap and coverage diagnostics separate poor point estimation from conservative intervals caused by unstable weights?
\end{itemize}
The goal is to test the workflow under controlled failures of simple adjustment.  It is not a full factorial ablation over every encoder, validation loss, and aggregation rule.

\subsection{Design and target regimes}

The main simulation uses $n=400$ subjects, $T+1=6$ decision times, 150 Monte Carlo replications, and seed 20260503.  Scenario G uses $K^\star=12$ latent scores; all other scenarios use $K^\star=6$.  Population truths are computed from independent intervention samples of size $60{,}000$ under each policy.

For subject $i$ and time $t=0,\ldots,5$, the latent functional covariate is
\[
X_{it}(u)=\sum_{k=1}^{K^\star}\alpha_{itk}\psi_k(u),\qquad u\in[0,1],
\]
where $\psi_k$ are Fourier basis functions.  Let $\sigma_k=0.80^{k-1}$.  The score process satisfies
\[
\alpha_{i0k}\sim N(0,\sigma_k^2),\qquad
\alpha_{itk}=0.55\alpha_{i,t-1,k}+\{1-0.55^2\}^{1/2}\varepsilon_{itk},\quad
\varepsilon_{itk}\sim N(0,\sigma_k^2).
\]
Baseline covariates are $B_{i1}\sim N(0,1)$, $B_{i2}\sim\mathrm{Bernoulli}(0.48)$, and $B_{i3}\sim N(0,1)$.  We set $A_{i,-1}=0$.  The baseline functional signal entering the scalar history is
\[
f_{it}=0.60\alpha_{it1}-0.35\alpha_{it2}+0.25\sin(\alpha_{it3}),
\]
with scenario-specific additions stated below.  The scalar pre-treatment history is generated as
\begin{align*}
L_{it0}&=f_{it}+0.25B_{i1}+0.15A_{i,t-1}+\epsilon_{it0}, & \epsilon_{it0}&\sim N(0,0.35^2),\\
L_{it1}&=0.65\tanh(\alpha_{it2})+0.25B_{i2}+0.12t+\epsilon_{it1}, & \epsilon_{it1}&\sim N(0,0.25^2),\\
L_{it2}&=\cos(\alpha_{it3})+0.10A_{i,t-1}+\epsilon_{it2}, & \epsilon_{it2}&\sim N(0,0.20^2),\\
L_{it3}&=t/5 .
\end{align*}
The irregular measurement count is $m_{it}=2+\mathrm{Poisson}\{\max(\lambda_{it},0.5)\}$, where $\lambda_{it}=12\,\mathrm{expit}(0.10+0.25L_{it0})+2$ except in Scenario F, where $\lambda_{it}=12\,\mathrm{expit}(0.25+0.65L_{it0})+2$.  Observed point-cloud summaries are the noisy mean, minimum, maximum, standard deviation, count, normalized count, and severity-weighted count derived from these measurements.  For example,
\[
\bar X_{it}=\alpha_{it1}+0.10\alpha_{it2}+\zeta_{it},\qquad
\zeta_{it}\sim N(0,0.15^2/m_{it}).
\]

Treatment follows
\[
\logit P(A_{it}=1\mid H_{it})=-0.25+s\{0.45L_{it0}+0.30\tanh(L_{it1})+0.35A_{i,t-1}+0.20B_{i1}+0.18\bar X_{it}\},
\]
where $s=1$ except in Scenario D, where $s=2.25$.  Scenarios B and G add $s\{0.30L_{it0}\bar X_{it}+0.20\sin(\alpha_{it3})\}$, and Scenario F adds $0.35(m_{it}-\bar m_t)/\mathrm{sd}(m_t)$.  The outcome is
\[
Y_i=2+0.25B_{i1}-0.10B_{i2}
+\sum_{t=0}^5\{0.32L_{it0}-0.20L_{it1}+0.10\bar X_{it}+\tau_{it}A_{it}-0.04A_{i,t-1}\}+e_i,
\]
where $\tau_{it}=0.65+0.18L_{it0}-0.12A_{i,t-1}$, with an added $0.15\tanh(\alpha_{it3})$ in Scenarios B and G.  Scenario C replaces the normal outcome error by $t_3/\sqrt{3}$ plus 3\% contamination from $N(0,9^2)$.

The target contrast is
\[
\theta(g_{\mathrm{early}})-\theta(g_{\mathrm{delayed}}),
\]
where $g_{\mathrm{early}}$ treats when $L_{it0}>0.05$ or $L_{it1}>0.85$, and $g_{\mathrm{delayed}}$ treats when either $L_{it0}>0.05$ persists from the previous time point or $L_{it1}>1.35$.  The population contrasts are 1.272 (A), 1.267 (B), 1.273 (C), 1.272 (D), 1.254 (F), and 1.253 (G).

\subsection{Scenarios and estimators}

The six main scenarios are:
\begin{itemize}[leftmargin=1.8em]
\item Scenario A: baseline setting with $K^\star=6$, normal errors, treatment strength $s=1$, and severity-dependent but weakly informative measurement intensity.
\item Scenario B: Scenario A plus nonlinear functional interactions in $f_{it}$, treatment, and treatment effect; specifically $0.25\alpha_{it1}\alpha_{it4}+0.20\{\alpha_{it3}^2-E(\alpha_{it3}^2)\}$ enters $f_{it}$.
\item Scenario C: Scenario A with heavy-tailed and contaminated outcome errors.
\item Scenario D: Scenario A with treatment strength $s=2.25$, producing near-positivity stress.
\item Scenario F: Scenario A with strongly severity-dependent measurement intensity and a standardized measurement-count term in the treatment model.
\item Scenario G: Scenario B with $K^\star=12$ and additional low-variance signal $0.35\tanh(\alpha_{it9})-0.25\alpha_{it,11}$ in $f_{it}$.
\end{itemize}

The comparison set includes proposed DR-FRL variants, semiparametric baselines, and sequence-style prediction baselines.  The main tables and heatmaps show a readable subset, while the CSV files and ranking table retain the full method set.  iTMLE is included because it is the robust-aggregation analogue of the low-dimensional targeted-learning baseline in this CPU implementation.  This helps separate final robust aggregation from richer observed-history representation, although it is not a complete component-ablation study.

The longitudinal targeted-learning benchmark follows \cite{lendle2017ltmle}.  The sequence-style baselines are CPU-reproducible proxies for recurrent and attention-based counterfactual prediction approaches such as \cite{melnychuk2022causaltransformer}; they are not full GPU reproductions of the original architectures.  The latent-score SDR diagnostic receives the true latent functional scores, but still estimates finite-sample nuisance functions and therefore is not an oracle upper bound.  Coverage entries for Catoni aggregation and sequence-style baselines are calibration diagnostics; the Wald theorem applies to DR-FRL-Mean.

\subsection{Numerical results}

Tables~\ref{tab:sim-abc} and~\ref{tab:sim-dfg} give the main numerical results.  Each entry reports bias, RMSE, and diagnostic coverage.  DR-FRL-Mean is the version covered by the Wald theorem; DR-FRL-Catoni uses the same pseudo-outcomes but replaces the final mean by bounded-influence aggregation.  The comparison separates two effects.  Functional states matter most in Scenarios F and G, where measurement intensity or low-variance high-dimensional scores carry confounding.  Catoni aggregation mainly reduces point error under heavy tails and weak support.  IPTW is least stable because the dynamic policy creates sparse treatment-path matches and unstable cumulative weights; high nominal coverage in those rows reflects wide intervals, not good point estimation.

\begin{table}[H]
\centering
\caption{Results for Scenarios A--C ($n=400$). Entries are bias/RMSE/diagnostic coverage.}
\label{tab:sim-abc}
\scalebox{0.8}{
\begin{tabular}{lccc}
\toprule
Method & A: base & B: nonlinear & C: heavy tail \\
\midrule
DR-FRL-Mean & -0.19/0.24/0.82 & -0.19/0.24/0.85 & -0.20/0.31/0.93 \\
DR-FRL-Catoni & -0.14/0.19/0.91 & -0.13/0.19/0.91 & -0.16/0.26/0.97 \\
LTMLE & -0.22/0.25/0.88 & -0.20/0.24/0.94 & -0.20/0.30/0.93 \\
iTMLE & -0.19/0.22/0.93 & -0.19/0.22/0.97 & -0.17/0.24/0.97 \\
G-Net-style & -0.25/0.26/0.03 & -0.26/0.27/0.01 & -0.23/0.29/0.09 \\
Transformer-style & -0.21/0.23/0.05 & -0.20/0.22/0.07 & -0.20/0.28/0.14 \\
RMSN-style & -0.24/0.26/0.03 & -0.24/0.26/0.03 & -0.22/0.28/0.12 \\
Latent-score SDR & -0.22/0.26/0.79 & -0.23/0.27/0.73 & -0.20/0.30/0.94 \\
IPTW & 1.01/1.46/0.98 & 1.18/1.61/0.95 & 1.03/1.57/0.97 \\
G-computation & -0.23/0.25/0.03 & -0.24/0.25/0.03 & -0.22/0.28/0.12 \\
\bottomrule
\end{tabular}}
\end{table}

\begin{table}[H]
\centering
\caption{Results for Scenarios D--G ($n=400$). Entries are bias/RMSE/diagnostic coverage.}
\label{tab:sim-dfg}
\scalebox{0.8}{
\begin{tabular}{lccc}
\toprule
Method & D: near positivity & F: informative sampling & G: high-dimensional functional \\
\midrule
DR-FRL-Mean & -0.19/0.24/0.95 & -0.17/0.22/0.91 & -0.20/0.24/0.87 \\
DR-FRL-Catoni & -0.20/0.24/0.96 & -0.12/0.17/0.96 & -0.14/0.19/0.95 \\
LTMLE & -0.21/0.25/0.98 & -0.17/0.22/0.97 & -0.21/0.25/0.94 \\
iTMLE & -0.31/0.33/0.99 & -0.16/0.20/0.99 & -0.20/0.23/0.97 \\
G-Net-style & -0.28/0.30/0.01 & -0.21/0.23/0.05 & -0.26/0.28/0.01 \\
Transformer-style & -0.22/0.24/0.04 & -0.18/0.21/0.13 & -0.20/0.23/0.07 \\
RMSN-style & -0.28/0.30/0.01 & -0.20/0.22/0.05 & -0.25/0.27/0.01 \\
Latent-score SDR & -0.25/0.28/0.85 & -0.19/0.24/0.89 & -0.24/0.26/0.77 \\
IPTW & 1.72/1.88/0.78 & 1.34/1.68/0.93 & 0.98/1.42/0.99 \\
G-computation & -0.28/0.29/0.01 & -0.20/0.22/0.03 & -0.25/0.26/0.01 \\
\bottomrule
\end{tabular}}
\end{table}

Several augmented estimators exhibit negative finite-sample bias.  This pattern is consistent with the design of the stress test rather than with a failure of the EIF identity: the dynamic regimes have limited empirical support in parts of the history space, nuisance learners are regularized, and clipping changes the effective estimating equation when treatment-path probabilities are small.  In addition, the ``latent-score'' row supplies true functional scores only; it does not supply oracle treatment, censoring, or outcome nuisance functions.  The simulations therefore evaluate implementable pipelines under weak support and regularization, while the large-sample theory concerns the observed-history target under the stated product-rate and positivity conditions.

Table~\ref{tab:component-audit} summarizes the implemented component audit.  Values are averages over the six scenarios; lower RMSE, lower absolute bias, and smaller $|\mathrm{coverage}-0.95|$ are better.

\begin{table}[H]
\centering
\caption{Component-audit summary across the six simulation scenarios.}
\label{tab:component-audit}
\scalebox{0.8}{
\setlength{\tabcolsep}{5pt}
\renewcommand{\arraystretch}{1.08}
\begin{tabular}{>{\raggedright\arraybackslash}p{0.44\textwidth}rrrr}
\toprule
Audit row & RMSE & $|\mathrm{bias}|$ & $|\mathrm{cov.}-.95|$ & Cov. \\
\midrule
Low-dimensional SDR (LTMLE/linear SDR) & 0.251 & 0.201 & 0.026 & 0.940 \\
Low-dimensional SDR plus robust aggregation (iTMLE) & 0.239 & 0.202 & 0.026 & 0.968 \\
Latent-score SDR diagnostic & 0.267 & 0.220 & 0.120 & 0.830 \\
DR-FRL with mean aggregation & 0.248 & 0.190 & 0.063 & 0.888 \\
DR-FRL with Catoni aggregation & 0.208 & 0.147 & 0.021 & 0.941 \\
\bottomrule
\end{tabular}}
\end{table}

\FloatBarrier

The table should be read as an audit, not as a new factorial experiment.  Low-dimensional targeted learning is competitive when scalar summaries carry the relevant history.  iTMLE shows that robust final aggregation can help, but it cannot add missing functional information.  DR-FRL-Mean has similar average RMSE to LTMLE but lower average absolute bias.  DR-FRL-Catoni has the best point accuracy, with average RMSE 0.208 and average absolute bias 0.147, because the functional state, orthogonal recursion, and bounded-influence final aggregation work together.

\FloatBarrier

Figure~\ref{fig:rmse-heatmap} gives a compact visual summary of RMSE.  The heatmap emphasizes three features.  First, pure weighting is fragile even with clipping.  Second, the sequence-style baselines are competitive in benign settings but do not dominate under heavy tails or informative measurement.  Third, the Catoni version of DR-FRL is consistently robust because it combines an orthogonal pseudo-outcome with a bounded-influence final aggregation step.

\begin{figure}[H]
\centering
\includegraphics[width=0.85\textwidth]{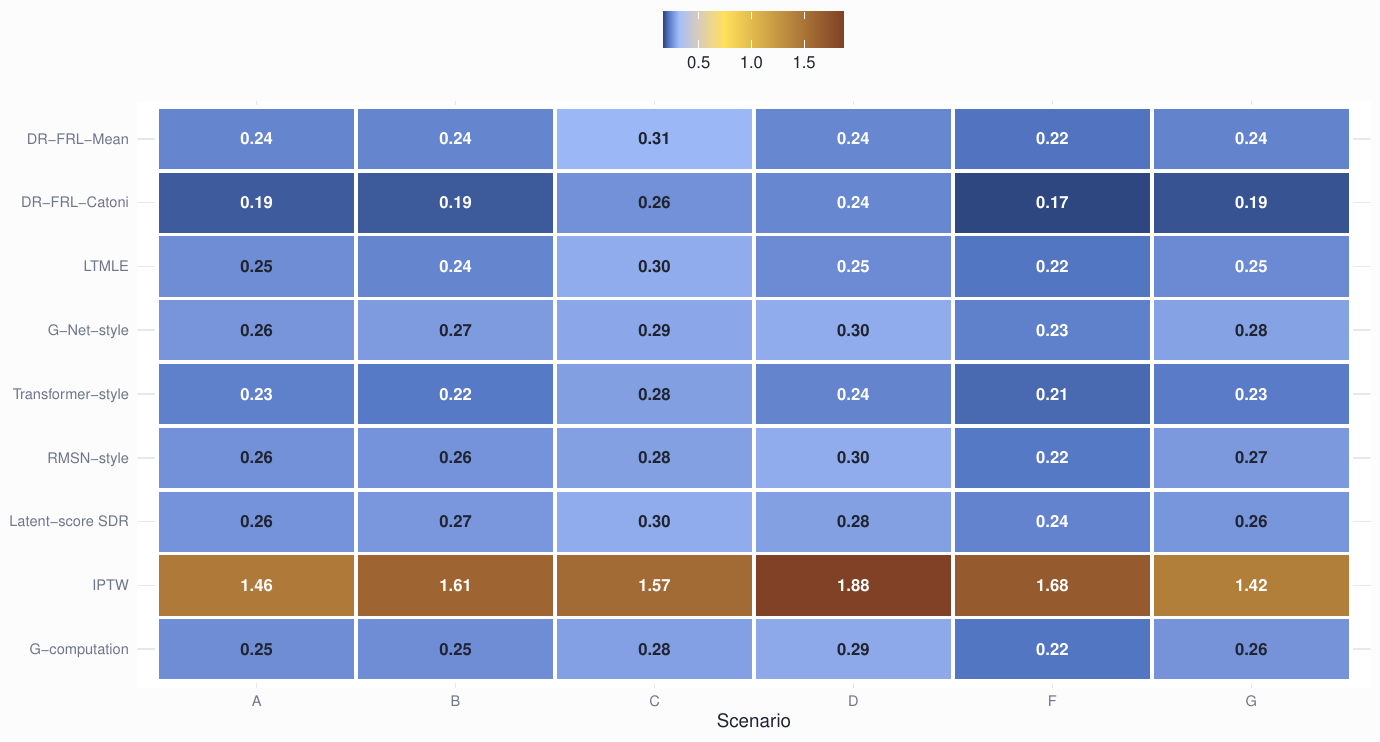}
\caption{RMSE heatmap across the six simulation scenarios.  Lower values are better.}
\label{fig:rmse-heatmap}
\end{figure}

Figure~\ref{fig:coverage-heatmap} reports empirical coverage for the simulation intervals.  The sequence-style baselines are prediction methods rather than semiparametric EIF estimators, so their intervals are best read as calibration diagnostics.  DR-FRL-Catoni gives the strongest joint performance on RMSE and diagnostic coverage among the proposed variants, but Catoni intervals still require additional robust-inference conditions for formal interpretation.  Scenario G is hardest because the nuisance feature space is high-dimensional relative to $n=400$.

\begin{figure}[h]
\centering
\includegraphics[width=0.85\textwidth]{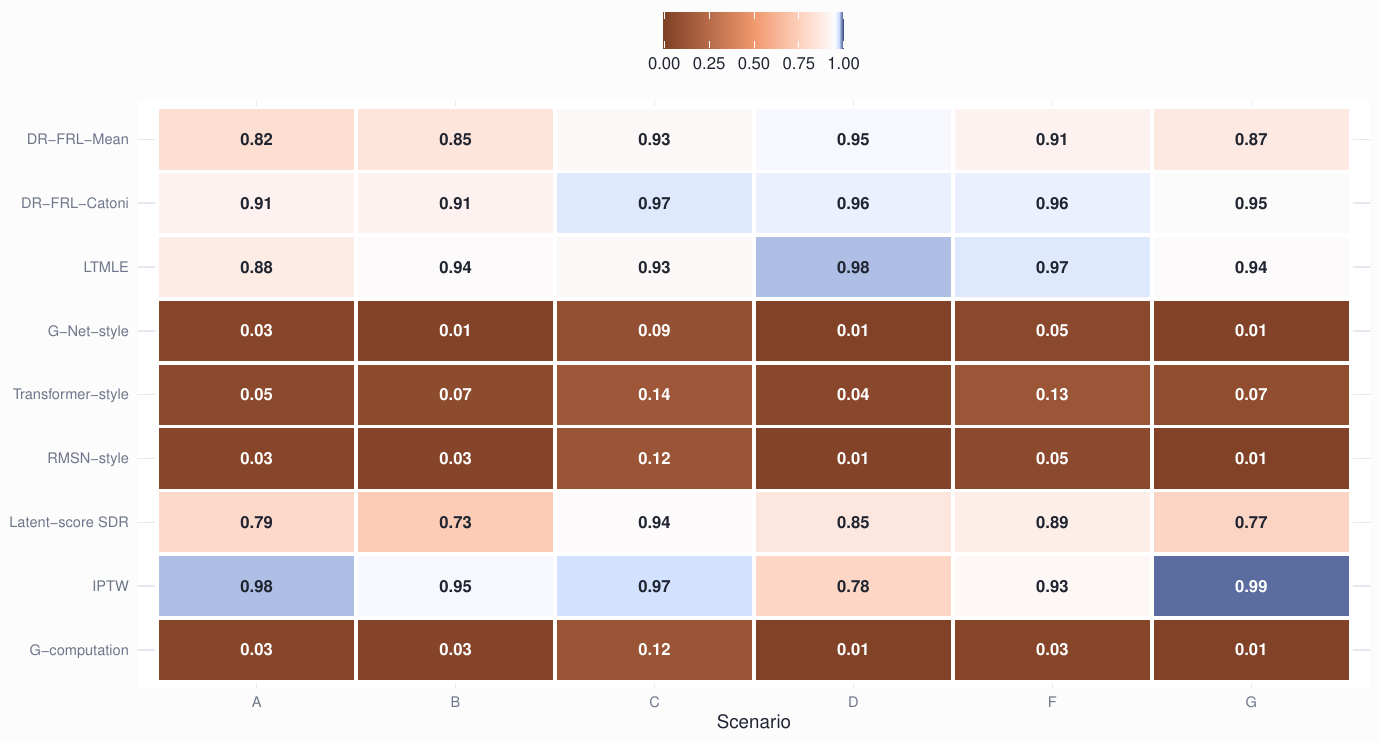}
\caption{Diagnostic empirical coverage across the six simulation scenarios.  The target is 0.95.}
\label{fig:coverage-heatmap}
\end{figure}

\FloatBarrier

\subsection{Robustness and sample size}

Figure~\ref{fig:heavy-tail} focuses on Scenario C. The boxplot shows the full replication-level estimation-error distribution rather than only a summary statistic.  

\begin{figure}[H]
\centering
\includegraphics[width=0.8\textwidth]{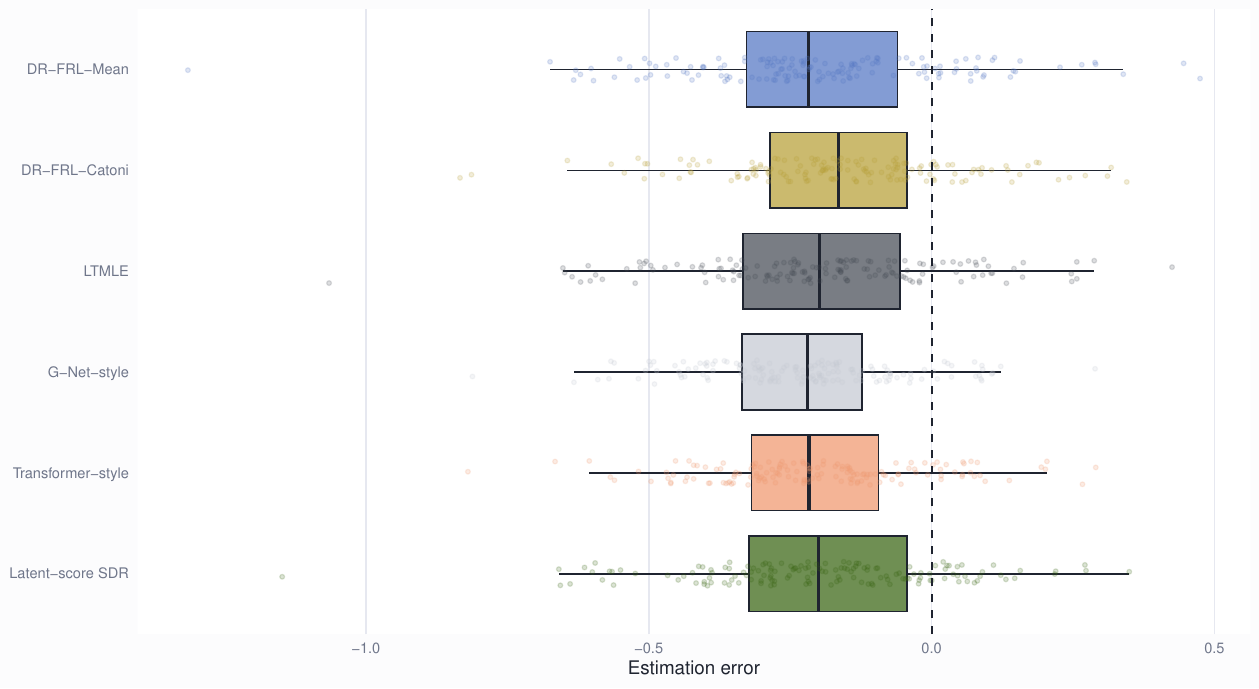}
\caption{Replication estimation errors in Scenario C.}
\label{fig:heavy-tail}
\end{figure}

DR-FRL-Catoni shrinks extreme errors relative to the non-Catoni version, and the interquartile range is smaller than that of LTMLE and the latent-score SDR diagnostic.  This supports the theoretical motivation for robust aggregation: when the pseudo-outcome itself is heavy-tailed, robustifying only the outcome-regression loss is not enough; the final aggregation must also be bounded-influence.

Figure~\ref{fig:ncurve} shows sample-size diagnostics for the base and near-positivity scenarios as a matrix rather than a trend plot.  The top row reports RMSE, and the bottom row reports interval calibration.  Coverage in the Catoni and sequence-style rows remains diagnostic rather than a formal Wald guarantee.  RMSE generally decreases with $n$, but near-positivity remains difficult because increasing sample size helps most when it also improves empirical support for the target regime.

\begin{figure}[h]
\centering
\includegraphics[width=0.9\textwidth]{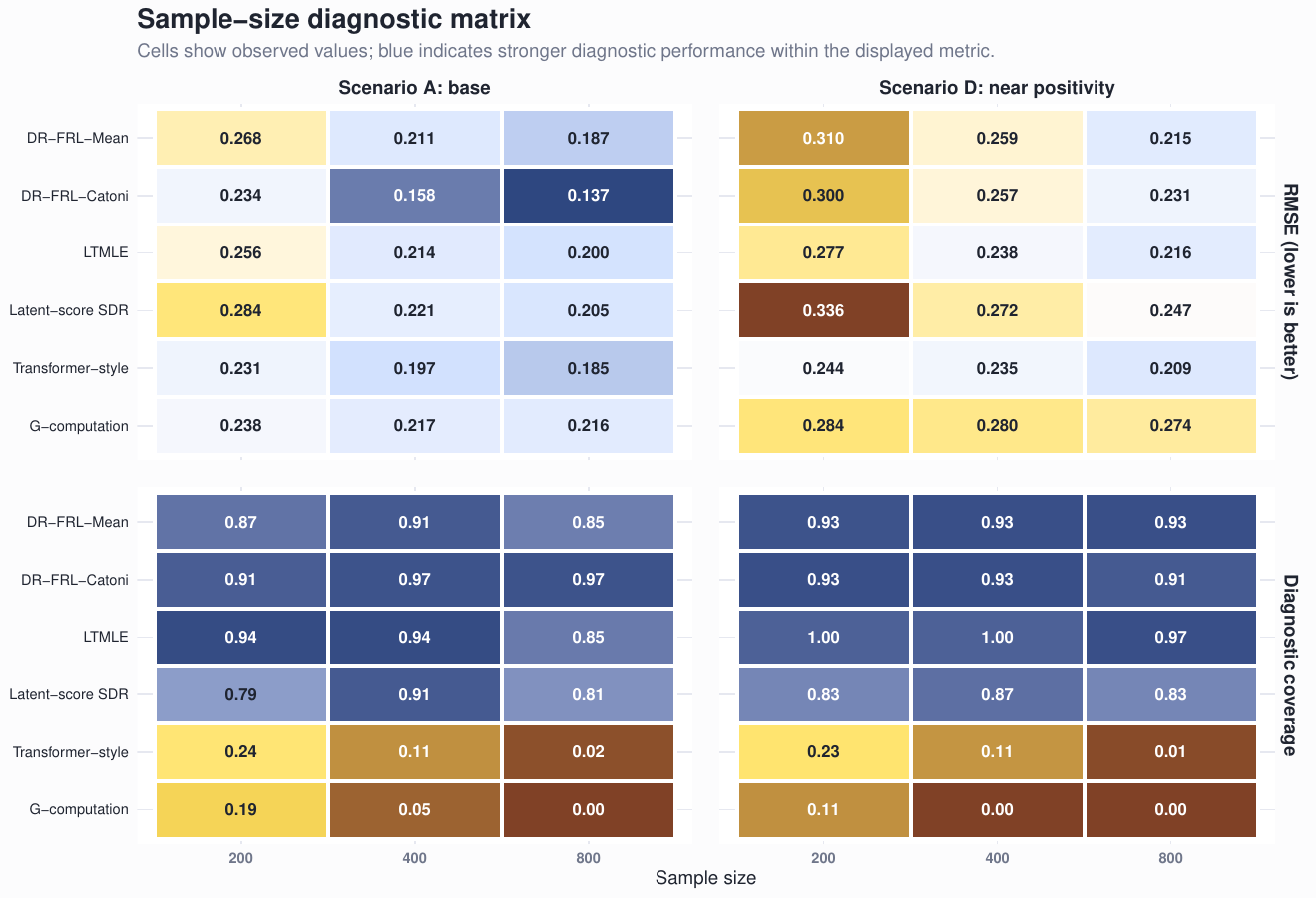}
\caption{Sample-size diagnostics for Scenarios A and D.}
\label{fig:ncurve}
\end{figure}

\begin{table}[H]
\centering
\caption{Sample-size diagnostics. Entries are RMSE/coverage/minimum effective sample size.}
\label{tab:sim-mean-ncurve}
\scalebox{0.9}{
\begin{tabular}{llccc}
\toprule
Scenario & Method & $n=200$ & $n=400$ & $n=800$ \\
\midrule
A: base & DR-FRL-Mean & 0.268/0.867/7.9 & 0.211/0.913/16.1 & 0.187/0.847/34.3 \\
A: base & LTMLE & 0.256/0.940/7.9 & 0.214/0.940/16.2 & 0.200/0.853/34.4 \\
A: base & Latent-score SDR & 0.284/0.793/7.8 & 0.221/0.913/16.1 & 0.205/0.813/34.2 \\
D: near positivity & DR-FRL-Mean & 0.310/0.927/13.9 & 0.259/0.933/27.9 & 0.215/0.933/56.9 \\
D: near positivity & LTMLE & 0.277/1.000/14.3 & 0.238/1.000/28.6 & 0.216/0.967/58.3 \\
D: near positivity & Latent-score SDR & 0.336/0.833/13.1 & 0.272/0.873/26.3 & 0.247/0.833/53.6 \\
\bottomrule
\end{tabular}}
\end{table}
Overall, Table~\ref{tab:sim-mean-ncurve} reinforces the diagnostic message of Figure~\ref{fig:ncurve}: increasing sample size improves point accuracy mainly when it also improves empirical support for the target regime, whereas interval calibration remains sensitive to finite-sample regularization and practical positivity.
\FloatBarrier

\subsection{Diagnostics}

Table~\ref{tab:sim-diagnostics} reports effective sample size, clipping, average standard error, and empirical standard deviation averaged across the six main scenarios.  The diagnostic table confirms that DR-FRL-Mean and LTMLE operate in the same effective-sample-size regime, whereas IPTW has larger pseudo-outcome variance despite high nominal coverage.  Table~\ref{tab:sim-rank} ranks the full simulated method set by RMSE, absolute bias, and closeness of diagnostic coverage to 0.95.  The average rank is not a causal estimand or formal inference score; it is a compact stress-test summary, and its coverage component is exploratory for non-EIF baselines and robust aggregation.  DR-FRL-Catoni ranks first because it performs well simultaneously on point accuracy and interval stability.

\begin{table}[H]
\centering
\caption{Average diagnostics across the six main scenarios. ESS is the mean of the minimum effective sample size across the two dynamic regimes.}
\label{tab:sim-diagnostics}
\scalebox{0.8}{
\begin{tabular}{lrrrr}
\toprule
Method & ESS & Clip\% & ASE & ESD \\
\midrule
DR-FRL-Mean & 18.2 & 6.0 & 0.20 & 0.16 \\
DR-FRL-Catoni & 18.2 & 6.0 & 0.20 & 0.15 \\
LTMLE & 18.3 & 7.0 & 0.24 & 0.15 \\
Latent-score SDR & 17.9 & 5.0 & 0.20 & 0.15 \\
IPTW & 17.8 & 5.7 & 1.65 & 1.03 \\
\bottomrule
\end{tabular}}
\end{table}

\begin{table}[H]
\centering
\caption{Average diagnostic rank across scenarios. Smaller is better.}
\label{tab:sim-rank}
\scalebox{0.8}{
\begin{tabular}{lrrrr}
\toprule
Method & RMSE rank & Bias rank & Coverage rank & Avg. \\
\midrule
DR-FRL-Catoni & 1.50 & 1.17 & 2.92 & 1.86 \\
iTMLE & 3.67 & 3.67 & 4.08 & 3.81 \\
DR-FRL-Mean & 5.50 & 3.67 & 4.67 & 4.61 \\
CRN-style & 3.00 & 3.67 & 8.17 & 4.94 \\
LTMLE & 7.00 & 5.50 & 3.08 & 5.19 \\
Linear SDR & 7.00 & 5.50 & 3.08 & 5.19 \\
Transformer-style & 3.33 & 5.83 & 8.83 & 6.00 \\
Latent-score SDR & 9.17 & 7.50 & 5.83 & 7.50 \\
G-computation & 7.33 & 8.83 & 11.17 & 9.11 \\
IPTW & 12.00 & 12.00 & 4.33 & 9.44 \\
RMSN-style & 8.67 & 10.00 & 10.58 & 9.75 \\
G-Net-style & 9.83 & 10.67 & 11.25 & 10.58 \\
\bottomrule
\end{tabular}}
\end{table}

Overall, the simulations support three conclusions.  First, robust aggregation improves point estimation when functional histories and outcomes produce heavy-tailed pseudo-outcomes.  Second, learned observed-history representations help most when functional signal is high-dimensional or measurement is informative.  Third, overlap diagnostics are essential because weak support makes the distinction between full-population, clipped, and overlap targets practically important.  The population truth is the unclipped intervention value; fixed clipping is therefore finite-sample regularization and a coverage diagnostic, not a change in the reported causal target.

\FloatBarrier

\section{VitalDB Functional-Adjustment and Diagnostic Audit}\label{sec:realdata}

The real-data analysis is a diagnostic audit of functional adjustment, not a claim that an observational perioperative database can by itself validate a treatment policy.  We use VitalDB, the open perioperative database described by \cite{lee2022vitaldb}, which contains 6388 surgical cases, 73 clinical parameters, 34 laboratory time-series parameters, and 486,451 waveform or numeric tracks.  Open physiological databases have made reproducible critical-care and perioperative modeling more practical, but they also expose the irregular-measurement problems that motivate functional point-cloud adjustment \citep{goldberger2000physionet}.  We combine the clinical-information table, the case-level track list, and the laboratory time-series table to construct irregular pre-treatment laboratory point clouds for perioperative risk adjustment.

The application is deliberately narrower than the longitudinal theory.  It is a point-treatment functional-covariate analysis of intraoperative vasopressor exposure and postoperative ICU admission.  Pre-treatment laboratory measurements enter the estimators as variable-size point clouds of observation time, marker, and value.  Intraoperative arterial pressure and heart-rate waveforms are not used to define a sequential target trial here, because treatment timing and waveform-derived histories require a separate track-level validation step.  The purpose is therefore to audit whether irregular pre-treatment functional histories can be incorporated into orthogonal adjustment, whether the resulting contrast has empirical support, and whether the functional representation changes the conclusion relative to transparent scalar summaries.

\subsection{Data and cohort}

The cohort contains all 6388 VitalDB surgical cases with available clinical information and nonzero operation duration.  The clinical table provides demographics, surgery and anesthesia descriptors, preoperative laboratory snapshots, intraoperative drug summaries, and postoperative outcomes.  The track-list table records case-specific monitoring availability, including arterial MAP, non-invasive MAP, heart rate, pulse oximetry, BIS, and drug-rate tracks.  The laboratory time-series table provides irregular measurements before anesthesia induction.

The main functional covariate is the pre-treatment laboratory point cloud $O_i^X=\{(d_{i\ell}, m_{i\ell}, x_{i\ell}): -7\le d_{i\ell}\le 0\}$, where $d_{i\ell}$ is days from anesthesia induction, $m_{i\ell}$ is the marker name, and $x_{i\ell}$ is the measured value.  The markers are hemoglobin, platelet count, creatinine, blood urea nitrogen, albumin, glucose, sodium, potassium, AST, and ALT.  The representation therefore uses irregular pre-treatment values rather than only measurement-availability indicators.  Missingness is handled through model-native median imputation, missing-category handling for categorical variables, and explicit measurement-count features; measurement intensity is retained as part of the observed history.

\begin{table}[H]
\centering
\caption{VitalDB cohort summary with irregular laboratory point clouds and track-availability diagnostics.}
\label{tab:vitaldb-functional-cohort}
\small
\begin{tabular}{lr}
\toprule
Quantity & Value \\
\midrule
Analyzed cases & 6388 \\
Any vasopressor prevalence & 0.543 \\
Phenylephrine prevalence & 0.132 \\
Ephedrine prevalence & 0.503 \\
Postoperative ICU admission & 0.188 \\
ICU stay $>$1 day & 0.061 \\
In-hospital mortality & 0.009 \\
Median pre-op lab points & 10 \\
Cases with any pre-op lab point & 0.778 \\
Median number of VitalDB tracks & 78 \\
Cases with arterial MAP track & 0.583 \\
Cases with NIBP MAP track & 0.902 \\
\bottomrule
\end{tabular}
\end{table}

Table~\ref{tab:vitaldb-functional-cohort} summarizes the cohort.  Vasopressor exposure is common: 54.3\% of cases receive ephedrine, phenylephrine, or epinephrine during surgery.  Postoperative ICU admission occurs in 18.8\% of cases, ICU stay longer than one day in 6.1\%, and in-hospital mortality in 0.9\%.  The median case has 10 preoperative laboratory point observations and 78 VitalDB monitoring tracks.  Arterial MAP and non-invasive MAP tracks are available in 58.3\% and 90.2\% of cases, respectively.  These features motivate functional representation learning and careful overlap diagnostics.

\begin{figure}[H]
\centering
\includegraphics[width=0.8\textwidth]{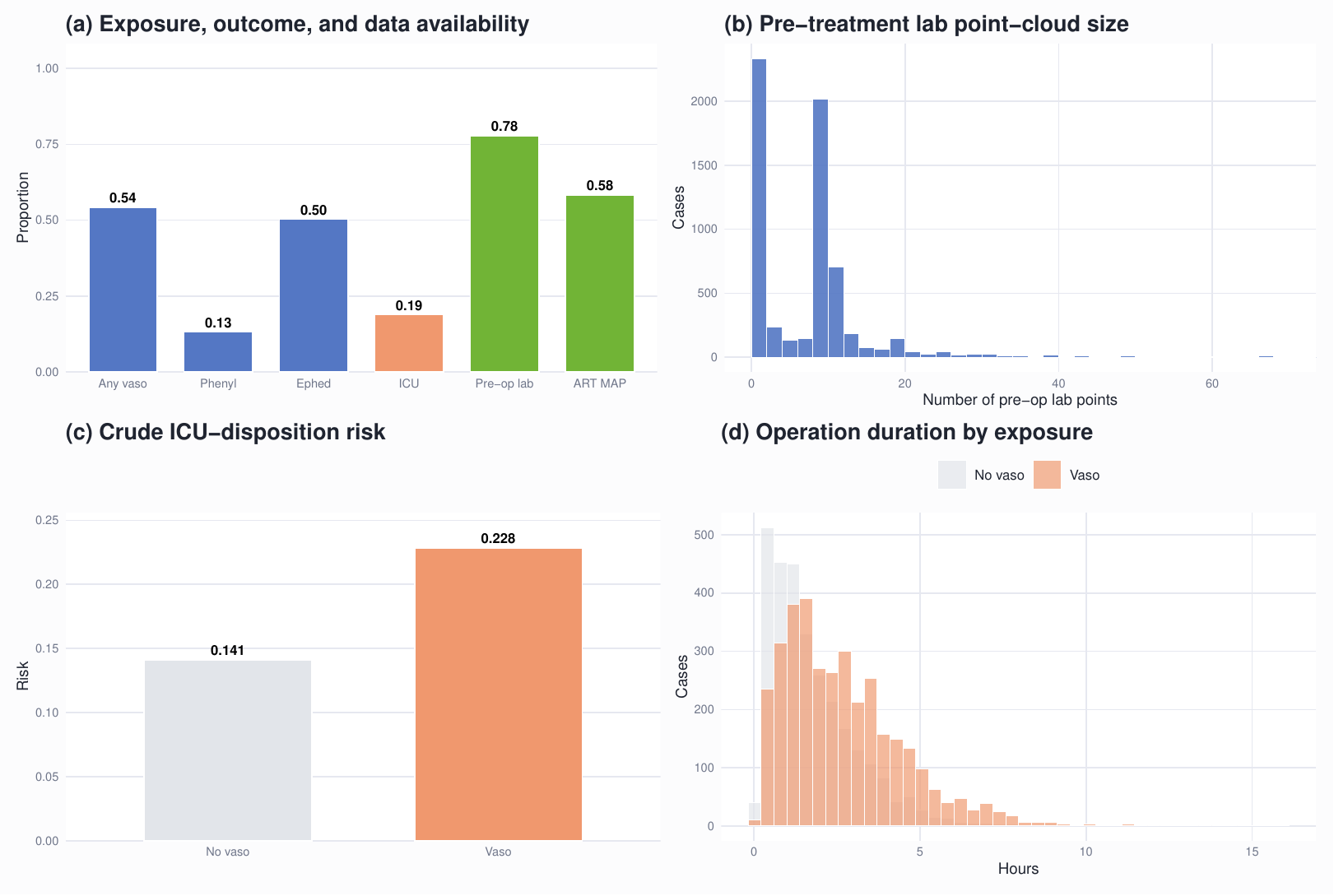}
\caption{VitalDB functional-adjustment dashboard for exposure, outcome, laboratory point-cloud size, crude risk, and operation-duration imbalance.}
\label{fig:vitaldb-functional-dashboard}
\end{figure}

\begin{figure}[H]
\centering
\includegraphics[width=0.8\textwidth]{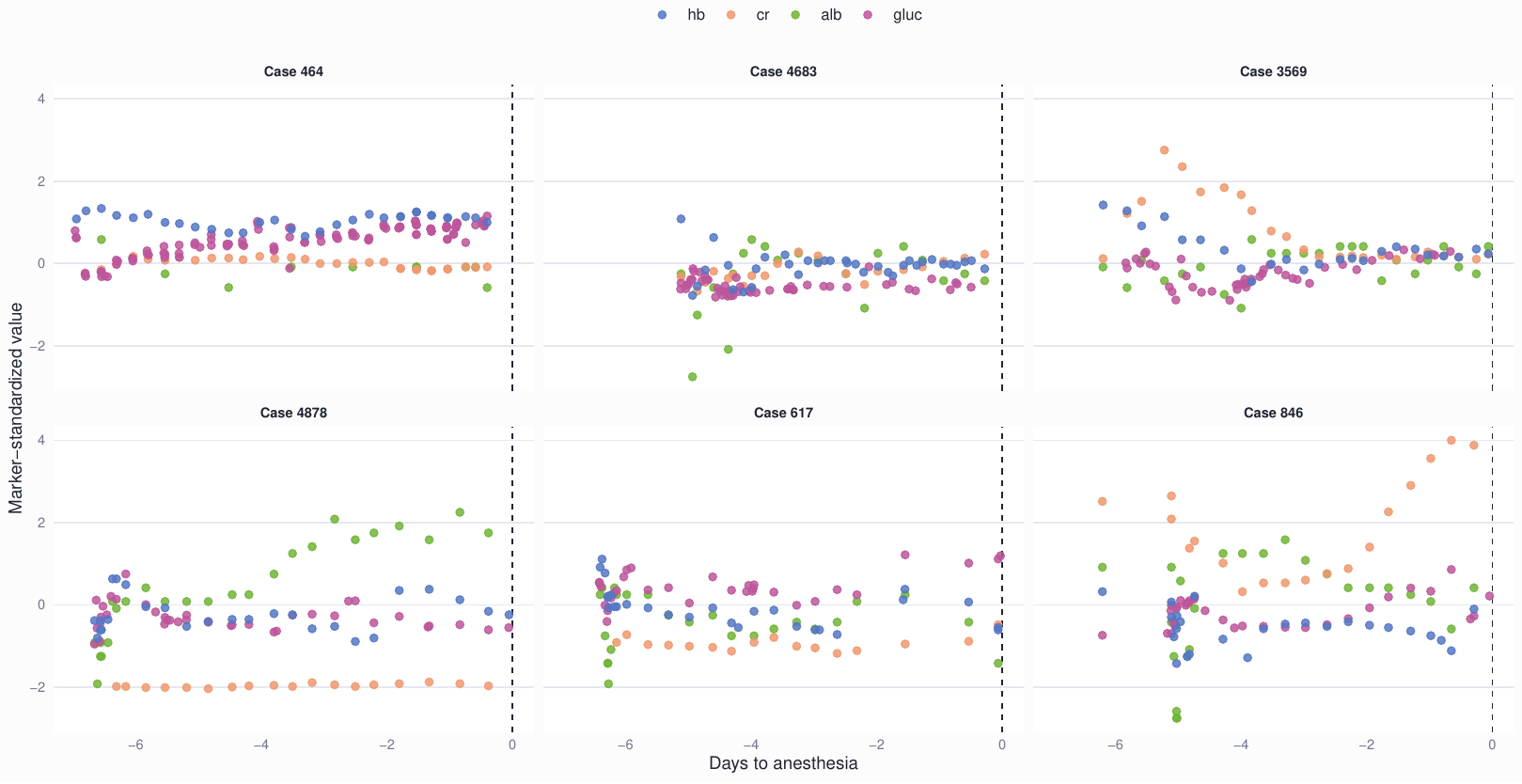}
\caption{Representative irregular pre-treatment laboratory point clouds from VitalDB.  Values are marker-standardized for display.}
\label{fig:vitaldb-lab-pointclouds}
\end{figure}

Figure~\ref{fig:vitaldb-functional-dashboard} gives a multi-panel audit of the analyzed data.  The exposure/outcome panel establishes that vasopressor use and ICU admission are both common enough for estimation; the point-cloud-size panel shows that the laboratory history is irregular rather than a fixed table; the crude-risk panel shows the unadjusted vasopressor--ICU association; and the operation-duration panel reveals case-complexity imbalance that motivates covariate adjustment and overlap assessment.

Figure~\ref{fig:vitaldb-lab-pointclouds} displays representative point clouds.  Each panel is one surgical case; each dot is a pre-treatment laboratory measurement, marker-standardized for display so that timing and measurement density are visible across laboratory types.  The variability in the number, timing, and marker mix of observations is the irregular functional-data structure targeted by DR-FRL.

\subsection{Estimand}

In this application, the observed data are $O=(W,A,Y)$, where $W$ contains baseline clinical variables, surgery and anesthesia descriptors, track-availability summaries, and the pre-treatment laboratory point cloud $O_i^X$; $A$ is a point exposure summary; and $Y$ is a postoperative endpoint.  There are no decision-time-indexed follow-up indicators $R_t$ in this analysis, and no sequential treatment-censoring weights are used.  The estimator is the cross-fitted point-treatment AIPW analogue,
\[
\widehat\theta=\mathbb{P}_n\left[
\widehat m_1(W)-\widehat m_0(W)
+\frac{A\{Y-\widehat m_1(W)\}}{\widehat e(W)}
-\frac{(1-A)\{Y-\widehat m_0(W)\}}{1-\widehat e(W)}
\right],
\]
with propensity clipping and out-of-fold nuisance fitting.

The primary exposure is any intraoperative vasopressor support,
\[
A=\ind\{\texttt{intraop\_eph}>0\ \text{or}\ \texttt{intraop\_phe}>0\ \text{or}\ \texttt{intraop\_epi}>0\}.
\]
The primary endpoint is postoperative ICU admission, $Y=\ind\{\texttt{icu\_days}>0\}$.  We treat this as a pragmatic postoperative disposition and resource-use endpoint rather than as a clean adverse-outcome label, because planned and unplanned ICU admission cannot be separated in the public clinical table.  The estimand is the observed-population adjusted risk difference $\theta=E(Y^1)-E(Y^0)$, under the standard point-treatment identification assumptions conditional on the observed preoperative history.  The AIPW form is a point-treatment analogue of the doubly robust estimating equations of \cite{bang2005doubly}.  This estimand is narrower than the longitudinal dynamic-regime estimand developed in the main methodology.  VitalDB provides reliable intraoperative drug summaries for the point exposure, whereas a multi-interval MAP/vasopressor target trial requires validated track-level drug timing and waveform histories.  The same AIPW/DR-FRL machinery can be extended to such regimes once those track-level histories have been constructed.

Because vasopressor exposure is clinically heterogeneous, we examine any vasopressor, phenylephrine, ephedrine, and high-dose vasopressor use.  Outcome sensitivity analyses consider ICU admission, ICU stay longer than one day, and a composite adverse endpoint.  These analyses separate clinically distinct drug and outcome definitions that would be obscured by a single pooled contrast.

\subsection{Representation ablation}

The main empirical comparison isolates the value of functional information in an audit sense.  We compare six estimators:
\begin{enumerate}[leftmargin=1.8em]
\item scalar clinical covariates only;
\item scalar covariates plus handcrafted laboratory summaries, including counts, means, minima, maxima, last values, and slopes;
\item DR-FRL RBF Mean, using time-kernel embeddings of the irregular laboratory point clouds;
\item DR-FRL RBF Catoni, the same functional representation with Catoni final aggregation;
\item random-forest doubly robust estimation using the functional representation, included because tree-based nuisance learners are common in the heterogeneous-effect estimation framework of \cite{wager2018estimation};
\item linear doubly robust estimation using the functional representation.
\end{enumerate}
The functional RBF encoder is a lightweight set-style encoder: for each laboratory marker and each center in $\{-7,-3,-1,-0.2\}$ days, it records kernel-weighted values and kernel masses.  This representation captures the irregular timing and marker mix of the point cloud while remaining simple enough for transparent ablation.  All AIPW estimators are cross-fitted, propensities are clipped to $[0.025,0.975]$, and uncertainty is computed from out-of-fold pseudo-outcome variability.

\begin{table}[H]
\centering
\caption{VitalDB functional point-cloud analysis of vasopressor support and postoperative ICU admission.}
\label{tab:vitaldb-functional-estimates}
\setlength{\tabcolsep}{3pt}
\resizebox{\linewidth}{!}{
\begin{tabular}{p{0.27\textwidth}rrrrrrrrr}
\toprule
Method & Risk Y1 & Risk Y0 & RD & SE & 95\% CI & AUC & Brier & Min ESS & Clip \% \\
\midrule
Scalar only & 0.193 & 0.181 & 0.013 & 0.008 & [-0.003, 0.028] & 0.721 & 0.212 & 2374.828 & 0.000 \\
Scalar + lab summaries & 0.192 & 0.180 & 0.011 & 0.008 & [-0.003, 0.026] & 0.721 & 0.212 & 2401.140 & 0.000 \\
DR-FRL RBF Mean & 0.192 & 0.181 & 0.010 & 0.008 & [-0.004, 0.025] & 0.720 & 0.212 & 2407.996 & 0.000 \\
DR-FRL RBF Catoni & 0.192 & 0.181 & 0.011 & 0.008 & [-0.004, 0.026] & 0.720 & 0.212 & 2407.996 & 0.000 \\
RF DR functional & 0.202 & 0.165 & 0.038 & 0.008 & [0.023, 0.053] & 0.698 & 0.221 & 2676.653 & 0.000 \\
Linear DR functional & 0.193 & 0.180 & 0.013 & 0.019 & [-0.023, 0.050] & 0.713 & 0.216 & 1368.213 & 1.080 \\
\bottomrule
\end{tabular}}
\end{table}

\begin{figure}[H]
\centering
\includegraphics[width=0.8\textwidth]{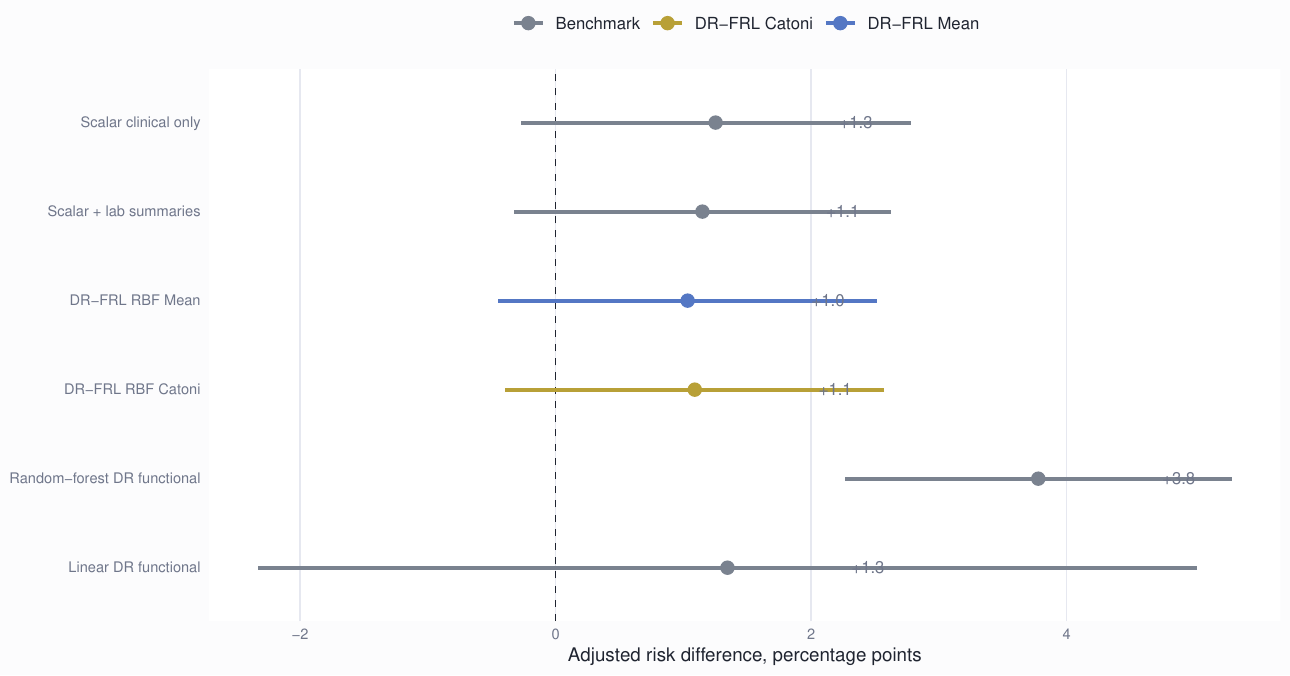}
\caption{Representation ablation and benchmark comparison in the VitalDB diagnostic audit.  Points are adjusted risk differences and bars are 95\% intervals.}
\label{fig:vitaldb-functional-forest}
\end{figure}

Table~\ref{tab:vitaldb-functional-estimates} reports treatment-specific risks, adjusted risk differences, nuisance diagnostics, effective sample sizes, and clipping fractions.  The scalar-only adjusted risk difference is 0.0125.  Adding handcrafted laboratory summaries gives 0.0115; the functional RBF estimator gives 0.0103; Catoni aggregation gives 0.0109.  The estimates cluster near one percentage point, so the functional point cloud does not materially change the primary contrast after rich scalar clinical adjustment.  This is a useful negative finding: DR-FRL can add irregular point-cloud histories to orthogonal adjustment and audit support, but it need not force a functional effect when scalar summaries already carry the endpoint-relevant information.  The random-forest DR estimate is larger, 0.0378, and should be interpreted alongside its nuisance diagnostics.

Figure~\ref{fig:vitaldb-functional-forest} visualizes the same ablation.  The treatment-specific risks are clinically useful: for the functional DR-FRL estimator, the estimated ICU-admission risk is 0.192 under vasopressor support and 0.181 under no support, corresponding to an adjusted risk difference of approximately 1.0 percentage point.

\subsection{Overlap and primary diagnostics}

The common-support diagnostics are favorable for the primary functional estimator.  The primary propensity model has AUC 0.720 and Brier score 0.212.  The minimum effective sample size is approximately 2,408 and the clip fraction is zero at the 0.025 threshold.  Figure~\ref{fig:vitaldb-overlap-calibration} shows that treated and untreated cases have overlapping predicted propensities, and the calibration curve does not reveal a severe exposure-model failure.  Reporting these diagnostics is important because, as discussed by \cite{petersen2012positivity}, lack of overlap changes the estimand that can be supported by the data.

\begin{figure}[H]
\centering
\includegraphics[width=0.96\textwidth]{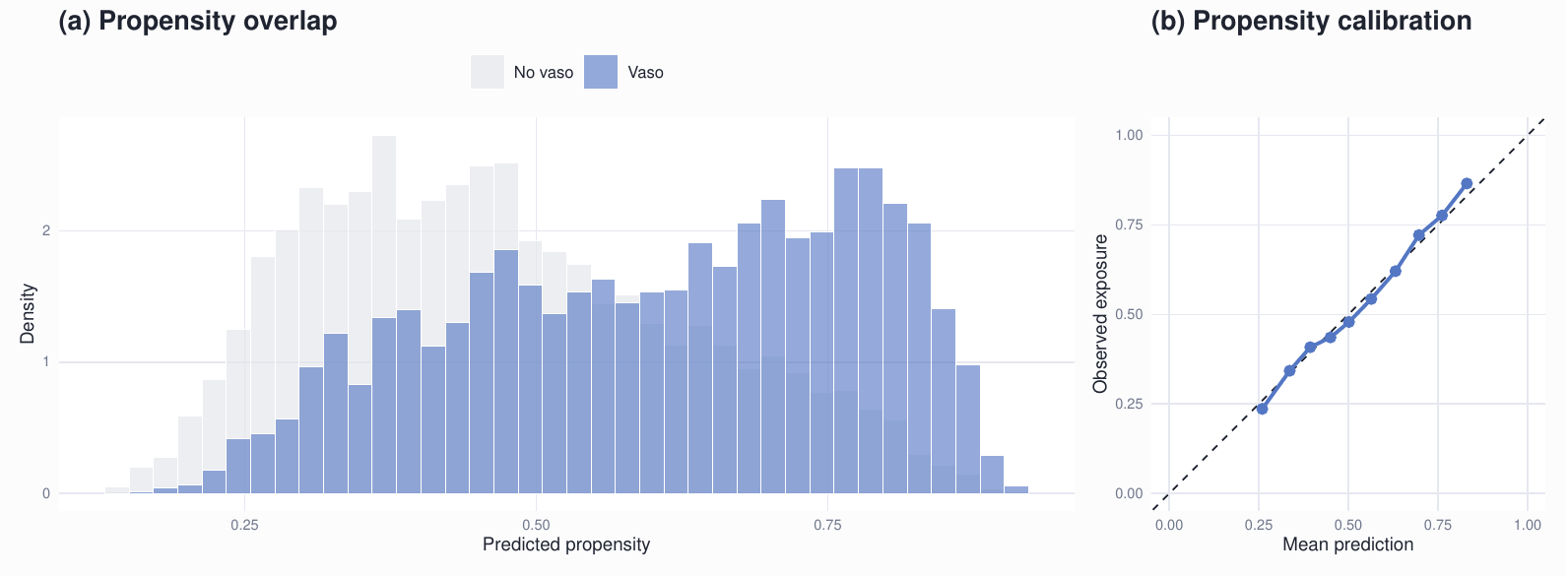}
\caption{Overlap and calibration diagnostics for the functional DR-FRL analysis.}
\label{fig:vitaldb-overlap-calibration}
\end{figure}

\begin{table}[H]
\centering
\caption{Common-support trimming sensitivity for the functional DR-FRL estimator.}
\label{tab:vitaldb-trimming}
\small
\begin{tabular}{rrrrrrrr}
\toprule
Threshold & Risk Y1 & Risk Y0 & RD & SE & 95\% CI & Min ESS & Clip \% \\
\midrule
0.010 & 0.192 & 0.181 & 0.010 & 0.008 & [-0.004, 0.025] & 2407.996 & 0.000 \\
0.025 & 0.192 & 0.181 & 0.010 & 0.008 & [-0.004, 0.025] & 2407.996 & 0.000 \\
0.050 & 0.192 & 0.181 & 0.010 & 0.008 & [-0.004, 0.025] & 2407.996 & 0.000 \\
0.100 & 0.192 & 0.181 & 0.010 & 0.008 & [-0.004, 0.025] & 2407.996 & 0.078 \\
\bottomrule
\end{tabular}
\end{table}

Table~\ref{tab:vitaldb-trimming} reports trimming sensitivity.  Thresholds 0.01, 0.025, and 0.05 give the same estimate because no predicted propensity falls outside those bounds.  At threshold 0.10, only 0.08\% of cases are clipped and the estimate remains essentially unchanged.  This supports reporting the observed-population risk difference rather than switching to an overlap-only estimand in this particular analysis.

Additional balance, marker-missingness, exposure/outcome sensitivity, and pseudo-outcome tail diagnostics are reported in Supplementary Section S4.  These diagnostics support the same interpretation as the main ablation: common support is acceptable for the primary point-treatment contrast, but the analysis remains observational and cannot rule out unmeasured confounding by surgical complexity, planned postoperative monitoring, or clinician decision-making.

\subsection{Limitations}

The VitalDB analysis has several limitations.  The estimand is a point-treatment contrast based on intraoperative drug summaries, not a full longitudinal dynamic-regime analysis.  Planned versus unplanned ICU admission cannot be cleanly separated from the available public clinical table, so postoperative ICU admission is a pragmatic resource-use endpoint rather than a definitive clinical causal outcome.  The analysis also cannot eliminate unmeasured confounding by surgical complexity, anesthesiologist decision-making, or planned postoperative monitoring.

The empirical conclusion is accordingly cautious.  DR-FRL-style point-cloud adjustment can be applied to open perioperative patient-level data, and the adjusted vasopressor--ICU contrast is much smaller than the crude contrast.  The functional representation does not improve materially over carefully engineered laboratory summaries for this endpoint, but it gives a coherent route for incorporating irregular measurements into orthogonal causal estimation.  A full intraoperative MAP/vasopressor target trial would require validated track-level treatment timing and waveform histories.

\section{Discussion}\label{sec:discussion}

DR-FRL is built on a simple principle: reduce functional histories only in ways that preserve the nuisance information needed by the causal estimand.  A useful representation is not the one that best reconstructs a curve; it is the one that stabilizes the EIF, overlap, calibration, and pseudo-outcome distribution.

The method links flexible functional encoders with an orthogonal longitudinal estimating equation, extending the targeted-learning idea of \cite{schuler2017targeted} to irregular histories.  This link has three benefits: point-cloud histories can be used without naive gridding, representation error appears through an explicit product remainder, and the pseudo-outcome recursion yields overlap, calibration, tail, effective-sample-size, and ablation diagnostics.

The simulations show where this matters.  DR-FRL is most useful when functional signals are hard to summarize, measurement is informative, overlap is weak, or pseudo-outcomes are heavy-tailed.  Catoni aggregation improves point accuracy in these stress tests; formal Wald inference remains tied to mean aggregation.

The VitalDB analysis illustrates the method as an audit.  Pre-treatment laboratory point clouds can be included in orthogonal adjustment, and the crude vasopressor--ICU association is much smaller after adjustment.  Functional RBF estimates are close to scalar-summary estimates, suggesting that for this endpoint scalar laboratory summaries already carry much of the relevant information.

Important limitations remain.  The observed-history target does not remove unmeasured confounding by latent functional features that are not observed or recoverable.  Clipped and overlap targets answer different questions from the full observed-population target.  Robust aggregation limits the effect of heavy-tailed pseudo-outcomes but cannot repair identification failures.  The VitalDB analysis is observational and point-treatment based; a full intraoperative dynamic-regime study would require validated treatment timing, waveform histories, and outcome definitions.

\section*{Disclosure Statement}
No potential conflict of interest was reported by the authors.


\bibliographystyle{erae}
\bibliography{references}

\appendix

\newpage
\section{Proofs}\label{app:proofs}

This section gives the full proof details for main-text Theorems~\ref{thm:identification}--\ref{thm:robust}, Proposition~\ref{prop:selector}, Lemma~\ref{lem:projection-stability}, and Corollary~\ref{cor:sieve}.  Throughout the supplement, $P$ denotes $P_0$, $\Pn$ denotes the empirical measure, and the target regime $g$ is fixed.  Static regimes are obtained by setting $g_t(h)=a_t$.  Write
\[
\Gamma_t^g=(\pi_t^g,\rho_t^g),\qquad
G_t^g(H_t)=\prod_{s=0}^t\pi_s^g(H_s)\rho_s^g(H_s),
\]
and let
\[
\mathcal H_t^g(\eta)=
\frac{\ind\{A_s=g_s(H_s),R_s=1,\text{ for all }s\le t\}}
{\prod_{s=0}^t\pi_s^g(H_s)\rho_s^g(H_s)}
\]
be the cumulative clever covariate associated with a working nuisance vector $\eta=(Q,\pi,\rho)$.  The true nuisance vector is $\eta^\star=(Q^\star,\pi^\star,\rho^\star)$.  Define the EIF map
\[
D_g(O;\eta,\theta)=Q_0^g(H_0)-\theta+
\sum_{t=0}^T\mathcal H_t^g(\eta)
\{Q_{t+1}^g(H_{t+1})-Q_t^g(H_t)\},
\]
where $Q_{T+1}^g(H_{T+1})=Y$.  At the truth, $D_g(O;\eta^\star,\theta(g))=\phi_g^\star(O)$.

Cross-fitting is interpreted fold by fold throughout.  Let $\mathcal I_1,\ldots,\mathcal I_K$ be the evaluation folds and let $\mathcal T_k$ be the sigma-field generated by all observations outside $\mathcal I_k$, together with the random splitting, representation-selection decisions, calibration choices, early-stopping decisions, and optional targeting steps made inside the training sample.  The nuisance vector used for $i\in\mathcal I_k$ is written $\widehat\eta^{(-k)}=(\widehat Q^{(-k)},\widehat\pi^{(-k)},\widehat\rho^{(-k)})$, and the corresponding estimated influence value is written $\widehat\phi_i^{(-k)}=D_g(O_i;\widehat\eta^{(-k)},\widehat\theta_g)$.  Conditional on $\mathcal T_k$, the evaluation observations $\{O_i:i\in\mathcal I_k\}$ are independent of $\widehat\eta^{(-k)}$.  All empirical-process and variance statements below are therefore first read conditionally on $\mathcal T_k$ and then summed over the fixed number of folds.

\subsection{Two algebraic lemmas}\label{app:lemmas}

\begin{drfdllemma}[Telescoping identity]\label{lem:telescoping}
For any working nuisance vector $\eta=(Q,\pi,\rho)$, the backward recursion
\[
\widetilde Y_{T+1}^g=Y,\qquad
\widetilde Y_t^g=Q_t^g(H_t)+\omega_t^g(\eta)\{\widetilde Y_{t+1}^g-Q_t^g(H_t)\}
\]
satisfies
\begin{equation}
\widetilde Y_t^g=Q_t^g(H_t)+
\sum_{s=t}^T\left\{\prod_{j=t}^s\omega_j^g(\eta)\right\}
\{Q_{s+1}^g(H_{s+1})-Q_s^g(H_s)\}.
\label{eq:app-telescope-t}
\end{equation}
In particular, $\widetilde Y_0^g-\theta=D_g(O;\eta,\theta)$.
\end{drfdllemma}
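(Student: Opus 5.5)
The plan is a backward induction on $t$, running from $t=T+1$ down to $t=0$. To start the induction, read the right-hand side of \eqref{eq:app-telescope-t} at $t=T+1$: the sum is empty, and $Q_{T+1}^g(H_{T+1})=Y$ by convention, so the right-hand side equals $Y=\widetilde Y_{T+1}^g$.

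For the inductive step, assume \eqref{eq:app-telescope-t} holds at $t+1$. Rewrite the recursion as
\[
\widetilde Y_t^g=Q_t^g(H_t)+\omega_t^g\{Q_{t+1}^g(H_{t+1})-Q_t^g(H_t)\}+\omega_t^g\{\widetilde Y_{t+1}^g-Q_{t+1}^g(H_{t+1})\}.
\]
This is obtained by adding and subtracting $\omega_t^g Q_{t+1}^g(H_{t+1})$. Substituting the inductive hypothesis, the last bracket equals
\[
\sum_{s=t+1}^T\Bigl\{\prod_{j=t+1}^s\omega_j^g\Bigr\}\{Q_{s+1}^g(H_{s+1})-Q_s^g(H_s)\}.
\]
Multiplying by $\omega_t^g$ extends each product to start at $j=t$. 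The middle term $\omega_t^g\{Q_{t+1}^g(H_{t+1})-Q_t^g(H_t)\}$ is exactly the $s=t$ summand. Collecting these pieces gives \eqref{eq:app-telescope-t} at $t$.

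The one point needing care is the cemetery convention. When $R_t=0$ the quantity $H_{t+1}=\partial$ and the value $Q_{t+1}^g(\partial)$ are arbitrary. However, $\omega_t^g$ contains the factor $\ind(R_t=1)$, so every term involving $H_{t+1}$ or later histories is multiplied by zero. The added-and-subtracted term is likewise multiplied by $\omega_t^g=0$. The identity therefore holds pointwise on the extended sample space, whatever value is assigned at $\partial$. The same reasoning covers $A_t\neq g_t(H_t)$, in which case $\omega_t^g=0$ and both sides reduce to $Q_t^g(H_t)$. No probabilistic statement is involved, so the argument is deterministic and holds for every working $\eta$.

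For the final claim, set $t=0$ and subtract $\theta$. Note that $\prod_{j=0}^s\omega_j^g(\eta)=\mathcal H_s^g(\eta)$: a product of indicators equals the indicator of the intersection, and a product of ratios equals the ratio of products, which is $G_s^g$. This identifies $\widetilde Y_0^g-\theta$ with $D_g(O;\eta,\theta)$, which is also \eqref{eq:telescoping}. I expect no real obstacle; the only subtle step is the cemetery bookkeeping described above.
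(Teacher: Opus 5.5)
Your proposal is correct and follows the paper's proof: a backward induction in which the inductive hypothesis is substituted into the one-step recursion, so that the $s=t$ summand splits off and $\omega_t^g$ extends each product, followed by the identification $\prod_{j=0}^s\omega_j^g(\eta)=\mathcal H_s^g(\eta)$ for the $t=0$ claim. The only differences are cosmetic, since the identity is purely algebraic: you start the induction at $t=T+1$ with an empty sum where the paper starts at $t=T$, and you spell out the cemetery bookkeeping.
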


\begin{proof}
For $t=T$, the recursion gives
\[
\widetilde Y_T^g=Q_T^g(H_T)+\omega_T^g(\eta)\{Y-Q_T^g(H_T)\},
\]
which is exactly \eqref{eq:app-telescope-t} because $Q_{T+1}^g(H_{T+1})=Y$.  Suppose \eqref{eq:app-telescope-t} holds for $t+1$.  Substituting the induction hypothesis into the recursion at time $t$ gives
\begin{align*}
\widetilde Y_t^g
&=Q_t^g(H_t)+\omega_t^g(\eta)
\left[Q_{t+1}^g(H_{t+1})-Q_t^g(H_t)\right.\\
&\hspace{4em}\left.+\sum_{s=t+1}^T
\left\{\prod_{j=t+1}^s\omega_j^g(\eta)\right\}
\{Q_{s+1}^g(H_{s+1})-Q_s^g(H_s)\}\right]\\
&=Q_t^g(H_t)+\sum_{s=t}^T
\left\{\prod_{j=t}^s\omega_j^g(\eta)\right\}
\{Q_{s+1}^g(H_{s+1})-Q_s^g(H_s)\}.
\end{align*}
Thus the result follows by backward induction.  For $t=0$, the product of incremental clever covariates is the cumulative clever covariate $\mathcal H_s^g(\eta)$, which yields $\widetilde Y_0^g-\theta=D_g(O;\eta,\theta)$.
\end{proof}

\begin{drfdllemma}[Conditional recursion identity]\label{lem:conditional}
For any working nuisance vector $\eta$, define the mechanism ratio
\[
\lambda_t(H_t;\eta)=
\frac{\pi_t^{g,\star}(H_t)\rho_t^{g,\star}(H_t)}
{\pi_t^g(H_t)\rho_t^g(H_t)},
\]
where the numerator uses the true treatment and incremental censoring probabilities and the denominator uses the working probabilities.  All conditional expectations below are on the at-risk event $\bar R_{t-1}=1$, which is suppressed to keep notation readable.  Then
\begin{align*}
E\{\widetilde Y_t^g(\eta)\mid H_t\}
&=Q_t^g(H_t)+\lambda_t(H_t;\eta)
\big[E\{\widetilde Y_{t+1}^g(\eta)\mid H_t,A_t=g_t(H_t),R_t=1\}
-Q_t^g(H_t)\big].
\end{align*}
\end{drfdllemma}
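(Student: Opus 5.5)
We prove the identity by conditioning on $H_t$ and splitting the pseudo-outcome according to whether the unit follows the regime and stays in follow-up at time $t$. By definition,
\[
\widetilde Y_t^g(\eta)=Q_t^g(H_t)+\frac{\ind\{A_t=g_t(H_t),R_t=1\}}{\pi_t^g(H_t)\rho_t^g(H_t)}\{\widetilde Y_{t+1}^g(\eta)-Q_t^g(H_t)\}.
\]
The steps are as follows.

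First, $Q_t^g(H_t)$ and the working denominator $\pi_t^g(H_t)\rho_t^g(H_t)$ are $H_t$-measurable. The regime value $g_t(H_t)$ is also $H_t$-measurable, because $g_t$ is deterministic. So all of these factor out of $E(\cdot\mid H_t)$, on the at-risk event $\bar R_{t-1}=1$ throughout. What remains is
\[
E\bigl[\ind\{A_t=g_t(H_t),R_t=1\}\{\widetilde Y_{t+1}^g-Q_t^g(H_t)\}\mid H_t\bigr].
\]

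Second, we apply the tower property through $(A_t,R_t)$. On the event $\{A_t=g_t(H_t),R_t=1\}$, the indicator equals one and the integrand is replaced by its conditional mean given $(H_t,A_t=g_t(H_t),R_t=1)$. The expectation above then becomes
\[
P\{A_t=g_t(H_t),R_t=1\mid H_t\}\,\bigl[E\{\widetilde Y_{t+1}^g\mid H_t,A_t=g_t(H_t),R_t=1\}-Q_t^g(H_t)\bigr].
\]

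Third, the chain rule factors the joint probability as
\[
P(A_t=g_t\mid H_t,\bar R_{t-1}=1)\,P(R_t=1\mid H_t,A_t=g_t,\bar R_{t-1}=1)=\pi_t^{g,\star}(H_t)\rho_t^{g,\star}(H_t).
\]
This matches the incremental, treatment-conditional censoring definitions. Dividing by the working probabilities gives $\lambda_t(H_t;\eta)$, which completes the identity.

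The only delicate point is measurability and well-definedness of the conditional expectation when $R_t=0$. There the next history is the cemetery value $\partial$, and $\widetilde Y_{t+1}^g$ may be arbitrary. The indicator factor annihilates that branch, so only the at-risk, regime-compatible transition law enters. That law is well defined when $\pi_t^{g,\star}\rho_t^{g,\star}>0$ (Assumption~\ref{ass:positivity}), or trivially when it is zero, since then both sides agree with $\lambda_t=0$. The working probabilities must be nonzero for $\omega_t^g$ to be defined. We also need integrability of $\widetilde Y_{t+1}^g$, which follows from bounded working nuisances and Assumption~\ref{ass:moments}.
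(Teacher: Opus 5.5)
Your proposal is correct and follows essentially the same route as the paper's proof: you pull the $H_t$-measurable quantities $Q_t^g(H_t)$, $g_t(H_t)$ and the working denominator out of the conditional expectation, then apply iterated expectation through the event $\{A_t=g_t(H_t),R_t=1\}$. You then identify that event's conditional probability given $H_t$ (on $\bar R_{t-1}=1$) as $\pi_t^{g,\star}(H_t)\rho_t^{g,\star}(H_t)$ via the treatment-conditional incremental censoring definition, and you handle the cemetery value at $R_t=0$ just as the paper does.
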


\begin{proof}
Condition on $H_t$.  The quantity $Q_t^g(H_t)$ and the denominator of $\omega_t^g(\eta)$ are fixed.  Moreover
\[
E\{\ind(A_t=g_t(H_t),R_t=1)\mid H_t\}
=\pi_t^{g,\star}(H_t)\rho_t^{g,\star}(H_t),
\]
where the censoring probability is conditional on $(H_t,A_t=g_t(H_t))$.  Applying iterated expectation to the product of the indicator and the future pseudo-outcome gives
\begin{align*}
E\{\widetilde Y_t^g\mid H_t\}
&=Q_t^g(H_t)+
\frac{E[\ind\{A_t=g_t(H_t),R_t=1\}
\{\widetilde Y_{t+1}^g-Q_t^g(H_t)\}\mid H_t]}
{\pi_t^g(H_t)\rho_t^g(H_t)}\\
&=Q_t^g(H_t)+
\frac{\pi_t^{g,\star}(H_t)\rho_t^{g,\star}(H_t)}
{\pi_t^g(H_t)\rho_t^g(H_t)}
\big[E\{\widetilde Y_{t+1}^g\mid H_t,A_t=g_t(H_t),R_t=1\}-Q_t^g(H_t)\big].
\end{align*}
This proves the identity.  If $R_t=0$, the next history is the cemetery value and the indicator in $\omega_t^g$ is zero, so the arbitrary value assigned to $Q_{t+1}^g(\partial)$ never enters the conditional expectation.
\end{proof}

\subsection{Finite-library selection}\label{app:proof-selector}

\begin{proof}
Let $\widehat\phi$ be an $\epsilon_n$-approximate minimizer of $\widehat R_{\mathrm{EIF}}$ over $\Phi_n$, and let $\phi^\circ\in\argmin_{\phi\in\Phi_n}R_{\mathrm{EIF}}(\phi)$ be a population oracle candidate.  Then
\begin{align*}
R_{\mathrm{EIF}}(\widehat\phi)
&\le \widehat R_{\mathrm{EIF}}(\widehat\phi)
+\sup_{\phi\in\Phi_n}|R_{\mathrm{EIF}}(\phi)-\widehat R_{\mathrm{EIF}}(\phi)|\\
&\le \widehat R_{\mathrm{EIF}}(\phi^\circ)+\epsilon_n
+\sup_{\phi\in\Phi_n}|R_{\mathrm{EIF}}(\phi)-\widehat R_{\mathrm{EIF}}(\phi)|\\
&\le R_{\mathrm{EIF}}(\phi^\circ)+\epsilon_n
+2\sup_{\phi\in\Phi_n}|R_{\mathrm{EIF}}(\phi)-\widehat R_{\mathrm{EIF}}(\phi)|.
\end{align*}
The assumed uniform validation concentration gives the stated bound.  The concentration condition can be verified by Bernstein or bounded-difference inequalities when the validation EIF scores are conditionally sub-exponential and the candidate library is finite, followed by a union bound over $M_n$ candidates.  Because the oracle inequality concerns only the validation EIF criterion, it does not imply that the oracle projection errors $b_{Q,t}$ and $b_{\Gamma,t}$ are small unless the library contains a representation with that property.
\end{proof}

\subsection{EIF-map stability}\label{app:proof-projection-stability}

\begin{proof}
For the outcome nuisance, add and subtract the best representation-indexed approximation:
\begin{align*}
\norm{\widehat Q_t-Q_t^{g,\star}}_2
&\le \norm{\widehat Q_t-Q_{t,\widehat\phi}^{\dagger}}_2
+\norm{Q_{t,\widehat\phi}^{\dagger}-Q_t^{g,\star}}_2
=r_{Q,t}+b_{Q,t}(\widehat\phi).
\end{align*}
The same triangle inequality for the vector mechanism $\Gamma_t=(\pi_t,\rho_t)$ gives
\[
\norm{\widehat\Gamma_t-\Gamma_t^{g,\star}}_2
\le r_{\Gamma,t}+b_{\Gamma,t}(\widehat\phi),
\]
where the vector norm is the Euclidean norm inside the outer $L_2(P_0)$ norm.

It remains to show how these total-error bounds control the empirical-process class used in main-text Theorem~3.  Let $e_{Q,t}=\norm{\widehat Q_t-Q_t^{g,\star}}_2$ and $e_{\Gamma,t}=\norm{\widehat\Gamma_t-\Gamma_t^{g,\star}}_2$.  Under the lower probability bound or deterministic clipping, the map $(\pi_t,\rho_t)\mapsto\{\pi_t\rho_t\}^{-1}$ is locally Lipschitz on the relevant support.  Therefore the cumulative clever covariate difference satisfies, for fixed $T$,
\[
\norm{\widehat{\mathcal H}_t^g-\mathcal H_t^{g,\star}}_2
\le C_T\sum_{s=0}^t e_{\Gamma,s}+o_P(1),
\]
with $C_T$ depending on the lower bound or clipping level.  Writing $M_t^\star=Q_{t+1}^{g,\star}-Q_t^{g,\star}$ and $\widehat M_t=\widehat Q_{t+1}^{g}-\widehat Q_t^{g}$, the difference of EIF maps is a sum of terms of the form
\[
(\widehat{\mathcal H}_t^g-\mathcal H_t^{g,\star})M_t^\star
+\widehat{\mathcal H}_t^g(\widehat M_t-M_t^\star)
+\widehat Q_0^g-Q_0^{g,\star}.
\]
Cauchy-Schwarz, bounded or clipped weights, and bounded second moments of $M_t^\star$ give
\[
\norm{D_g(\cdot;\widehat\eta,\theta(g))-D_g(\cdot;\eta^\star,\theta(g))}_2
\le C_T\sum_{t=0}^T(e_{Q,t}+e_{\Gamma,t})+o_P(1).
\]
Substituting the projection bounds above shows that the right-hand side is $o_P(1)$ whenever the total stochastic and approximation errors vanish.  This proves main-text Assumption~7 under the stated primitive conditions.
\end{proof}

\subsection{Identification and EIF}\label{app:proof-identification}

\begin{proof}
We first prove identification for the observed-history target.  By consistency, among individuals whose observed treatment path agrees with $g$ and who remain under follow-up, the observed outcome equals the corresponding counterfactual outcome.  For a dynamic regime, the exchangeability object must include the future histories that determine later regime decisions.  Thus, for each time $t$, write
\[
\mathcal F_t^g=(Y^g,H_{t+1}^g,\ldots,H_{T+1}^g).
\]
By sequential exchangeability, conditional on $H_t$ and being under follow-up just before $t$, treatment assignment is independent of $\mathcal F_t^g$; conditional on $(H_t,A_t)$, the follow-up indicator is also independent of $\mathcal F_t^g$.  Positivity ensures that the conditional distributions required under the intervention are observed.

The recursion is defined on the at-risk transition.  More precisely, $H_{t+1}$ denotes the next pre-treatment history generated after observing $(H_t,A_t,R_t)$ on the event $R_t=1$.  If $R_t=0$, we set $H_{t+1}=\partial$ and $Q_{t+1}^{g,\star}(\partial)$ arbitrarily.  This convention makes all random variables measurable on a common sample space, while the factor $\ind(R_t=1)$ in the clever covariate ensures that the arbitrary cemetery value is never used in the EIF.  At the terminal time,
\[
Q_T^{g,\star}(h_T)
=E\{Y\mid H_T=h_T,A_T=g_T(h_T),R_T=1,\bar R_{T-1}=1\}.
\]
At the previous time point,
\[
Q_{T-1}^{g,\star}(h_{T-1})
=E\{Q_T^{g,\star}(H_T)\mid H_{T-1}=h_{T-1},A_{T-1}=g_{T-1}(h_{T-1}),R_{T-1}=1,\bar R_{T-2}=1\}.
\]
Repeating the argument backward proves main-text equation~(4).  Taking expectation over $H_0$ yields $\theta(g)=E\{Q_0^{g,\star}(H_0)\}$.

For a latent full-functional-history target, the preceding argument would take place in a full-data law.  Such a target is not identified by the present observed likelihood unless a separate bridge or measurement model recovers the relevant full-data sequential regressions from the coarsened histories.  Even under such a bridge, the observed-data canonical gradient is the projection of the full-data gradient onto the observed-data tangent space.  It coincides with main-text equation~(6) only when the observation-process score is orthogonal to the target, so that no additional inverse-coarsening or coarsening-score term appears.  Otherwise the present article does not claim the full-functional target; it targets the observed-history value.

We next derive the EIF for the observed-history target.  Consider a regular parametric submodel $P_\varepsilon$ through $P$ with score $S(O)$ at $\varepsilon=0$.  The score decomposes into components for the baseline history, the conditional law of each next pre-treatment history on the at-risk regime path, the treatment mechanisms, the censoring mechanisms, and the terminal outcome law.  The target functional is $\theta_\varepsilon(g)=E_\varepsilon\{Q_{0,\varepsilon}^g(H_0)\}$.  The derivative of the baseline law contributes $Q_0^{g,\star}(H_0)-\theta(g)$.  For each $t$, the derivative of the conditional law of $H_{t+1}$, including the outcome law for $t=T$, contributes the residual
\[
M_t^{g,\star}=Q_{t+1}^{g,\star}(H_{t+1})-Q_t^{g,\star}(H_t)
\]
weighted by the likelihood ratio that maps observed paths to the regime path.  Since $Q_t^{g,\star}$ is exactly the conditional mean of $Q_{t+1}^{g,\star}(H_{t+1})$ given $(H_t,A_t=g_t(H_t),R_t=1,\bar R_{t-1}=1)$, 
\[
E\{M_t^{g,\star}\mid H_t,A_t=g_t(H_t),R_t=1,\bar R_{t-1}=1\}=0.
\]
Therefore the derivative contribution from time $t$ is represented by $\mathcal H_t^{g,\star}M_t^{g,\star}$.  Summing contributions gives
\[
\left.\frac{d}{d\varepsilon}\right|_{0}\theta_\varepsilon(g)
=E\left[\left\{Q_0^{g,\star}(H_0)-\theta(g)+
\sum_{t=0}^T\mathcal H_t^{g,\star}M_t^{g,\star}\right\}S(O)\right].
\]
The displayed candidate has mean zero.  For each $t$,
\[
E\{\mathcal H_t^{g,\star}M_t^{g,\star}\}
=E\left[\mathcal H_{t-1}^{g,\star}E\left\{\frac{\ind(A_t=g_t(H_t),R_t=1)}
{\pi_t^{g,\star}(H_t)\rho_t^{g,\star}(H_t)}M_t^{g,\star}\mid H_t,\bar R_{t-1}=1\right\}\right]=0,
\]
where the inner conditional expectation uses the at-risk transition and the residual mean-zero identity above.  Treatment- and censoring-score components are orthogonal by conditioning on $H_t$ and $(H_t,A_t)$, respectively.  Because the observed-history model is otherwise nonparametric, the candidate is the canonical gradient, proving main-text equation~(6).
\end{proof}

\subsection{Sequential robustness}\label{app:proof-sdr}

\begin{proof}
For a working nuisance vector $\eta$, define
\[
\Delta_t(H_t)=E\{\widetilde Y_t^g(\eta)\mid H_t\}-Q_t^{g,\star}(H_t),\qquad
\Delta_{T+1}=0.
\]
By Lemma~\ref{lem:conditional},
\begin{align*}
E\{\widetilde Y_t^g\mid H_t\}
&=Q_t^g(H_t)+\lambda_t(H_t;\eta)
\{E(\widetilde Y_{t+1}^g\mid H_t,A_t=g_t(H_t),R_t=1)-Q_t^g(H_t)\}.
\end{align*}
Using the definition of $Q_t^{g,\star}$ and adding/subtracting $Q_{t+1}^{g,\star}$ inside the conditional expectation gives
\begin{align}
\Delta_t(H_t)
&=\{1-\lambda_t(H_t;\eta)\}\{Q_t^g(H_t)-Q_t^{g,\star}(H_t)\}
\nonumber\\
&\quad+\lambda_t(H_t;\eta)
E\{\Delta_{t+1}(H_{t+1})\mid H_t,A_t=g_t(H_t),R_t=1\}.
\label{eq:app-delta-recursion}
\end{align}
This identity is exact.

We now prove the result by backward induction.  At $t=T$, $\Delta_{T+1}=0$, so
\[
\Delta_T(H_T)=\{1-\lambda_T(H_T;\eta)\}
\{Q_T^g(H_T)-Q_T^{g,\star}(H_T)\}.
\]
If either $Q_T^g=Q_T^{g,\star}$ or $(\pi_T^g,\rho_T^g)=(\pi_T^{g,\star},\rho_T^{g,\star})$, then $\Delta_T=0$.  Suppose $\Delta_{t+1}=0$ under the robustness conditions for future times.  Equation~\eqref{eq:app-delta-recursion} reduces to the same product form at time $t$.  If $Q_t^g=Q_t^{g,\star}$, the product is zero; if the time-$t$ mechanism is correct, then $\lambda_t=1$ and the product is again zero.  Hence $\Delta_t=0$.  By backward induction, $\Delta_0=0$ and therefore
\[
E\{\widetilde Y_0^g(\eta)\}=E\{Q_0^{g,\star}(H_0)\}=\theta(g).
\]
For cross-fitted estimators whose probability limits satisfy the same time-specific conditions, the law of large numbers applied to the out-of-fold pseudo-outcomes gives consistency.
\end{proof}

\subsection{Representation remainder}\label{app:proof-remainder}

\begin{proof}
We prove the result for the mean aggregation estimator.  The robust-aggregation transfer is handled in main-text Theorem~6.  For $i\in\mathcal I_k$, all fitted nuisance functions are $\widehat\eta^{(-k)}$ and are measurable with respect to $\mathcal T_k$.  Conditional on $\mathcal T_k$, the evaluation observations in $\mathcal I_k$ are independent of the nuisance functions used to score them.  The proof below is written for one generic fold and then summed over the fixed number of folds.

By Lemma~\ref{lem:telescoping}, the feasible estimator can be written as
\[
\widehat\theta_g=\Pn\{D_g(O;\widehat\eta,0)\},
\]
where $\widehat\eta$ denotes $\widehat\eta^{(-k(i))}$ for observation $i$.  Since $P\{D_g(O;\eta^\star,\theta(g))\}=0$, we decompose
\begin{align}
\widehat\theta_g-\theta(g)
&=(\Pn-P)D_g(O;\eta^\star,\theta(g))
\nonumber\\
&\quad+P\{D_g(O;\widehat\eta,\theta(g))-D_g(O;\eta^\star,\theta(g))\}
\nonumber\\
&\quad+(\Pn-P)\{D_g(O;\widehat\eta,\theta(g))-D_g(O;\eta^\star,\theta(g))\}.
\label{eq:app-main-decomp}
\end{align}
The first term is $(\Pn-P)\phi_g^\star$.

We next bound the mean remainder.  Let
\[
\delta_{Q,t}=\widehat Q_t^g-Q_t^{g,\star},\qquad
\delta_{\Gamma,t}=\widehat\Gamma_t^g-\Gamma_t^{g,\star}.
\]
The exact recursion \eqref{eq:app-delta-recursion}, applied with $\eta=\widehat\eta$, implies
\[
\left|P\{D_g(O;\widehat\eta,\theta(g))-D_g(O;\eta^\star,\theta(g))\}\right|
=|E\{\Delta_0(H_0)\}|.
\]
Let $\Gamma_t^g=(\pi_t^g,\rho_t^g)$ and use the product-mechanism norm
\[
\|\widehat\Gamma_t^g-\Gamma_t^{g,\star}\|_2
=\|\widehat\pi_t^g-\pi_t^{g,\star}\|_{L_2(P_0)}
+\|\widehat\rho_t^g-\rho_t^{g,\star}\|_{L_2(P_0)}.
\]
Assume that true and working treatment and censoring probabilities are bounded below by $\varepsilon>0$ after calibration or clipping; this is the lower probability bound referred to in the theorem.  Since
\[
\widehat\lambda_t(H_t)=
\frac{\pi_t^{g,\star}(H_t)\rho_t^{g,\star}(H_t)}
{\widehat\pi_t^g(H_t)\widehat\rho_t^g(H_t)},
\]
the map $(\pi,\rho)\mapsto \pi\rho/(\widehat\pi\widehat\rho)$ is Lipschitz on $[\varepsilon,1]^4$.  Hence there is a constant $C=C(\varepsilon)$ such that
\[
\|1-\widehat\lambda_t\|_2
\le C\|\widehat\Gamma_t^g-\Gamma_t^{g,\star}\|_2.
\]
To make the iteration explicit, let $P_g$ denote the law of the observed histories generated along the regime path using the observed-history transition law.  Positivity and fixed $T$ imply that the relevant $L_2(P_g)$ and $L_2(P_0)$ norms are equivalent up to constants depending only on $\varepsilon$ and $T$.  Define
\[
a_t=E_g\{|\Delta_t(H_t)|\},\qquad a_{T+1}=0,
\]
where $E_g$ is expectation under $P_g$.  Applying \eqref{eq:app-delta-recursion}, Cauchy-Schwarz, the lower probability bound, and norm equivalence gives
\begin{align*}
a_t
&\le E_g\{|1-\widehat\lambda_t(H_t)|\,|\widehat Q_t^g(H_t)-Q_t^{g,\star}(H_t)|\}\\
&\quad+E_g\{|\widehat\lambda_t(H_t)|\,E_g(|\Delta_{t+1}(H_{t+1})|\mid H_t)\}\\
&\le C\|\widehat\Gamma_t^g-\Gamma_t^{g,\star}\|_{2,P_0}
       \|\widehat Q_t^g-Q_t^{g,\star}\|_{2,P_0}
   +C a_{t+1}.
\end{align*}
Iterating this inequality from $T$ to $0$ yields, for fixed $T$,
\begin{equation}
|E\{\Delta_0(H_0)\}|
\le C\sum_{t=0}^T
\|\widehat Q_t^g-Q_t^{g,\star}\|_2
\|\widehat\Gamma_t^g-\Gamma_t^{g,\star}\|_2.
\label{eq:app-product-bound-total}
\end{equation}
This is the longitudinal Neyman-orthogonal product remainder.  It says that a first-order error in $Q_t$ alone or in $\Gamma_t$ alone does not produce first-order target bias; the bias is driven by their products.

The learned representation separates each total nuisance error into stochastic estimation and approximation components.  Let $Q_{t,\widehat\phi}^{\dagger}$ and $\Gamma_{t,\widehat\phi}^{\dagger}$ be the best representation-indexed approximations defined in main-text Assumption~6.  Then
\begin{align*}
\|\widehat Q_t^g-Q_t^{g,\star}\|_2
&\le \|\widehat Q_t^g-Q_{t,\widehat\phi}^{\dagger}\|_2
+\|Q_{t,\widehat\phi}^{\dagger}-Q_t^{g,\star}\|_2
\le r_{Q,t}+b_{Q,t}(\widehat\phi),\\
\|\widehat\Gamma_t^g-\Gamma_t^{g,\star}\|_2
&\le \|\widehat\Gamma_t^g-\Gamma_{t,\widehat\phi}^{\dagger}\|_2
+\|\Gamma_{t,\widehat\phi}^{\dagger}-\Gamma_t^{g,\star}\|_2
\le r_{\Gamma,t}+b_{\Gamma,t}(\widehat\phi).
\end{align*}
Substituting these inequalities into \eqref{eq:app-product-bound-total} gives the bound in main-text equation~(14).  This is the representation-error part of the theorem: representation loss is harmless only when it enters through this product-rate structure.

Finally, consider the empirical-process term in \eqref{eq:app-main-decomp}.  Conditional on the training folds, each evaluation fold is independent of the nuisance functions trained outside that fold.  For a fixed number of folds $K$, the variance of a fold-wise empirical process after multiplication by the fold size is bounded by
\[
P\left[\{D_g(O;\widehat\eta,\theta(g))-D_g(O;\eta^\star,\theta(g))\}^2\mid \widehat\eta\right].
\]
By main-text Assumption~7, equivalently by main-text Lemma~1 under the stated primitive conditions, this conditional variance converges to zero in probability.  Chebyshev's inequality therefore gives
\[
\sqrt n(\Pn-P)\{D_g(O;\widehat\eta,\theta(g))-D_g(O;\eta^\star,\theta(g))\}=o_P(1),
\]
after summing over the fixed number of folds; equivalently the unscaled empirical term is $o_P(n^{-1/2})$.  If clipping is used, the same argument applies to the clipped EIF map and leaves an additional deterministic or stochastic target-shift term $R_{\mathrm{clip},n}$.  Combining these pieces proves main-text equations~(13) and~(14).
\end{proof}

\subsection{Asymptotic linearity}\label{app:proof-an}

\begin{proof}
Main-text Theorem~3 gives
\[
\widehat\theta_g-\theta(g)=(\Pn-P)\phi_g^\star+R_n,
\]
where
\[
|R_n|\le C\sum_{t=0}^T\{r_{Q,t}+b_{Q,t}(\widehat\phi)\}
\{r_{\Gamma,t}+b_{\Gamma,t}(\widehat\phi)\}+o_P(n^{-1/2}).
\]
Main-text Assumption~8 makes the displayed sum $o_P(n^{-1/2})$, so $R_n=o_P(n^{-1/2})$.  Multiplying by $\sqrt n$ yields
\[
\sqrt n\{\widehat\theta_g-\theta(g)\}
=\frac{1}{\sqrt n}\sum_{i=1}^n\phi_g^\star(O_i)+o_P(1).
\]
By main-text Assumption~5, $E\{\phi_g^\star(O)\}=0$ and $E\{\phi_g^\star(O)^2\}=\sigma_g^2<\infty$.  The classical central limit theorem gives convergence in distribution to $N(0,\sigma_g^2)$.

For the variance estimator, define the out-of-fold estimated influence value
\[
\widehat\phi_i^g=\widehat Z_i^g-\widehat\theta_g,
\]
where $\widehat Z_i^g$ is the out-of-fold pseudo-outcome from main-text Algorithm~1; equivalently, for $i\in\mathcal I_k$, it is computed using $\widehat\eta^{(-k)}$ and may be written $\widehat\phi_i^{g,(-k)}$.  The same fold-conditional $L_2$ consistency argument used in the empirical-process part of main-text Theorem~3 gives
\[
\|\widehat\phi^g-\phi_g^\star\|_{L_2(P)}=o_P(1).
\]
Then
\begin{align*}
\left|\Pn\{(\widehat\phi^g)^2\}-P\{(\phi_g^\star)^2\}\right|
&\le \left|\Pn\{(\widehat\phi^g)^2-(\phi_g^\star)^2\}\right|
+\left|\Pn\{(\phi_g^\star)^2\}-P\{(\phi_g^\star)^2\}\right|.
\end{align*}
The second term is $o_P(1)$ by the law of large numbers.  For the first term, use
$|a^2-b^2|\le |a-b|^2+2|b||a-b|$ and Cauchy-Schwarz to obtain
\[
\Pn| (\widehat\phi^g)^2-(\phi_g^\star)^2 |
\le \Pn(\widehat\phi^g-\phi_g^\star)^2
+2\{\Pn(\phi_g^\star)^2\}^{1/2}
\{\Pn(\widehat\phi^g-\phi_g^\star)^2\}^{1/2}=o_P(1).
\]
Thus $\widehat\sigma_g^2=\Pn(\widehat\phi_i^g)^2$ is consistent for $\sigma_g^2$, and the Wald interval has asymptotic coverage $1-\alpha$.
\end{proof}

\subsection{Clipping and overlap}\label{app:proof-clip}

\begin{proof}
For a finite threshold $\tau$, define the clipped estimating function by replacing $\mathcal H_t^g$ in main-text equation~(6) with $\mathcal H_{t,\tau}^g=\min(\mathcal H_t^g,\tau)$.  This construction gives a bounded estimating equation, but it is not automatically the canonical gradient of a causal parameter.  The reason is that $\min(\mathcal H_t^g,\tau)$ is a history-dependent transformation of the likelihood ratio.  Unless a corresponding weighted intervention distribution is specified, the clipped equation is simply a regularized estimating equation.

When strict positivity holds and $\tau_n\to\infty$, the clipped and unclipped estimating functions are asymptotically equivalent.  Let
\[
D_{\tau_n}(O)-D(O)=\sum_{t=0}^T(\mathcal H_{t,\tau_n}^{g,\star}-\mathcal H_t^{g,\star})
\{Q_{t+1}^{g,\star}(H_{t+1})-Q_t^{g,\star}(H_t)\}.
\]
If $E|D_{\tau_n}(O)-D(O)|=o(n^{-1/2})$, or the corresponding $L_2$ condition holds for variance estimation, clipping contributes only $o_P(n^{-1/2})$ to the asymptotic expansion.  This condition is satisfied, for example, when the clever covariates have a sufficiently light tail and $\tau_n$ grows fast enough.  Thus clipping is justified as asymptotically negligible stabilization for $\theta(g)$ under strict positivity.

If practical positivity is intrinsically poor, the causal target can instead be defined with an explicit weighting function.  For the overlap target
\[
\theta_\omega(g)=\frac{N_g}{D},\qquad N_g=E\{\omega(H_0)Y^g\},\quad D=E\{\omega(H_0)\},
\]
with fixed nonnegative $\omega$, the numerator EIF is the usual longitudinal EIF multiplied by $\omega(H_0)$ plus the baseline contribution induced by perturbing the distribution of $H_0$.  Equivalently, write $\phi_{N_g}$ for the EIF of $N_g$ and $\phi_D=\omega(H_0)-D$.  The quotient rule gives
\[
\phi_{\omega,g}=D^{-1}\phi_{N_g}-N_gD^{-2}\phi_D.
\]
This is a genuine causal overlap estimand because $\omega$ is part of the estimand.  It is distinct from post hoc truncation of $\mathcal H_t^g$.  The theorem follows.
\end{proof}

\subsection{Robust aggregation}\label{app:proof-robust}

\begin{proof}
Condition first on the outer training folds.  The oracle pseudo-outcomes $Z_i^g=\widetilde Y_{i0}^g(\eta^\star)$ are i.i.d. over evaluation observations with mean $\theta(g)$ and variance bounded by $v$.  Let $\psi$ be the standard Catoni influence function satisfying
\[
-\log(1-x+x^2/2)\le \psi(x)\le \log(1+x+x^2/2).
\]
For a fixed candidate $m$, exponential Markov inequalities applied to $\sum_i\psi\{\alpha(Z_i^g-m)\}$ bound the two tail probabilities of the Catoni root.  Choosing $\alpha\asymp \sqrt{\log(1/\delta)/(nv)}$ yields
\[
P\left(|\widehat\theta_{C}^{\mathrm{or}}-\theta(g)|>C\sqrt{v\log(1/\delta)/n}\mid\text{training folds}\right)\le \delta
\]
for a universal constant $C$, up to the usual small constants in Catoni's theorem.  If $v$ is unknown, it is estimated on an auxiliary split or replaced by a conservative upper bound; the scale parameter is therefore part of the estimator specification.

Now compare feasible and oracle Catoni roots.  Let
\[
\Psi_n(m;z)=n^{-1}\sum_{i=1}^n\psi\{\alpha(z_i-m)\}.
\]
Because $\psi$ is Lipschitz, uniformly over $m$ in a neighborhood of $\theta(g)$,
\[
|\Psi_n(m;\widehat Z)-\Psi_n(m;Z)|
\le \alpha n^{-1}\sum_i|\widehat Z_i^g-Z_i^g|=O_P(\alpha\delta_n).
\]
The derivative of the oracle estimating function satisfies
\[
-\partial_m\Psi_n(m;Z)=\alpha n^{-1}\sum_i
\psi'\{\alpha(Z_i^g-m)\}.
\]
Under the local concentration supplied by the oracle Catoni argument, this derivative is bounded below by $c\alpha$ with probability tending to one in a neighborhood of $\theta(g)$.  A one-dimensional Z-estimation expansion therefore divides the $O_P(\alpha\delta_n)$ perturbation by an order-$\alpha$ derivative and gives
\[
|\widehat\theta_C-\widehat\theta_C^{\mathrm{or}}|=O_P(\delta_n).
\]
Combining this transfer bound with the oracle Catoni inequality gives
\[
|\widehat\theta_C-\theta(g)|
=O_P\{\sqrt{v\log(1/\delta)/n}\}+O_P(\delta_n).
\]
The quantity $\delta_n=n^{-1}\sum_i|\widehat Z_i^g-Z_i^g|$ is an additional feasible-pseudo-outcome stability condition for robust aggregation.  It is not a direct conclusion of the scalar mean-estimator expansion in main-text Theorem~3.  The condition $\delta_n=o_P(1)$ yields consistency of the robust aggregate; retaining the oracle Catoni $n^{-1/2}$ concentration order would require $\delta_n=o_P(n^{-1/2})$ or a comparably sharp transfer bound.  Under bounded or clipped clever covariates, stable backward recursion, and $L_1$ or stronger $L_2$ convergence of the nuisance functions, $\delta_n=o_P(1)$ can be verified by applying the same cross-fitted pseudo-outcome perturbation bounds used in the product-remainder proof.  Fold dependence is harmless because all conditioning is on $\mathcal T_k$ and the final evaluation observations are independent within each fold; summing over a fixed number of folds preserves the order.

For median-of-means, form blocks independently of the evaluation outcomes, for example by a random partition within the outer held-out folds.  Oracle block means have the usual median concentration: under finite variance, Chebyshev's inequality makes a majority of block means accurate with high probability; under a finite $(1+\kappa)$ moment the block means converge at the corresponding polynomial rate.  If the average feasible-oracle discrepancy is $\delta_n$, all but a vanishing fraction of block means are shifted by at most $O_P(\delta_n)$, so the median is shifted by the same order.  This proves the robust aggregation statement.
\end{proof}

\subsection{Basis-encoder rates}\label{app:proof-sieve}

\begin{proof}
Let
\[
X_t(u)=\sum_{k=1}^{\infty}\alpha_{tk}\psi_k(u)
\]
be the Karhunen-Loeve expansion, with eigenvalues $\lambda_k\lesssim k^{-2a}$.  A $K_n$-term basis encoder retains the first $K_n$ scores and estimates them from the sparse noisy point cloud.  The representation error has two components.

The first component is truncation.  Under the smoothness assumption in the corollary, the true nuisance functions can be approximated by functions of the first $K_n$ scores with error $O(K_n^{-s})$ in $L_2(P)$.  This covers, for example, Holder or Sobolev smooth nuisance maps whose dependence on high-frequency scores decays with the eigenvalues.  The second component is score-estimation error.  If the estimated score vector has mean integrated squared error $\epsilon_{n,K}^2$, Lipschitz or Holder continuity of the nuisance maps in the score vector gives an additional approximation term $O(\epsilon_{n,K})$.

The quantity $\epsilon_{n,K}$ is intentionally defined at the level of the observed-history encoder rather than only through an idealized dense-curve design.  For noninformative sparse sampling, it may be verified by standard sparse-FPCA or spline-smoothing arguments when the expected number of measurements per visit is bounded away from zero and the measurement noise has bounded variance.  When the sampling process is informative, as in Scenario F of the simulation, the measurement count, locations, and marker identities are part of the observed history.  The score-estimation error should then be read as the conditional error of the chosen encoder given this observed sampling design.  In that case the theorem does not require the sampling design to be ignorable; it requires the encoder class to approximate the nuisance functions of the observed point cloud and its sampling pattern.

Consequently, for both the outcome regression and mechanism nuisance classes,
\[
b_{Q,t}(\phi)\lesssim K_n^{-s}+\epsilon_{n,K},\qquad
b_{\Gamma,t}(\phi)\lesssim K_n^{-s}+\epsilon_{n,K}.
\]
Combining these bounds with the stochastic nuisance errors $r_{Q,t}$ and $r_{\Gamma,t}$ gives
\[
\sum_t\{r_{Q,t}+b_{Q,t}(\phi)\}\{r_{\Gamma,t}+b_{\Gamma,t}(\phi)\}
\lesssim
\sum_t\{r_{Q,t}+K_n^{-s}+\epsilon_{n,K}\}
\{r_{\Gamma,t}+K_n^{-s}+\epsilon_{n,K}\}.
\]
The condition displayed in the corollary makes the right-hand side $o_P(n^{-1/2})$, which is exactly main-text Assumption~8.  The result follows from main-text Theorem~4.  Deep Set Transformer and neural-ODE encoders are not required for this verification result; the corollary supplies a concrete sieve route showing that the high-level representation condition is attainable.
\end{proof}

\section{Stochastic-Regime Extension}\label{app:stochastic-regimes}

The main article writes the theorem sequence for deterministic static or dynamic regimes.  This section records the parallel notation for stochastic interventions.  Let $q=\{q_t(\cdot\mid H_t)\}_{t=0}^T$ be a user-specified intervention density or probability mass function that is absolutely continuous with respect to the observed treatment law on the support of $q$.  Define
\[
\omega_t^q(O)=
\frac{q_t(A_t\mid H_t)R_t}
{\pi_t(A_t\mid H_t)\rho_t(H_t,A_t)},\qquad
\mathcal H_t^q(O)=\prod_{s=0}^t\omega_s^q(O),
\]
where $\pi_t(a\mid h)=P(A_t=a\mid H_t=h,\bar R_{t-1}=1)$ and $\rho_t(h,a)=P(R_t=1\mid H_t=h,A_t=a,\bar R_{t-1}=1)$.  The stochastic-regime recursion replaces conditioning on $A_t=g_t(H_t)$ by integration over the intervention:
\[
Q_t^q(h)=\int E\{Q_{t+1}^q(H_{t+1})\mid H_t=h,A_t=a,R_t=1,\bar R_{t-1}=1\}\,q_t(da\mid h),
\]
with $Q_{T+1}^q=Y$ and $\theta(q)=E\{Q_0^q(H_0)\}$.  Under the same consistency, sequential exchangeability, observed-history, and positivity conditions, with bounded density ratios $q_t/\pi_t$, the observed-data EIF is
\[
\phi_q(O)=
\sum_{t=0}^T \mathcal H_t^q(O)
\{Q_{t+1}^q(H_{t+1})-m_t^q(H_t,A_t)\}
+Q_0^q(H_0)-\theta(q),
\]
where
\[
m_t^q(H_t,A_t)=E\{Q_{t+1}^q(H_{t+1})\mid H_t,A_t,R_t=1,\bar R_{t-1}=1\}.
\]
Equivalently, one may write the recursive pseudo-outcome with the increment $\omega_t^q$ and the conditional regression $m_t^q$.  The same cross-fitting, representation approximation, and product-remainder arguments apply after replacing the deterministic treatment-indicator mechanism by the stochastic density ratio.  We keep the main text deterministic because the simulations and VitalDB illustration use deterministic dynamic policies and a point-treatment analogue, respectively.

\section{Simulation Design Details}\label{app:simulation-details}

The simulation code implements a longitudinal functional-history design with $T+1=6$ decision times.  Each subject has baseline covariates $B=(B_1,B_2,B_3)$, latent functional scores $\alpha_{itk}$, scalar time-varying covariates $L_{it}$, irregular point-cloud summaries, binary treatments, and a continuous terminal outcome.  The latent scores follow an autoregressive evolution with coefficient 0.55 and geometrically decaying marginal scale.  Scenario G increases the number of latent scores to 12 so that confounding-relevant low-variance components are harder to capture by simple summaries.

At each visit, a nonlinear latent functional feature combines the first scores and, in Scenarios B and G, interactions and nonlinear score terms.  The scalar severity, lactate-like marker, and additional physiological markers depend on the latent feature, baseline covariates, lagged treatment, and noise.  Irregular measurement counts are generated from a Poisson model whose mean depends on current severity; in Scenario F, the measurement intensity is made more strongly confounding-relevant.  The observed point-cloud representation records noisy mean, range, standard deviation, count, relative count, and severity-weighted measurement intensity.  The simulation therefore targets an observed-history estimand rather than an unrestricted latent full-function estimand.

Observed treatment assignment is binary.  Its logit depends on current severity, the lactate-like marker, lagged treatment, baseline covariates, observed functional summaries, and, in Scenarios B, F, and G, additional nonlinear or sampling-intensity terms.  Scenario D strengthens the treatment logit to create near-positivity.  No censoring process is simulated in the main Monte Carlo study; censoring is included in the estimator and theory but is not stressed empirically in these tables.  This choice keeps the simulation focused on functional representation, treatment overlap, and heavy-tailed pseudo-outcomes.

The two target policies are an early policy, which treats when current severity exceeds a threshold or the lactate-like marker is high, and a delayed policy, which treats after persistent severity elevation or very high lactate.  The reported estimand is the contrast $\theta(g_{\rm early})-\theta(g_{\rm delayed})$.  Population truths are computed from independent Monte Carlo intervention samples of size 60,000.  The main experiment uses $n=400$, and the sample-size experiment uses $n\in\{200,400,800\}$; each simulation cell uses 150 replications.  For a nominal 95\% coverage probability, the Monte Carlo standard error is approximately $\sqrt{0.95(0.05)/150}=0.018$; bias MCSEs are recorded in the generated CSV summaries.

The reported sequence-style baselines use method-specific nonlinear feature maps and ridge or tree-strength regularization so that the full experiment is reproducible on CPU.  They are intended as objective-level baselines for counterfactual prediction, not as exact reproductions of GPU-trained architectures.  The latent-score SDR diagnostic receives the true latent scores but still estimates finite-sample outcome and treatment nuisance functions and therefore should not be interpreted as an oracle upper bound.

\section{Additional VitalDB Diagnostics}\label{app:vitaldb-diagnostics}

The main article reports the VitalDB cohort, point-cloud examples, representation ablation, overlap/calibration figure, and trimming sensitivity.  This section gives the remaining diagnostics used to audit the point-treatment illustration.

\begin{figure}[H]
\centering
\includegraphics[width=0.85\textwidth]{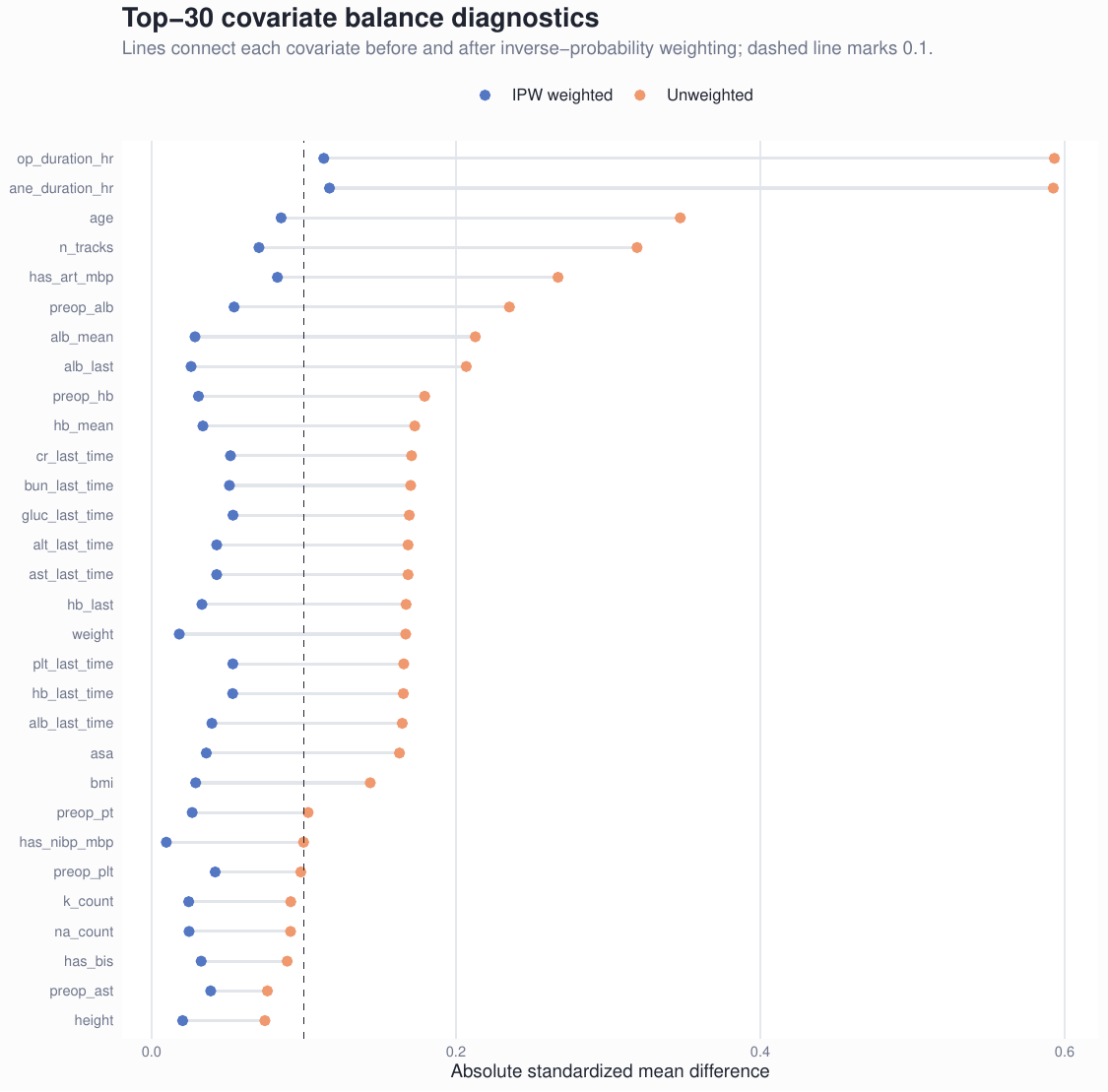}
\caption{Top-30 covariate balance diagnostics for the VitalDB analysis.  The dashed line marks 0.1.}
\label{fig:app-vitaldb-love}
\end{figure}

\begin{table}[H]
\centering
\caption{Top 30 standardized mean differences before and after weighting for the functional DR-FRL analysis.}
\label{tab:vitaldb-top30-balance}
\scalebox{0.9}{
\begin{tabular}{p{0.45\textwidth}rr}
\toprule
Covariate & SMD unweighted & SMD weighted \\
\midrule
op\_duration\_hr & 0.593 & 0.113 \\
ane\_duration\_hr & 0.592 & 0.117 \\
age & 0.347 & 0.085 \\
n\_tracks & 0.319 & 0.071 \\
has\_art\_mbp & 0.267 & 0.083 \\
preop\_alb & -0.235 & -0.054 \\
alb\_mean & -0.213 & -0.029 \\
alb\_last & -0.207 & -0.026 \\
preop\_hb & -0.179 & -0.031 \\
hb\_mean & -0.173 & -0.034 \\
cr\_last\_time & -0.171 & -0.052 \\
bun\_last\_time & -0.170 & -0.051 \\
gluc\_last\_time & -0.169 & -0.053 \\
alt\_last\_time & -0.168 & -0.043 \\
ast\_last\_time & -0.168 & -0.043 \\
hb\_last & -0.167 & -0.033 \\
weight & -0.167 & -0.018 \\
plt\_last\_time & -0.166 & -0.053 \\
hb\_last\_time & -0.165 & -0.053 \\
alb\_last\_time & -0.165 & -0.040 \\
asa & 0.163 & 0.036 \\
bmi & -0.144 & -0.029 \\
preop\_pt & -0.103 & -0.027 \\
has\_nibp\_mbp & -0.100 & -0.010 \\
preop\_plt & -0.098 & -0.042 \\
k\_count & 0.091 & 0.024 \\
na\_count & 0.091 & 0.025 \\
has\_bis & 0.089 & 0.033 \\
preop\_ast & 0.076 & 0.039 \\
height & -0.074 & 0.020 \\
\bottomrule
\end{tabular}}
\end{table}

\begin{table}[H]
\centering
\caption{Preoperative laboratory point-cloud missingness and measurement intensity.}
\label{tab:vitaldb-missingness}
\begin{tabular}{lrrr}
\toprule
Variable & Observed cases & Observed \% & Median points among observed \\
\midrule
hb & 3909 & 61.2 & 1 \\
plt & 3898 & 61.0 & 1 \\
cr & 3746 & 58.6 & 1 \\
bun & 3745 & 58.6 & 1 \\
alb & 3811 & 59.7 & 1 \\
gluc & 3286 & 51.4 & 1 \\
na & 4543 & 71.1 & 1 \\
k & 4544 & 71.1 & 1 \\
ast & 3812 & 59.7 & 1 \\
alt & 3812 & 59.7 & 1 \\
\bottomrule
\end{tabular}
\end{table}

Table~\ref{tab:vitaldb-missingness} summarizes missingness and measurement intensity for the functional laboratory point clouds.  The analysis treats marker-specific missingness and measurement counts as part of the observed history by including count and kernel-mass features.

\begin{table}[H]
\centering
\caption{Exposure and outcome sensitivity analyses using the same functional representation.}
\label{tab:vitaldb-sensitivity}
\resizebox{\linewidth}{!}{
\begin{tabular}{p{0.18\textwidth}p{0.19\textwidth}rrrrr}
\toprule
Exposure & Outcome & Prevalence A & Prevalence Y & RD & SE & 95\% CI \\
\midrule
Any vasopressor & ICU admission & 0.543 & 0.188 & 0.010 & 0.008 & [-0.004, 0.025] \\
Any vasopressor & ICU stay $>$ 1 day & 0.543 & 0.061 & 0.003 & 0.008 & [-0.013, 0.020] \\
Any vasopressor & Composite adverse & 0.543 & 0.066 & 0.001 & 0.013 & [-0.025, 0.027] \\
Phenylephrine & ICU admission & 0.132 & 0.188 & 0.058 & 0.021 & [0.017, 0.099] \\
Phenylephrine & ICU stay $>$ 1 day & 0.132 & 0.061 & -0.007 & 0.016 & [-0.038, 0.024] \\
Phenylephrine & Composite adverse & 0.132 & 0.066 & 0.011 & 0.018 & [-0.025, 0.047] \\
Ephedrine & ICU admission & 0.503 & 0.188 & 0.007 & 0.019 & [-0.030, 0.044] \\
Ephedrine & ICU stay $>$ 1 day & 0.503 & 0.061 & -0.039 & 0.017 & [-0.072, -0.006] \\
Ephedrine & Composite adverse & 0.503 & 0.066 & -0.018 & 0.012 & [-0.041, 0.006] \\
High-dose vasopressor & ICU admission & 0.169 & 0.188 & 0.061 & 0.020 & [0.022, 0.101] \\
High-dose vasopressor & ICU stay $>$ 1 day & 0.169 & 0.061 & 0.028 & 0.019 & [-0.010, 0.065] \\
High-dose vasopressor & Composite adverse & 0.169 & 0.066 & 0.035 & 0.017 & [0.002, 0.069] \\
\bottomrule
\end{tabular}}
\end{table}

\begin{figure}[H]
\centering
\includegraphics[width=0.92\textwidth]{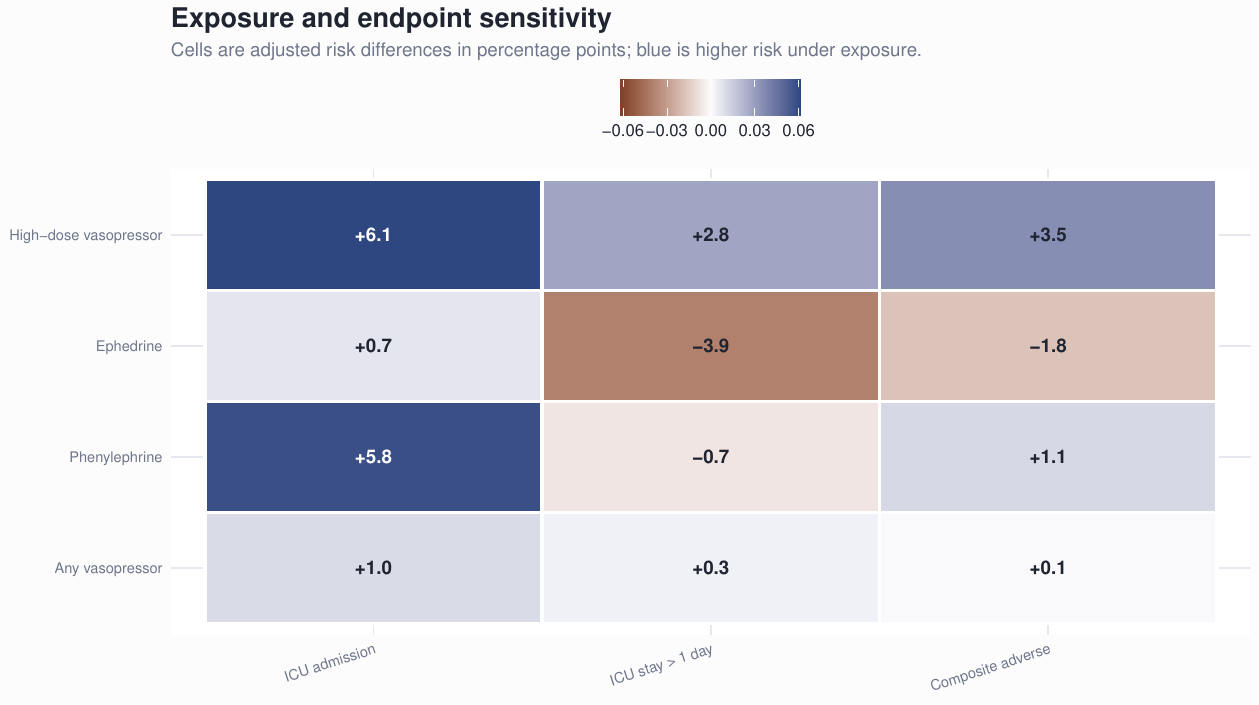}
\caption{Exposure and outcome sensitivity analysis.  Cells are adjusted risk differences from functional-representation AIPW fits or lightweight sensitivity fits using the same point-cloud representation.}
\label{fig:app-vitaldb-sensitivity-heatmap}
\end{figure}

Phenylephrine and high-dose vasopressor exposure show larger adjusted ICU-admission risk differences than the pooled any-vasopressor definition.  Ephedrine is less stable across endpoints, and the composite postoperative endpoint is sparse.  These results support treating the VitalDB analysis as a feasibility and auditing example rather than as definitive clinical evidence.

\begin{figure}[H]
\centering
\includegraphics[width=0.88\textwidth]{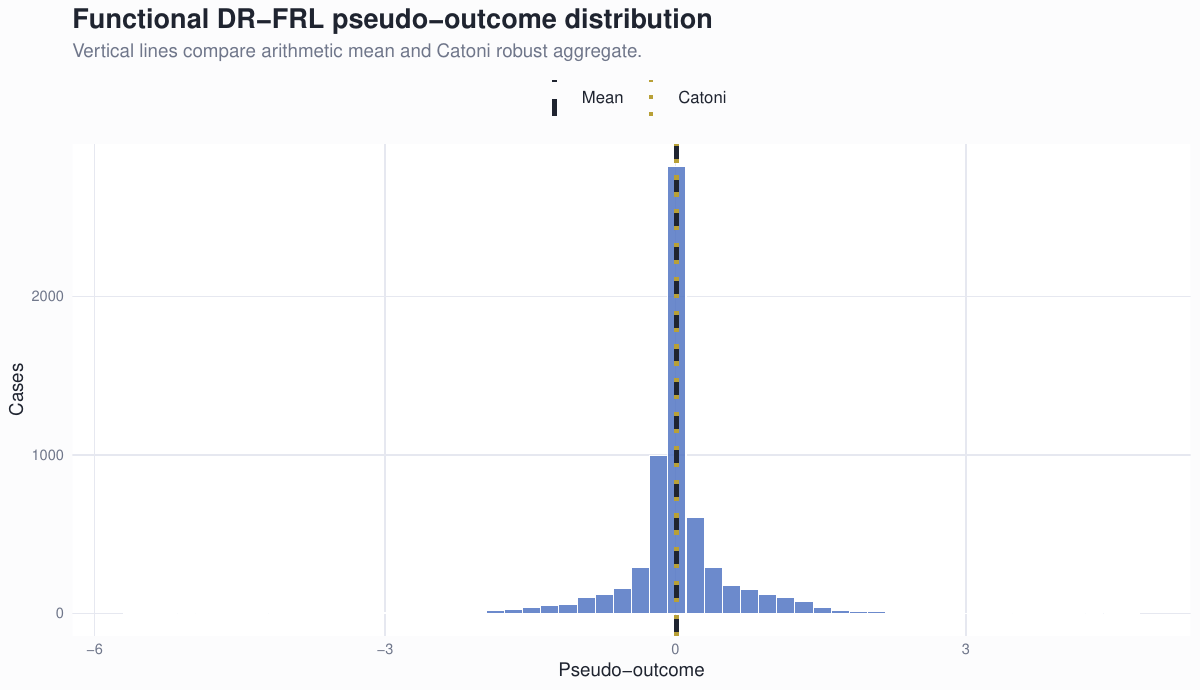}
\caption{Pseudo-outcome distribution for the functional DR-FRL estimator.  The vertical dashed and dotted lines mark the arithmetic mean and Catoni aggregate, respectively.}
\label{fig:app-vitaldb-pseudo-tail}
\end{figure}

The AIPW pseudo-outcome has moderate tails even for a binary endpoint because inverse-probability correction amplifies residual variation.  The Catoni aggregate changes the primary estimate only slightly, consistent with the main text's conclusion that robustification is most important when pseudo-outcome tails or overlap diagnostics indicate instability.

\end{document}